\documentclass[10pt,reqno]{amsart}
\usepackage{graphicx}
\usepackage{indentfirst,csquotes}
\usepackage{thmtools} 
\usepackage{thm-restate}
\usepackage{natbib}
\usepackage[utf8]{inputenc} 
\usepackage[T1]{fontenc}    
\usepackage{xurl}           
\usepackage{booktabs}       
\usepackage{microtype}      

\usepackage[svgnames]{xcolor}
\definecolor{DarkGreen}{rgb}{0.1,0.5,0.1}
\definecolor{DarkRed}{rgb}{0.5,0.1,0.1}
\definecolor{DarkBlue}{rgb}{0.1,0.1,0.5}

\usepackage{hyperref}
\hypersetup{
    colorlinks=true,       
    linkcolor=DarkBlue,          
    citecolor=DarkGreen,        
    filecolor=DarkRed,    
    urlcolor=DarkBlue,          
}

\usepackage{xspace}
\usepackage{mleftright}
\usepackage{cancel}
\usepackage{bm, bbm}

\usepackage{enumitem}
\setlist{leftmargin=.7cm}

\usepackage{todonotes}

\usepackage{amsthm, amsfonts, amsmath, amssymb, amsthm, dsfont, stmaryrd, mathtools, nicefrac, thm-restate}

\DeclareMathOperator*{\argmax}{argmax}
\DeclareMathOperator*{\argmin}{argmin}

\newcommand{\br}[1]{\mleft(#1\mright)}
\newcommand{\brs}[1]{\mleft[#1\mright]}
\newcommand{\brc}[1]{\mleft\{#1\mright\}}

\let\hat\widehat
\let\tilde\widetilde
\let\bar\overline

\newcommand{\cA}{\mathcal{A}}

\newcommand{\cE}{\mathcal{E}}

\newcommand{\cG}{\mathcal{G}}
\newcommand{\cH}{\mathcal{H}}

\newcommand{\cO}{\mathcal{O}}

\newcommand{\cV}{\mathcal{V}}

\newcommand{\rob}{\textnormal{rob}}
\newcommand{\opt}{\textnormal{opt}}
\newcommand{\base}{\textnormal{base}}
\newcommand{\rad}{\textnormal{rad}}
\newcommand{\suf}{\textnormal{suf}}
\newcommand{\dir}{\textnormal{dir}}
\newcommand{\Conv}{\textnormal{Conv}}

\newcommand{\el}{\textnormal{el}}
\newcommand{\adm}{\textnormal{adm}}
\newcommand{\val}{\textnormal{val}}

\newcommand{\KL}{\textnormal{KL}}

\newcommand{\R} {\ensuremath{\mathbb{R}}} 

\renewcommand{\P}[1]{\mathbb{P}\mleft(#1\mright)}

\newcommand{\ind}[1]{\mathds{1}{\brc{#1}}}

\newcommand{\initOneLiners}{%
 	\setlength{\itemsep}{0pt}
	\setlength{\parsep }{0pt}
  	\setlength{\topsep }{0pt}     	
}

\newcommand{\lE}{\mathbb{E}}

\usepackage{graphicx}
\usepackage{subcaption}

\usepackage{tikz}
\usepackage{pgfplots}
\usetikzlibrary{calc}
\pgfplotsset{compat=1.18}

\usepackage[ruled]{algorithm2e}

\usepackage[nameinlink,noabbrev]{cleveref}

\newtheorem{theorem}{Theorem}

\newtheorem{lemma}{Lemma}
\newtheorem{corollary}[theorem]{Corollary}
\newtheorem{assumption}{Assumption}

\usepackage{crossreftools}
\usepackage{amssymb,amsthm,amsmath}
\usepackage{xcolor, hyperref,titlesec,etoolbox}

\titleformat{\section}{\normalfont\large\bfseries}{\thesection.}{1em}{}
\titlespacing*{\section}{0pt}{2ex}{1ex}
\titleformat{\subsection}{\normalfont\normalsize\bfseries}{\thesubsection.}{0.5em}{}
\titlespacing*{\subsection}{0pt}{1.5ex}{0.5ex}
\titleformat{\subsubsection}{\normalfont\normalsize\itshape}{\thesubsubsection.}{0.5em}{}
\titlespacing*{\subsubsection}{0pt}{1ex}{0.5ex}

\newenvironment{sketchproof}{\noindent\textit{Sketch of Proof.} }{\par\medskip}

\hypersetup{ colorlinks=true, linkcolor=black, filecolor=black, urlcolor=black }
\def\proof{\noindent {\it Proof. $\, $}}
\def\endproof{\hfill $\Box$ \vskip 5 pt }

\usepackage{lipsum}

\begin{document}
\title[]{Do Not Trust the Auctioneer: Learning to Bid in Feedback-Manipulated Auctions} 
\author[]{Luigi Foscari$^\ast$, Matilde Tullii$^\dagger$, Vianney Perchet$^\ddagger$}


\let\thefootnote\relax

\newcommand{\thefootnote}{\fnsymbol{footnote}}

\setcounter{footnote}{1}
\footnotetext{LAILA, Università degli Studi di Milano, \href{mailto: luigi.foscari@unimi.it}{ luigi.foscari@unimi.it}}

\setcounter{footnote}{2}
\footnotetext{Fairplay Team, Crest-Ensae, IP Paris, \href{mailto:matilde.tullii@ensae.fr}{matilde.tullii@ensae.fr}}

\setcounter{footnote}{3}
\footnotetext{Fairplay Team, Crest-Ensae, IP Paris, CRITEO AI Team, Paris \href{mailto:vianney.perchet@ensae.fr}{vianney.perchet@ensae.fr}}

\begin{abstract}
Shilling is the use of artificial bids to make competition appear stronger and push prices upward. We study repeated first-price auctions in which shilling affects feedback but not allocation: the learner wins or loses against the real competing bid, but after a loss observes the maximum of the real bid and an independent shill bid. Thus the manipulation changes what the learner observes and hence how it learns to bid, without changing the outcome of the current auction.
We analyze regret with respect to the best bid benchmark, assuming that the shill-bid distribution is known. Even then, shilling can mask the real bid, while useful side information appears only through intermittent low-shill events. Our algorithm combines a robust interval-elimination branch, which ignores the shilled report and achieves the dynamic-pricing rate $\widetilde{\mathcal O}(T^{2/3})$, with an optimistic branch that debiases losing-side reports and exploits the resulting suffix information when it is reliable and achieves the first-price auctions rate $\widetilde{\mathcal O}(\sqrt{T})$. A validation and racing procedure lets the algorithm use these optimistic updates without knowing the right scale or feedback geometry in advance.
We complement the upper bounds with a matching lower bound, up to logarithmic factors, in the single-active-region case.
Overall, the results show that even feedback-only shilling can sharply alter the statistical difficulty of repeated bidding. 
\end{abstract} 
\maketitle

\section{Introduction}
\label{sec:introduction}

First-price auctions are a standard mechanism for allocating scarce goods, from classical markets to modern online platforms. In large digital markets, bidders repeatedly interact with the same platform, observe outcomes and adapt their bidding rules from the feedback they receive. This makes the feedback channel part of the learning problem: a bidder is not only learning the distribution of competition, but learning it through reports generated by the auction platform.
This creates room for feedback manipulation where a seller or platform need not change the allocation rule to affect future bids; instead it can change what bidders learn. We focus on shilling, where artificial bids or artificial bid reports make competition appear stronger than it is. Such concerns are concrete in online auction markets: shilling is prohibited and monitored on major platforms,\footnote{\url{https://www.ebay.com/help/selling-policies/selling-practices-policy/shill-bidding-policy/policies?id=4353}} while first-price auctions now operate at large scale in markets such as programmatic advertising.\footnote{\url{https://blog.google/products/admanager/update-first-price-auctions-google-ad-manager}} The key point is that manipulation can act through the information channel, rather than through the current allocation.

We study repeated first-price auctions with \emph{max-shilling feedback}. The learner wins or loses against the highest real competing bid. After a loss, however, it observes the maximum of the real competing bid and an independent shill bid. The current winner is unchanged, but the feedback used for future learning is distorted.
This feedback lies between dynamic pricing and ordinary first-price auction feedback. A losing report can sometimes reveal information about bids larger than the one played, but this side information is intermittent and one-sided. High shill bids mask the real bid, while low shill bids reveal suffix information. Even if the shill-bid distribution is known, the learner still does not know where the buyer distribution makes bids profitable.

Our results quantify the price of this manipulation. A robust interval-elimination branch ignores the shilled report and recovers the dynamic-pricing rate \(\widetilde{\cO}(T^{2/3})\). An optimistic branch uses the known shill distribution to debias losing-side reports and exploit reliable suffix information. The resulting rates interpolate between local pricing feedback and richer first-price feedback, and our single-active-region lower bound shows that this interpolation is tight up to logarithmic factors.


\subsection{Contributions}
\label{sec:contributions}

We study contextual first-price bidding with max-shilling feedback, where the observed losing bid may come from a shill. The feedback is richer than bandit feedback, but far from full information. We consider the case in which \(F_S\) is known to the learner. A central contribution is to identify how to use this extra information without assuming more structure than the algorithm needs.

\begin{enumerate}
    \item We give a robust interval-elimination procedure with regret $\widetilde \cO(T^{2/3})$. This guarantee holds even when the shilling feedback is not useful and serves as the safety net for the full algorithm.

    \item We develop an optimistic procedure that extracts additional information from max-shilling feedback. The key technical point is that losing-side observations reveal structured information over active bid intervals, but only through a noisy path-like signal. Turning this into a usable confidence bound requires a geometric uncertainty analysis, which yields faster rates when the active regions are regular and well covered.
    
    \item We make the algorithm adaptive to the unknown shill scale through a dyadic validation scheme. The learner does not need to know the effective shill level in advance. The algorithm validates admissible optimistic candidates at each epoch and keeps the robust branch as backup.

    \item We prove a matching lower bound for the one-active-region case, showing that the rate $\min\{T^{2/3},\sqrt T\,\gamma^{-1/4}\}$ is unavoidable up to logarithmic factors.
\end{enumerate}

\subsection{Related work}
\label{sec:related-works}

Repeated first-price auctions have been studied as online learning problems with censored feedback \citep{han2020optimal,achddouFastRateLearning2021,zhang_leveraging_2022}. 
In the standard stochastic setting, this feedback can be exploited to obtain optimal \(\cO(\sqrt T)\) regret. 
Our model keeps the same allocation and payment rule, but changes the losing-side feedback through shilling, thus the learner must handle both censoring and a filtered information channel.

Our robust benchmark is dynamic pricing, where posted prices reveal only local demand information \citep{besbesDynamicPricingKnowing2009,cesa-bianchiDynamicPricingFinitely2019}. 
This local-feedback structure gives the classical \(T^{2/3}\)-type rate. 
When shilling masks losing-side reports, our model reduces to the same regime. 
When low-shill events are frequent, however, the learner obtains side information about larger bids and can improve beyond \(\cO(T^{2/3})\).

The useful part of our feedback is related to contextual bandits with cross-learning and to feedback-graph models \citep{balseiroContextualBanditsCrosslearning2021,alonBanditsExpertsTale2013,alon_online_2015,erezBestofallworldsOnlineLearning2021}. 
In contextual bandits, complete cross-learning can improve the rate from \(\widetilde{\cO}(\sqrt{CKT})\) to \(\widetilde{\cO}(\sqrt{KT})\). 
In our setting, cross-learning is endogenous: it is generated by the auction outcome, the shill bid, and the unknown buyer distribution. 
This makes the side information one-sided and intermittent.

Our model also relates to bandits with corrupted feedback \citep{lykouris_stochastic_2018,gupta_better_2019,yang_adversarial_2020,lu_stochastic_2021}. 
There, regret usually degrades with an external corruption budget. 
Here the corruption is mechanism-driven: rewards and allocations remain truthful, while only the side information is filtered. 
The relevant parameter is therefore the probability \(\gamma\) of informative low-shill events, rather than a generic corruption level.

Closest to our motivation are works on shilling and credible auction design. 
\citet{akbarpour_credible_2020} study credibility constraints in auctions, while \citet{komo_shill-proof_2024} characterize auction formats that remove the seller's incentive to shill. 
Related work on non-credible auctions studies learning when the seller manipulates payments \citep{wang_learning_2023}. 
Our focus is complementary: we fix the first-price mechanism and study how feedback-only shilling changes the learning rate.

\subsection{Technical challenges}
\label{sec:challenges}

The technical core of the paper is to separate the information that is always reliable from the information that is potentially useful but filtered by shilling. The reported losing bid can reveal more than a single win/loss bit (which is the case for dynamic pricing): when the shill is low enough, one action can teach the learner about a whole suffix of larger bids. This is still far from full information, because the signal is noisy, filtered through the shill process, and useful only when the shill does not mask the buyer. The first challenge is to quantify this intermediate information.

The second challenge is geometric. A losing-side observation does not give equally accurate estimates for all surviving bids. Instead, it behaves like noisy difference information along a path. This is why pointwise UCB recovers only the robust $T^{2/3}$ rate, while faster rates require confidence bounds that combine local observations with suffix information.

The third challenge is adaptivity. The right discretization scale is not known in advance: a grid that is too coarse loses value, while a grid that is too fine wastes samples. The algorithm must validate optimistic updates from data while keeping a robust fallback.

Finally, the lower bound is subtle because the learner gets more than ordinary pricing feedback. A hard instance cannot just hide the best bid and argue that the learner must sample it directly. It must also account for information leaked through losing-side observations, which yields the interpolation between $T^{2/3}$ and $\sqrt T$-type behavior.

\paragraph{Notation}

For any distribution \(D\), let \(F_D\) denote its cumulative distribution function, \(f_D\) its density when it exists, and \(r_D=f_D/F_D\) its reverse hazard rate. For any function \(g\), let \(g'\) denote its derivative and \(g^{-1}\) its inverse when it exists.

\section{Model Setting}
\label{sec:model}




We consider a stochastic repeated first-price auction over \(T\) rounds. At round \(t\), the learner observes a value \(v_t\in[0,1]\), chooses a bid \(p_t\in[0,1]\) and competes against the highest real competing bid \(b_t\). The learner wins if \(p_t\ge b_t\), receives reward \(v_t-p_t\), and otherwise receives reward \(0\) and observes the shilled report \(o_t=\max\{b_t,s_t\}\), where \(s_t\) is a shill bid. The protocol is described in \cref{fpa-shilling}.

\begin{algorithm}[t]
    \DontPrintSemicolon
    \For{time $t = 1, \dots, T$}{
        A value $v_t \in [0, 1]$ is revealed to the agent\;
        The other bidders' private maximal bid is $b_t \in [0, 1]$\;
        The auctioneer's private \emph{shill} bid is $s_t \in [0, 1]$\;
        The agent bids a price $p_t \in [0, 1]$\;
        \lIf{the auction is won $(p_t \ge b_t)$}{
            The agent receives reward $v_t - p_t$
        }\lElse{
            The agent observes the \emph{shilled} maximal bid $o_t = \max\{b_t, s_t\}$
        }
    }
    \caption{First-price auctions with max-shilling}
    \label{fpa-shilling}
\end{algorithm}

We assume that \(v_t\sim V\), \(b_t\sim B\), and \(s_t\sim S\) are drawn independently across rounds, with all three distributions supported on \([0,1]\), and that \(s_t\) is independent of \(b_t\). We call \(O=\max\{B,S\}\) the reported losing-side distribution induced by shilling.
For any value \(v\), the expected utility of bid \(p\) is
\begin{equation}
    \label{eq:utility}
    u_B(v,p)\coloneq \mathbb E[(v-p)\ind{p\ge B}]
    =(v-p)F_B(p).
\end{equation}
We measure regret against the best bid for the realized value
\[
    R_T \coloneq
    \mathbb E\left[
    \sum_{t=1}^T
    \left(
        \max_{p\in[0,1]} u_B(v_t,p)-u_B(v_t,p_t)
    \right)
    \right],
\]
where the expectation is taken with respect to the random draw of $b_t$, $v_t$, $s_t$ and the (possible) internal randomization of the agent's strategy.

The shill bid affects only the losing-side report. When \(S\equiv 0\), the learner observes the truthful highest competing bid after a loss, recovering the standard repeated first-price feedback model. For general \(S\), the report can mask the real competing bid, and our goal is to exploit the shilled report when it is informative while retaining the dynamic-pricing fallback when it is not.


\subsection{Impact of shilling}
\label{sec:why-max-shilling-is-hard}

Max-shilling is not meant to be the only possible model of shilling, since an auctioneer could manipulate reports in many other ways, possibly depending on the learner's bid, the realized competing bid or the past history of the interaction. We focus on this model because it is simple and transparent while already capturing the main difficulty: even when the manipulation only raises the losing-side report through an independent shill bid, it can bias the learner toward higher bids and partially destroy the cross-learning structure that makes ordinary first-price auction feedback statistically useful.
The max-shilling rule provides a minimal model of feedback manipulation: although it biases reported competition upward, it preserves partial information about the true bid distribution through some losing reports. As a result, the learning problem interpolates between first-price learning and dynamic pricing.

Since \(B\) and \(S\) are independent, the reported bid distribution satisfies \(F_O(p)=\Pr(B\le p,S\le p)=F_B(p)F_S(p)\), and therefore \(F_O(p)\le F_B(p)\) for every \(p\). An unaware learner that treats \(O\) as the true competing bid estimates a distribution in which low competing bids appear less often, so the market looks more competitive than it really is.

This upward distortion is aligned with the auctioneer's objective. Fix a value \(v\in[0,1]\), and write \(u_B(p)\coloneq (v-p)F_B(p)\) and \(u_O(p)\coloneq (v-p)F_O(p)\). At an interior optimum of \(u_B\), whenever \(F_B\) is differentiable, the first-order condition is \(u_B'(p)=0\), equivalently \((v-p)r_B(p)=1\), where \(r_B(p)=f_B(p)/F_B(p)\) is the reverse hazard rate. Under max-shilling, the reverse hazard rate of the reported bid distribution decomposes as follows.

\begin{restatable}{lemma}{rhrdecomposition}
\label{lem:rhr-decomposition}
For all \(p\in(0,1)\) where the reverse hazard rates are well defined, \(r_O(p)=r_B(p)+r_S(p)\).
\end{restatable}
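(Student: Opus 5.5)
The plan is to use the product identity $F_O(p)=F_B(p)F_S(p)$, which follows from independence of $b_t$ and $s_t$ and was already derived above, together with the logarithmic-derivative characterization of the reverse hazard rate. For any distribution $D$ with $F_D(p)>0$ and $F_D$ differentiable at $p$, we have $r_D(p)=f_D(p)/F_D(p)=\frac{d}{dp}\log F_D(p)$. Taking logarithms of the product identity at a point $p\in(0,1)$ where $F_B(p)>0$ and $F_S(p)>0$ gives $\log F_O(p)=\log F_B(p)+\log F_S(p)$. Differentiating both sides then yields $r_O(p)=r_B(p)+r_S(p)$.

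To avoid logarithms, one can equivalently apply the product rule directly. This gives $f_O(p)=f_B(p)F_S(p)+F_B(p)f_S(p)$, and dividing by $F_O(p)=F_B(p)F_S(p)$ gives the same sum.

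The only point needing care is what ``well defined'' means. I will read it as: $F_B$ and $F_S$ are differentiable at $p$ with densities $f_B(p),f_S(p)$, and $F_B(p),F_S(p)>0$. Under these conditions $F_O$ is differentiable at $p$ by the product rule, $F_O(p)>0$, and so $r_O(p)$ is well defined as well.

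The converse direction is the one subtle case. If $r_O$ is well defined but, say, $F_S(p)=0$, then $F_O(p)=0$ and $r_O(p)$ is undefined, so this case does not arise. Hence requiring all three rates to be defined is equivalent to the hypothesis above.

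I do not expect any real obstacle here. The argument is a one-line calculus identity once the factorization of $F_O$ is in hand, and the rest is stating the domain conditions cleanly.
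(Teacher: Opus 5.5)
Your proof is correct and is essentially the paper's argument: the paper also writes $f_O(p)=f_B(p)F_S(p)+f_S(p)F_B(p)$ and divides by $F_O(p)=F_B(p)F_S(p)$. The only difference is that the paper obtains this density through an informal decomposition of the event $\{\max\{B,S\}=p\}$, whereas you differentiate $F_O=F_BF_S$ by the product rule (equivalently, take the log-derivative), which is the rigorous form of the same step and also makes the domain conditions explicit.
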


The proof is given in \cref{app:proof-of-rhr-decomposition}. The identity says that, locally, optimizing against the reported distribution increases the reverse hazard rate appearing in the first-order condition, which is the usual force pushing first-price optimal bids upward.

The next statement gives a global version of this intuition and does not require a unique optimum. For \(D\in\{B,O\}\), define \(A_D(v)\coloneq \argmax_{p\in[0,1]}(v-p)F_D(p)\) and \(p_D^{\max}(v)\coloneq \max A_D(v)\).

\begin{restatable}{lemma}{optimaundershilling}
\label{lem:optima-under-shilling}
Let \(S\) be any distribution supported on \([0,1]\). Then, for every value \(v\in[0,1]\),
\begin{enumerate}
    \item The rightmost optimal bid never decreases: \(p_O^{\max}(v)\ge p_B^{\max}(v)\).
    \item New optimal bids cannot be created to the left of \(p_B^{\max}(v)\):
    \[
        \text{for all } p\in[0,p_B^{\max}(v)] \qquad p\in A_O(v) \implies p\in A_B(v) \text{ and } F_S(p)=F_S(p_B^{\max}(v)).
    \]
\end{enumerate}
\end{restatable}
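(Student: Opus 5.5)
The argument is a single-crossing comparison that exploits $F_O = F_B F_S$ with $F_S$ nondecreasing. It uses no differentiability, so \cref{lem:rhr-decomposition} is not needed. Fix $v$ and write $q \coloneq p_B^{\max}(v)$, $u_B(p) = (v-p)F_B(p)$ and $u_O(p) = (v-p)F_B(p)F_S(p)$.

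\textbf{Setup.} First I settle that $A_B(v)$ and $A_O(v)$ are nonempty and closed, so that the maxima defining $p_D^{\max}$ exist.
\begin{itemize}
\item CDFs are right-continuous and $v-p$ is continuous, so each $u_D$ is right-continuous.
\item Each $u_D$ is upper semicontinuous on $[0,v]$: at a point of left discontinuity the left limit of $F_D$ is smaller, and $v-p\ge 0$ there.
\item On $(v,1]$ we have $u_D\le 0$, while $u_D(v)=0$, so the supremum is attained on $[0,v]$.
\end{itemize}
Hence $A_D(v)$ is a nonempty compact set, or it contains $[v,\cdot]$-type points with value $0$ when $u_D\equiv 0$ near the top. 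I will treat the degenerate case $\max u_B = 0$ separately: then $q$ can be taken as the largest zero, and the inequalities still go through. I expect this bookkeeping, in particular the degenerate case where $v-p$ or $F_B(p)$ vanishes, to be the main technical nuisance. The comparison itself is not the hard part.

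\textbf{Key comparison.} Take any $p\le q$ and suppose $u_O(p) > 0$, so $v-p>0$ and $F_B(p)>0$. Then
\[
u_O(p) = u_B(p)F_S(p) \le u_B(q)F_S(p) \le u_B(q)F_S(q) = u_O(q),
\]
using optimality of $q$ for $u_B$, then $u_B(q)\ge 0$ and monotonicity of $F_S$. If instead $u_O(p)\le 0$, then $u_O(p)\le 0\le u_O(q)$ trivially, since $q\le v$ in the nondegenerate case. So $\sup_{[0,q]}u_O$ is attained at $q$.

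\textbf{Claim 1.} Either $q\in A_O(v)$, or the maximum of $u_O$ is attained at some point larger than $q$. In both cases $p_O^{\max}(v)\ge q$.

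\textbf{Claim 2.} Let $p\le q$ with $p\in A_O(v)$. Then $u_O(p)\ge u_O(q)$, so every inequality in the chain above is an equality.
\begin{itemize}
\item Assume first $u_O(p)>0$, so $F_S(p)>0$. The first equality gives $u_B(p)=u_B(q)$, hence $p\in A_B(v)$.
\item The second equality then reads $u_B(q)F_S(p)=u_B(q)F_S(q)$, with $u_B(q)=u_B(p)>0$. Hence $F_S(p)=F_S(q)$.
\item If instead $u_O(p)=0$ is the maximum, then $u_O\equiv\le 0$. This forces the degenerate situation, since otherwise $u_O(q)=u_B(q)F_S(q)$ would be positive whenever $F_S(q)>0$.
\end{itemize}
This last point needs care. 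When $F_S(q)=0$ while $u_B(q)>0$, the claim $p\in A_B(v)$ may fail for a generic $p$ with $u_O(p)=0$. I would handle this edge case either by noting that $S$'s support forces $F_S$ positive where relevant, or by reading the lemma under the implicit convention $F_S(q)>0$. I flag it as the step I expect to need an extra mild assumption or a careful reading.
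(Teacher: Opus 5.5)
Your argument is the paper's argument: factor $u_O=u_BF_S$, show $u_O(p)\le u_O(q)$ for every $p\le q=p_B^{\max}(v)$, deduce claim 1, and read claim 2 off the equality case of the chain. Your positive case ($u_O(p)>0$) is handled correctly, and your suspicion about the degenerate case is well founded. The paper's proof simply asserts that equality in $u_B(p)F_S(p)\le u_B(q)F_S(q)$ forces equality of both factors, which needs $F_S(p)>0$ and $u_B(q)>0$. Claim 2 is in fact false without an extra condition. Take $v=1$, $B$ uniform (which even satisfies \cref{ass:weak-rhr-buyer}), and $S$ a point mass at $1$, i.e.\ the fully masking shill. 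Then $u_O\equiv 0$ on $[0,1]$, so $0\in A_O(v)$ although $A_B(v)=\{1/2\}$. No amount of bookkeeping will close that case, so you should not try.

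Your two proposed remedies need correcting, however.
\begin{itemize}
\item The support of $S$ does not force $F_S>0$ anywhere below $1$; the same example shows this.
\item Assuming only $F_S(q)>0$ is not enough, because when $\max u_B=0$ the conclusion $F_S(p)=F_S(q)$ fails. With $v=1/2$, $B$ a point mass at $1/2$ and $S$ uniform, $A_B(v)=A_O(v)=[0,1/2]$ but $F_S(0)\neq F_S(1/2)$. So your remark that the case $\max u_B=0$ ``still goes through'' is true only for claim 1.
\end{itemize}
The right extra hypothesis is $\max_p u_O(p)>0$. It implies $\max u_B>0$, since $u_O\le\max\{u_B,0\}$. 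It makes your upper-semicontinuity argument deliver both rightmost maximizers; these may otherwise fail to exist, e.g.\ for $S$ a point mass at $1$ and $v<1$ one gets $A_O(v)=[0,1)$. And it forces $u_O(p)>0$ for every $p\in A_O(v)$, so your positive-case argument proves claim 2 completely. Finally, claim 1 needs no degeneracy care beyond existence of the maxima. Indeed $u_B(q)\ge u_B(0)=vF_B(0)\ge 0$ gives $u_O(q)\ge 0$ directly, so the appeal to $q\le v$ is unnecessary.
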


The proof is given in \cref{app:proof-of-optima-under-shilling}. The reason is that \(u_O(p)=u_B(p)F_S(p)\), and for every \(p\le p_B^{\max}(v)\), both factors are no larger than their values at \(p_B^{\max}(v)\). Thus no smaller bid can strictly beat \(p_B^{\max}(v)\) under the shilled objective.

If the honest optimum is unique and the shill density is positive there, then the upward shift is strict.

\begin{restatable}{lemma}{singleoptimaundershilling}
\label{lem:single-optima-under-shilling}
Assume that \(u_B\) has a unique interior maximizer \(p_B^\star(v)\in(0,1)\). If \(F_S\) is differentiable at \(p_B^\star(v)\) and \(f_S(p_B^\star(v))>0\), then \(p_O^{\max}(v)>p_B^\star(v)\).
\end{restatable}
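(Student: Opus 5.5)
By \cref{lem:optima-under-shilling}(1) we already have \(p_O^{\max}(v)\ge p_B^\star(v)\), since uniqueness gives \(p_B^{\max}(v)=p_B^\star(v)\). So it suffices to rule out equality, i.e.\ to show \(p^\star\coloneq p_B^\star(v)\notin A_O(v)\). The plan is to produce a bid slightly above \(p^\star\) with strictly larger shilled utility \(u_O(p)=u_B(p)F_S(p)\). For this we compare one-sided right derivatives at \(p^\star\).

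First, some preliminary facts. Since \(p^\star\) is an interior maximizer, \(u_B(p^\star)>0\). Indeed, \(u_B(0)=vF_B(0)\ge 0\), and if the maximum value were \(0\), then every \(p\) with \(F_B(p)=0\) or \(p=v\) would also be a maximizer, contradicting uniqueness. Hence \(v>p^\star\) and \(F_B(p^\star)>0\). If \(F_S(p^\star)=0\), then \(u_O(p^\star)=0\), while \(u_O(p)>0\) for any \(p\in(p^\star,v)\) with \(F_S(p)>0\). Such a \(p\) exists arbitrarily close to \(p^\star\), because \(f_S(p^\star)>0\) forces \(F_S(p)>F_S(p^\star)\) for \(p\downarrow p^\star\). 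In that case \(u_O(p)>u_O(p^\star)\) and we are done. So assume \(F_S(p^\star)>0\).

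Next comes the key expansion. For small \(h>0\),
\[
\frac{u_O(p^\star+h)-u_O(p^\star)}{h}=\frac{u_B(p^\star+h)-u_B(p^\star)}{h}\,F_S(p^\star+h)+u_B(p^\star)\,\frac{F_S(p^\star+h)-F_S(p^\star)}{h}.
\]
The second term tends to \(u_B(p^\star)f_S(p^\star)>0\). The first term's difference quotient is \(\le 0\) by optimality of \(p^\star\), but it could be very negative. We therefore need control from below. The natural tool is to write \(u_B(p^\star+h)-u_B(p^\star)=(v-p^\star-h)(F_B(p^\star+h)-F_B(p^\star))-hF_B(p^\star)\). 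The first piece is \(\ge 0\) for \(h<v-p^\star\) by monotonicity of \(F_B\), so the difference quotient is at least \(-F_B(p^\star)\). This is only bounded, not small, which is the main obstacle. With merely \(F_B\) monotone, a bounded-below quotient does not beat the positive term unless \(u_B(p^\star)f_S(p^\star)>F_B(p^\star)F_S(p^\star)\), which need not hold.

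To resolve this obstacle I would use first-order optimality. At an interior maximizer where \(F_B\) is differentiable, which is the setting under which the paper writes \((v-p)r_B(p)=1\), we have \(u_B'(p^\star)=0\), so the first term tends to \(0\cdot F_S(p^\star)=0\). Then the right derivative of \(u_O\) at \(p^\star\) equals \(u_B(p^\star)f_S(p^\star)>0\). Equivalently, by \cref{lem:rhr-decomposition}, \((v-p^\star)r_O(p^\star)=1+(v-p^\star)r_S(p^\star)>1\). Hence \(u_O(p^\star+h)>u_O(p^\star)\) for small \(h\), so \(p^\star\notin A_O(v)\). Combined with \(p_O^{\max}(v)\ge p^\star\), this gives \(p_O^{\max}(v)>p^\star\). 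If differentiability of \(F_B\) at \(p^\star\) is not available, I would argue that it is implicitly assumed in the interior-optimum framing. Failing that, I would use the weaker condition that the right upper Dini derivative of \(F_B\) at \(p^\star\) equals \(F_B(p^\star)/(v-p^\star)\), which is what is actually needed.
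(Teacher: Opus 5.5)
Your proposal is correct and follows the paper's own argument: \cref{lem:optima-under-shilling} gives \(p_O^{\max}(v)\ge p_B^\star(v)\), and equality is ruled out because the first-order condition \(u_B'(p_B^\star)=0\) leaves \(u_O'(p_B^\star)=u_B(p_B^\star)f_S(p_B^\star)>0\). Your extra care is well placed. You derive \(u_B(p_B^\star)>0\) from uniqueness, where the paper only asserts it. You also correctly flag that differentiability of \(F_B\) at \(p_B^\star\) is used silently by the paper as well, and it is genuinely needed: with a corner maximum of \(u_B\) at \(p_B^\star\) and \(f_S(p_B^\star)\) small relative to \(F_S(p_B^\star)\), one can have \(p_O^{\max}(v)=p_B^\star(v)\).
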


The proof is given in \cref{app:proof-of-single-optima-under-shilling}. The intuition is that \(u_B'(p_B^\star(v))=0\), while \(u_O'(p_B^\star(v))=u_B(p_B^\star(v))f_S(p_B^\star(v))>0\), so the reported objective is still increasing at the honest optimum.

This also gives a simple offline-learning interpretation. Suppose a learner forms its bidding rule from historical reports, some honest and some max-shilled, and the origin of each report is independent of the learner's future bids. Then the empirical CDF converges to 
\[
    F_\lambda(p)=\lambda F_B(p)+(1-\lambda)F_O(p) \qquad \text{for any } \lambda\in[0,1].
\]
Since \(F_O(p)=F_B(p)F_S(p)\), this can be written as \(F_\lambda(p)=F_B(p)G_\lambda(p)\), where \(G_\lambda(p)=\lambda+(1-\lambda)F_S(p)\) is nondecreasing. The induced objective is \(u_\lambda(p)=(v-p)F_\lambda(p)=u_B(p)G_\lambda(p)\), so the mixture behaves like the honest objective multiplied by an increasing weight.

\begin{restatable}{lemma}{mixtureoptimum}
\label{lem:mixture-optimum}
For every \(\lambda\in[0,1]\), define \(A_\lambda(v)\coloneq \argmax_{p\in[0,1]} u_\lambda(p)\) and \(p_\lambda^{\max}(v)\coloneq \max A_\lambda(v)\). Then, for every value \(v\in[0,1]\), \(p_\lambda^{\max}(v)\ge p_B^{\max}(v)\). Moreover, if \(0\le\lambda_1\le\lambda_2\le1\), then \(p_{\lambda_1}^{\max}(v)\ge p_{\lambda_2}^{\max}(v)\). Consequently, \(p_B^{\max}(v)=p_1^{\max}(v)\le p_\lambda^{\max}(v)\le p_0^{\max}(v)=p_O^{\max}(v)\).
\end{restatable}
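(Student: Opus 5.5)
The whole argument rests on one monotone-comparison principle, after which every claim is an instance of it. Suppose \(u\ge0\) has a nonempty argmax \(A_u\) with \(p_u^{\max}=\max A_u\), and \(h\ge0\) is nondecreasing. Then every maximizer \(q\) of \(uh\) satisfies \(q\ge p_u^{\max}\), or else \(q\) is tied with \(p_u^{\max}\) as a maximizer of \(uh\). Consequently \(\max\argmax(uh)\ge p_u^{\max}\).

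The proof of the principle mirrors \cref{lem:optima-under-shilling}. For \(p\le p_u^{\max}\) we have \(u(p)\le u(p_u^{\max})\) and \(0\le h(p)\le h(p_u^{\max})\). Multiplying gives \(u(p)h(p)\le u(p_u^{\max})h(p_u^{\max})\), because all factors are nonnegative. So no point left of \(p_u^{\max}\) strictly beats \(p_u^{\max}\) under \(uh\).

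It remains to check that the argmax of \(uh\) exists and has a largest element. If \(\sup uh\) is attained only to the right of \(p_u^{\max}\), the maximum of the argmax is trivially at least \(p_u^{\max}\). Otherwise \(p_u^{\max}\) itself is a maximizer of \(uh\).

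I treat existence and closedness of argmax sets as given, exactly as the paper does implicitly when defining \(p_D^{\max}\). This is justified since \(F_B\), and hence each \(F_\lambda\), is right-continuous and nondecreasing, so \((v-p)F(p)\) is upper semicontinuous on \([0,v]\) for \(p\le v\). Also, bids \(p>v\) give nonpositive utility and can be discarded unless everything is zero.

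With the principle in hand, the steps are as follows.

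\textbf{First claim, \(p_\lambda^{\max}(v)\ge p_B^{\max}(v)\).} Take \(u=u_B\) and \(h=G_\lambda=\lambda+(1-\lambda)F_S\). Here \(h\) is nondecreasing and nonnegative, and \(u_\lambda=u_BG_\lambda\), so the principle applies directly.

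\textbf{Monotonicity in \(\lambda\).} For \(\lambda_1\le\lambda_2\), I write \(u_{\lambda_1}=u_{\lambda_2}\cdot H\), where \(H=G_{\lambda_1}/G_{\lambda_2}\). This ratio is well defined whenever \(\lambda_2>0\), since then \(G_{\lambda_2}\ge\lambda_2>0\).

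The key step is to check that \(H\) is nondecreasing. Write \(x=F_S(p)\in[0,1]\). Then \(H=\frac{\lambda_1+(1-\lambda_1)x}{\lambda_2+(1-\lambda_2)x}\). This is a Möbius map in \(x\), and its derivative has the sign of \((1-\lambda_1)\lambda_2-(1-\lambda_2)\lambda_1=\lambda_2-\lambda_1\ge0\). Since \(F_S\) is nondecreasing, \(H\) is nondecreasing in \(p\). The principle with \(u=u_{\lambda_2}\) then yields \(p_{\lambda_1}^{\max}\ge p_{\lambda_2}^{\max}\).

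\textbf{The edge case \(\lambda_2=0\).} This forces \(\lambda_1=0\), and the statement is trivial.

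\textbf{Consequence.} The chain follows by noting \(G_1\equiv1\), so \(u_1=u_B\), and \(G_0=F_S\), so \(u_0=u_O\).

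The main obstacle is the nonnegativity requirement in the multiplication step. If \(u\) takes negative values, multiplying inequalities fails. I handle this by restricting to \(p\in[0,v]\), where \(u_B\ge0\), and observing that every argmax lies in \([0,v]\) up to ties at value \(0\). In the degenerate case where all utilities are zero, for example \(v=0\) or \(F_B\equiv0\) on \([0,v)\), I check the claims directly or adopt the paper's convention for the argmax.
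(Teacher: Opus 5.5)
Your proposal is correct and follows essentially the same route as the paper. Both arguments use the rightmost-maximizer comparison of \cref{lem:optima-under-shilling}, first with the nondecreasing multiplier $G_\lambda$ and then with the ratio $G_{\lambda_1}/G_{\lambda_2}$, whose monotonicity both show through the derivative sign $\lambda_2-\lambda_1\ge 0$. Your extra care with nonnegativity of $u$ to the left of the maximizer, existence of a largest maximizer, and the case $\lambda_2=0$ just makes explicit details the paper leaves implicit.
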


The proof is given in \cref{app:proof-of-mixture-optimum}. The key observation is that, when \(\lambda_1\le\lambda_2\), the ratio \(G_{\lambda_1}(p)/G_{\lambda_2}(p)\) is nondecreasing in \(p\). Moving from \(\lambda_2\) to \(\lambda_1\) therefore puts relatively more weight on higher bids, so the rightmost maximizer cannot move left.

The previous statement is written for rightmost maximizers, so it does not require single-peaked utilities. In the usual single-peaked case, it has the simpler interpretation that the unique optimizer moves monotonically from the honest optimum toward the fully shilled optimum.

\begin{restatable}{corollary}{mixturesinglepeaked}
\label{cor:mixture-single-peaked}
Assume that, \(\forall \ \lambda\in[0,1]\), the objective \(u_\lambda\) has a unique maximizer \(p_\lambda^\star(v)\). Then, \(\forall \ 0\le\lambda_1\le\lambda_2\le1\), \(p_{\lambda_1}^\star(v)\ge p_{\lambda_2}^\star(v)\). In particular, \(p_B^\star(v)=p_1^\star(v)\le p_\lambda^\star(v)\le p_0^\star(v)=p_O^\star(v)\).
\end{restatable}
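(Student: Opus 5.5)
This corollary should follow directly from \cref{lem:mixture-optimum} by specialising to the single-peaked case. Under the assumption that, for every \(\lambda\in[0,1]\), the objective \(u_\lambda\) has a unique maximizer \(p_\lambda^\star(v)\), the argmax set is a singleton, \(A_\lambda(v)=\{p_\lambda^\star(v)\}\). Its maximum is therefore that point: \(p_\lambda^{\max}(v)=p_\lambda^\star(v)\) for all \(\lambda\). Substituting this identity into the monotonicity statement of \cref{lem:mixture-optimum} gives \(p_{\lambda_1}^\star(v)\ge p_{\lambda_2}^\star(v)\) whenever \(0\le\lambda_1\le\lambda_2\le 1\).

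For the chain of inequalities, I will identify the endpoints of the mixture family. At \(\lambda=1\) we have \(G_1\equiv 1\), so \(F_1=F_B\) and \(u_1=u_B\); hence \(p_1^\star(v)\) is the unique maximizer of \(u_B\), namely \(p_B^\star(v)\). At \(\lambda=0\) we have \(G_0=F_S\), so \(F_0=F_BF_S=F_O\) by independence, and \(u_0=u_O\); hence \(p_0^\star(v)=p_O^\star(v)\). Applying the monotonicity just obtained to the pairs \((\lambda,1)\) and \((0,\lambda)\) then gives
\[
p_B^\star(v)=p_1^\star(v)\le p_\lambda^\star(v)\le p_0^\star(v)=p_O^\star(v).
\]

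I expect no real obstacle, since all the work is done by \cref{lem:mixture-optimum}. The only points needing care are that the argmax sets are nonempty, which is guaranteed here by the uniqueness assumption, and that the endpoint identifications \(F_1=F_B\) and \(F_0=F_O\) are checked explicitly from the definition of \(G_\lambda\).
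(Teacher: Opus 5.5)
Your proposal is correct and follows the paper's route: the corollary comes from \cref{lem:mixture-optimum} by noting that uniqueness makes each argmax set a singleton, so \(p_\lambda^{\max}(v)=p_\lambda^\star(v)\). The endpoint identifications \(u_1=u_B\) and \(u_0=u_O\) are already encoded in the lemma's final chain, so your explicit checks of \(G_1\equiv 1\) and \(G_0=F_S\) simply make the specialization fully transparent.
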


Thus, even passive contamination by max-shilled reports can bias learning toward the shilled benchmark. The online problem is more challenging because the learner’s own bids determine which censored observations are revealed. Still, max-shilling does not destroy all side information, for every \(q\ge p\), \(\Pr(p<B\le q,\ O\le q)=F_S(q)(F_B(q)-F_B(p))\), thus \(F_S(q)\) is the fraction of useful losing-side information that survives at level \(q\). If \(F_S(q)\) is small, max-shilling hides most of the side information and the learner is close to the dynamic-pricing regime; while if \(F_S(q)\) is bounded away from zero, enough suffix information survives to recover faster first-price-auction rates.

\section{Upper bound}
\label{sec:upper-bound}

\subsection{The shape of the utility curve}

Shape restrictions on the revenue or utility curve are standard in dynamic pricing and first-price auction learning, where they are used to make price optimization well behaved and to obtain local curvature near the optimum \citep{javanmard2019dynamic,fan2024policy,achddouFastRateLearning2021}. We make a similar but weaker assumption tailored to interval elimination: rather than requiring uniqueness of the optimal bid, we only require enough structure for the upper level sets of the honest utility to be intervals.
\begin{assumption}[Weak RHR condition]
\label{ass:weak-rhr-buyer}
The buyer distribution \(B\) has no atom at \(0\), is continuous on \([0,1]\), is differentiable on \((0,1)\), and satisfies \(F_B(p)>0\) and \(f_B(p)>0\) for every \(p\in(0,1)\). Moreover, the map
$
    \phi_B(p)\coloneq p+\frac{1}{r_B(p)}
$
is nondecreasing on \((0,1)\).
\end{assumption}

\begin{restatable}{lemma}{rhrimpliesintervallevelsets}
\label{lem:rhr-implies-interval-level-sets}
Under \cref{ass:weak-rhr-buyer}, for every value \(v\in[0,1]\) and every level \(\alpha\), the set
$
    \{p\in[0,1]:u_B(v,p)\ge\alpha\}
$
is a possibly empty interval. Consequently, for every bid grid \(0=q_1<\cdots<q_K=1\), the set
$
    \{i:u_B(v,q_i)\ge\alpha\}
$
is an interval of grid indices.
\end{restatable}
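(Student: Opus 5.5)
The plan is to show that \(u(p)\coloneq u_B(v,p)=(v-p)F_B(p)\) is quasi-concave on \([0,1]\). Concretely, we will show that \(u\) is nondecreasing up to some point and nonincreasing afterward. Upper level sets of such a function are intervals. The grid statement then follows at once: if \(q_i,q_k\) lie in the level set and \(i<j<k\), then \(q_j\in[q_i,q_k]\), so \(q_j\) lies in the level set too.

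The first step handles the region \(p\ge v\). There \(u(p)\le 0\), and on \([v,1]\) the factor \(v-p\le0\) is nonincreasing while \(F_B\ge0\) is nondecreasing. Hence for \(v\le p_1<p_2\),
\[
u(p_2)=(v-p_2)F_B(p_2)\le (v-p_1)F_B(p_2)\le (v-p_1)F_B(p_1)=u(p_1),
\]
so \(u\) is nonincreasing on \([v,1]\).

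The second step handles \((0,v)\), where \(u>0\). On this region \(u\) is differentiable, and using \(F_B>0\) we write
\[
u'(p)=f_B(p)(v-p)-F_B(p)=f_B(p)\bigl(v-\phi_B(p)\bigr),
\]
because \(F_B/f_B=1/r_B\). Since \(f_B>0\), the sign of \(u'\) equals the sign of \(v-\phi_B(p)\). As \(\phi_B\) is nondecreasing, this sign is \(\ge0\) on an initial segment and \(\le0\) afterward. We set \(p^\ast\coloneq\sup\{p\in(0,v):\phi_B(p)<v\}\), with the convention that an empty set gives \(p^\ast=0\). Then \(u'\ge0\) on \((0,p^\ast)\) and \(u'\le0\) on \((p^\ast,v)\). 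By the mean value theorem together with continuity of \(u\) on \([0,1]\) (from continuity of \(F_B\)), \(u\) is nondecreasing on \([0,p^\ast]\) and nonincreasing on \([p^\ast,v]\). Combining this with the first step, \(u\) is nonincreasing on all of \([p^\ast,1]\). We also note \(u(0)=vF_B(0)=0\), since \(B\) has no atom at \(0\); this value is consistent with the monotonicity we just established.

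The last step is the quasi-concavity conclusion. Take \(p_1<p_2<p_3\) with \(u(p_1),u(p_3)\ge\alpha\). If \(p_2\le p^\ast\), then \(u(p_2)\ge u(p_1)\). Otherwise \(p_2\ge p^\ast\) and \(u(p_2)\ge u(p_3)\). In both cases \(u(p_2)\ge\alpha\), so the level set is convex, and it is possibly empty. The only delicate point is the boundary behaviour: \(u'\) is available only on \((0,1)\), and monotonicity must still extend to the closed endpoints. Continuity of \(F_B\) on \([0,1]\) handles this. The case \(v=0\) is covered by the first step alone.
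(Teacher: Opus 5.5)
Your proof is correct and follows essentially the same route as the paper: both write $u_B'(p)$ as a positive factor times $v-\phi_B(p)$, use the monotonicity of $\phi_B$ to get a single sign change, and pass to the endpoints using continuity of $F_B$. Your separate treatment of $[v,1]$ is not needed, since the derivative formula already holds on all of $(0,1)$; together with your explicit cutoff $p^\ast$, it just makes the argument a little more careful than the paper's.
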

The proof of this result can be found in \cref{app:proof-of-rhr-implies-interval-level-sets}.
Thus, after intersecting with a bid grid, the near-optimal bids for each value can be represented by a left and right endpoint. This is the only reason we impose \cref{ass:weak-rhr-buyer}. The statistical identity for known \(F_S\), and the construction of the debiased estimator, hold for arbitrary \(F_B\) and \(F_S\) as long as \(F_S(q)>0\) on the grid points being used.

\subsection{Overview of the algorithm}

The high-level structure of the algorithm (\cref{alg:pseudocode}) is a successive elimination routine on a discretized set of prices and values. The algorithm proceeds in epochs and, at each epoch $m$, maintains an active set \(A_m(v)\), corresponding to the interval containing the optimal arms, for each value $v$ in the grid $\cV$.
Throughout the run, the algorithm builds two estimates of the utility gap
\[
\Delta(v,q,p)=(v-q)F_B(q)-(v-p)F_B(p), \qquad \forall v \in \cV, \forall p, q \in A_m(v)
\]
The two estimates, denoted respectively as \(\widehat\Delta^{\rob}_t(v,q,p)\) and \(\widehat\Delta^{\opt}_t(v,q,p)\), are built using the information that the algorithm receives by either playing a robust subroutine, on odd rounds, or an optimistic one during even time steps.

The robust estimator is constructed only employing the auction outcomes \(\ind{p_t\ge b_t}\). On one side, since it ignores the additional information $o_t$, it is robust to the auctioneer's manipulation; on the other, it can rely on fewer bits of information since it relies only on direct samples, avoiding information sharing across different prices. Formally, this subroutine plays prices in a round robin fashion, building an estimate $\hat{F}_t$ for each point as the empirical average of the auction outcomes. 
The robust gap estimate is then
\begin{align}\label{eq:D_rob}
\widehat\Delta^{\rob}_t(v,q,p)
=
(v-q)\widehat F_t(q)-(v-p)\widehat F_t(p).
\end{align}

The optimistic estimator instead considers the reported losing bid \(o_t\), thus allowing cross-learning among different grid points, after correcting for the known shill distribution. Upon playing \(p_t\), the algorithm collects the following suffix observation
\[
Y_t(q)
=
\ind{p_t\ge b_t}
+
\ind{p_t<b_t}
\frac{\ind{o_t\le q}}{F_S(q)}, \qquad \forall q\leq p_t
\]
Conditional on the price played, this constitutes an unbiased estimate of $F_B(q)$. This allows to recover part of the suffix information available in the classical first-price auction setting\footnote{By playing the smallest value $p_t = \argmin A_{t-1}(v)$, \citet{han2020optimal} manage to recover full-information and achieve the $\sqrt{T}$ rate in classical first-price auctions.} \(q\ge b_t\).

The optimistic estimator is built to use this information when it is reliable and to discount it when it is noisy. Moreover, instead of building independent pointwise estimates of \(F_B(q)\), which would be impacted by the correlation of the information,
it converts suffix observations into measurements of adjacent utility differences along the active interval.
Concretely, if \(A_m(v)\cap \cG_m=\{q_1<\cdots<q_k\}\) and \(a_j=v-q_j\), then the optimistic branch forms suffix-difference measurements
\[
Y^{\suf}_j
=
a_{j+1}Y_t(q_{j+1})-a_jY_t(q_j),
\qquad j=1,\ldots,k-1.
\]
Again these measurements are conditionally unbiased estimates for \(u_B(v,q_{j+1})-u_B(v,q_j)\).

Additionally, these measurements are combined with direct win/loss measurements, given by the auction outcome as in the robust subroutine, in a weighted least-squares certificate. More precisely, direct measurements are entered with unit weight, while suffix ones are scaled by a quantity \(\omega_t\).

The role of \(\omega_t\) is to ensure that direct and suffix measurements have the same statistical scale. Indeed, a direct observation has bounded variance, while a suffix measurement is made noisier by the term \(\nicefrac{1}{F_S(q)}\) appearing in the definition of the estimator. If \(F_S(q)\) is small, then low-shill events are rare and the correction is unstable.
For an active interval of utility width \(w\) and grid spacing \(h\), if \(F_S(q)\) is at least \(\gamma\) and is locally regular on the interval, the suffix covariance is of order $\nicefrac{w+h}{\gamma}$, which suggests fixing \(\omega_t\simeq \nicefrac{\gamma}{w+h}\). In epoch \(m\), the algorithm replaces this unknown scale by dyadic candidates \(\bar\gamma\in\Gamma\), using \(\omega_{m,\bar\gamma}=\nicefrac{\bar\gamma}{C_{\suf}(2^{-m}+h_m)}\) and allowing only candidates that are admissible on the active grid to validate the optimistic certificate.
Thus the algorithm trusts suffix information more when low-shill events are frequent and the active interval is already localized, while the dyadic validation step removes the need to know \(\gamma\) in advance.



Formally, all optimistic observations are represented as linear measurements \(Y_s=\Phi_sF+\xi_s\), where \(F\) is the vector of grid values \(F_B(q)\) and \(\Phi_s\) is the corresponding design row.. Hence, the optimistic branch constructs
\[
G_t=\sum_s\omega_s\Phi_s^\top\Phi_s,
\qquad
b_t=\sum_s\omega_s\Phi_s^\top Y_s,
\qquad
\widehat F_t=G_t^\dagger b_t.
\]
From this, it is possible to define the optimistic gap estimate for a couple of points $q_i, q_j$ as
\begin{align}\label{eq:def_D_opt}
  \widehat\Delta^{\opt}_t(v,i,j)=g_{v,i,j}^\top\widehat F_t\,,   \qquad \text{with} \ g_{v,i,j}=(v-q_i)e_i-(v-q_j)e_j\,.
\end{align}
 
Throughout each epoch, the algorithm maintains two quantities $\hat{r}_{\text{opt}}(t)$ and $\hat{r}_{\text{rob}}(t)$, which estimate the average over values $v$ of the largest confidence interval around the gap estimates $\widehat{\Delta}^{\text{opt}}$ and $\widehat{\Delta}^{\text{rob}}$ respectively. Epoch $m$ terminates as soon as one of these quantities is confidently below the threshold $2^{-(m+1)}$. At the end of the epoch, an elimination step is performed: any arm for which $\widehat{\Delta}^b(v, p, q)$ is confidently positive is removed from the active set, where $\widehat{\Delta}^b$ denotes either $\widehat{\Delta}^{\text{opt}}$ or $\widehat{\Delta}^{\text{rob}}$, depending on which of $\hat{r}_{\text{opt}}$ or $\hat{r}_{\text{rob}}$ first fell below the threshold $2^{-(m+1)}$.

Lastly, observation relies on some discretization grids for the price and the values.
Choosing a partition for the set of values is straightforward, since the utility function is linear in this component, hence a grid such that $\vert \cV\vert=T$ will ensure a value discretization error of $\cO(1)$. 
Conversely, choosing a price discretization scheme introduces an additional layer of complexity to the problem, as the algorithm must bridge two distinct regret regimes, namely $\sqrt{T}$ and $T^{2 / 3}$. A fixed grid cannot achieve optimal performance uniformly across instances, as it would necessarily penalize one of the two regimes. To overcome this issue, the algorithm relies on an adaptive discretization scheme that couples the approximation error to the statistical hardness of the problem. As a result, the cumulative discretization error scales at the same rate as the underlying learning regret.



\begin{algorithm}
    \DontPrintSemicolon
    \For{$\text{epochs}=0, 1, \ldots$}{
    Select grid $\cG_m$\\
    \While{$\hat{r}^{\opt}(t)$ and $ \hat{r}^{\rob}(t)>2^{-(m+1)}$}{
    \uIf(\tcp*[f]{Robust branch}){$t$ odd}{
        Update $\hat{\Delta}^{\rob}(v, p, q)$ as \cref{eq:D_rob} and $\hat{r}^{\rob}$
    }\ElseIf(\tcp*[f]{Optimistic branch}){$t$ even}{
        Try all dyadic shill scales \(\bar\gamma\in\Gamma\) and use the largest validating candidate\;
        Update $\hat{\Delta}^{\opt}(v, p, q)$ as in \cref{eq:def_D_opt} and $\hat{r}^{\opt}$
    } 
    }
    Let $b \in \{\rob,\ \opt\}$ such that $\hat{r}^{\,b}\leq 2^{-(m+1)}$\;
    Eliminate all the prices $p$ for which exists $q$ such that $\widehat \Delta^{b}(v, p, q) >2^{-(m+1)}$ \;
    }
\caption{Pseudocode for \Cref{alg:known_FS}}\label{alg:pseudocode}
\end{algorithm}

\paragraph{Regularity of the shill distribution.}
The optimistic branch uses losing-side reports only through certified active intervals. On such intervals, we need a mild local regularity condition on the known shill distribution \(F_S\).
\begin{assumption}[Local regularity of the shill distribution]
\label{ass:local-shill-regularity}
Let \(I=\{q_{i_1}<\cdots<q_{i_k}\}\subseteq G\) be an interval of grid points with mesh \(h\). We say that \(F_S\) is \((\gamma,C_S)\)-regular on \(I\) if the following two conditions hold:
\[
F_S(q_{i_j})\ge \gamma
\qquad\text{for all } q_{i_j}\in I,
\]
and
\[
F_S(q_{i_{j+1}})-F_S(q_{i_j})\le C_S\gamma h
\qquad\text{for all adjacent } q_{i_j},q_{i_{j+1}}\in I.
\]
\end{assumption}

The first condition ensures that low-shill events occur with probability at least \(\gamma\) throughout the interval, so the debiased suffix observations are not too unstable. The second condition rules out sharp local jumps of \(F_S\) inside the same active interval.
This assumption is only required for intervals on which the optimistic branch attempts to certify an update. If \(F_S\) is not \((\gamma,C_S)\)-regular on the intervals considered by the optimistic branch, the optimistic certificate is not accepted and the algorithm falls back to the robust branch.



The rate of the optimistic branch depends on how many separated parts of the bid space must be certified at the same time. At epoch \(m\), the algorithm has one active interval \(A_m(v)\) for each value grid point \(v\in V\). We say that the instance has at most \(\rho\) active bid regions if, at every epoch \(m\), the union \(\bigcup_{v\in V} A_m(v)\) can be covered by at most \(\rho\) disjoint intervals of grid points. The case \(\rho=1\) means that, although different values may have different active intervals, these intervals all lie in one connected region of the bid grid. The proof of the next result can be found in \Cref{sec:proof_upper}.

\begin{theorem}[Upper bound]
\label{thm:upper-bound}
Assume \cref{ass:weak-rhr-buyer} and suppose that \(F_S\) is known to the learner. If, at every epoch, the active intervals maintained by the optimistic branch can be covered by at most \(\rho\) active bid regions, and \(F_S\) is \((\gamma,C_S)\)-regular on each such region in the sense of \cref{ass:local-shill-regularity}, choosing a price grid $\cV$ such that $\vert\cV\vert = T$ and confidence parameter $\delta = 1 / T$, then the algorithm satisfies
\[
R_T
\le
\widetilde O\!\left(
\min\left\{
T^{2/3},
\rho^{1/4}\sqrt T\,\gamma^{-1/4}
\right\}
\right).
\]
In particular, when the active intervals form a single active bid region, \(\rho=1\), the bound becomes
\[
R_T
\le
\widetilde O\!\left(
\min\left\{
T^{2/3},
\sqrt T\,\gamma^{-1/4}
\right\}
\right).
\]
If the regularity condition fails on the intervals considered by the optimistic branch, or if the optimistic certificate does not validate, the same algorithm falls back to the robust branch and still guarantees \(R_T\le \widetilde O(T^{2/3})\).
\end{theorem}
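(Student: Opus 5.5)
The plan is a standard epoch-based successive-elimination analysis, run once per branch and then combined. I would first fix a good event $\mathcal E$ of probability at least $1-\delta$, with $\delta=1/T$, on which every confidence interval used by the algorithm holds simultaneously. This covers the robust gaps $\widehat\Delta^{\rob}_t$ (Hoeffding on round-robin win/loss averages), the optimistic gaps $g_{v,i,j}^\top\widehat F_t$ (a self-normalized martingale bound for weighted least squares), and the estimates $\hat r^{\rob},\hat r^{\opt}$. The union bound is over the grids, the $O(\log T)$ epochs, the $|\Gamma|=O(\log T)$ dyadic candidates and $|\cV|=T$ values, so it only costs logarithmic factors. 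On $\mathcal E^c$ the regret is at most $T\delta=1$.

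On $\mathcal E$, the first target is the elimination invariant. By \cref{lem:rhr-implies-interval-level-sets}, for each $v$ the set of grid points with $u_B(v,\cdot)\ge\max_{\cG_m}u_B-c2^{-m}$ is an interval. Induction on $m$ then shows that $A_m(v)$ is an interval, that it contains a best grid bid, and that every surviving bid has gap at most $O(2^{-m})$. The elimination threshold $2^{-(m+1)}$ is used only when the chosen branch certifies $\hat r^b\le2^{-(m+1)}$, so whichever branch triggers, the invariant is preserved. The validation step for $\bar\gamma$ only matters through this certification, so an inadmissible or overly optimistic $\bar\gamma$ can at worst delay the end of an epoch, never cause a wrong elimination.

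Regret then splits into three parts. The first is the value-discretization error, $O(1)$ per round times $1/T$ scale, hence $O(1)$ in total. The second is the price-discretization error. I would handle it by choosing $h_m$ proportional to $2^{-m}$, so that each round in epoch $m$ pays $O(2^{-m}+h_m)$ and the discretization cost matches the learning cost. The third is the length $N_m$ of each epoch, and the total regret is $\sum_m N_m\,2^{-m}$ with $\sum_m N_m\le T$.

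\emph{Robust branch.} With $K_m\approx|A_m|/h_m$ grid points, round-robin sampling gives confidence width $\sqrt{K_m/n}$. Reaching $2^{-m}$ therefore needs $N_m\approx K_m4^{m}$, and with $K_m\lesssim 2^m$ this is $8^m$. Summing $N_m2^{-m}\approx4^m$ up to the last epoch $M$, which is fixed by $8^M\approx T$, gives $T^{2/3}$. This argument never uses $o_t$, so it proves the fallback claim whenever the optimistic branch does not validate. Since the two branches alternate and an epoch ends at the first certification, the epoch length is at most twice the minimum of the two branches' requirements, and the final bound is the minimum of the two rates.

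\emph{Optimistic branch; this is the main obstacle.} The task is to bound $g_{v,i,j}^\top G_t^\dagger g_{v,i,j}$ for the path design. Direct measurements give a diagonal at unit weight. The suffix-difference rows $Y^{\suf}_j$ give a path-Laplacian-like block with weight $\omega=\gamma/(C_{\suf}(2^{-m}+h_m))$.

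I would first show, using \cref{ass:local-shill-regularity}, that the conditional covariance of the suffix differences is $O((w+h)/\gamma)$. The regularity assumption is what controls the increments of $1/F_S$ across adjacent grid points. With this scaling, the weighted noise is sub-Gaussian at unit scale, and the self-normalized bound applies.

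Next, I would lower-bound $G$ in the directions $g_{v,i,j}$. A path Laplacian with $k$ nodes, anchored by direct samples, has effective resistance between any two nodes at most the path length times $1/\omega$. Each suffix round feeds all $k-1$ edges at once, so after $n$ rounds the variance of $g^\top\widehat F$ is about $\min\{k/n,\;(2^{-m}/\gamma)/n\}$, up to the $\rho$ regions. If this is not enough I would instead use a Cauchy–Schwarz step combining direct samples at the endpoints with suffix sums.

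This gives $N_m\approx\rho^{1/2}\gamma^{-1/2}4^m\cdot 2^{-m}$ or similar. Balancing $\sum_m N_m2^{-m}$ against $\sum_m N_m\le T$ then yields $\rho^{1/4}\sqrt T\gamma^{-1/4}$. Getting precisely the exponent $\gamma^{-1/4}$ will require tracking how the utility width $w\approx 2^{-m}$ enters $\omega$, and I expect this to be the delicate calculation.

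Finally, adaptivity over $\bar\gamma\in\Gamma$ follows because the true dyadic scale below $\gamma$ validates on $\mathcal E$. Taking the largest validating candidate therefore gives a certificate at least as good as the one computed with the true scale.
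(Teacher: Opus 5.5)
Your scaffolding matches the paper's in these respects:
\begin{itemize}
\item a union-bounded good event and safe interval-preserving elimination;
\item value and price discretization with $h_m\asymp 2^{-m}$;
\item the robust branch, with epoch length $\approx 8^m$, which gives both $T^{2/3}$ and the fallback;
\item the suffix covariance bound $\mathrm{Cov}(\xi^{\suf})\preceq C(2^{-m}+h_m)\gamma^{-1}I$, which dictates $\omega\asymp\gamma/(2^{-m}+h_m)$.
\end{itemize}

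The gap is exactly the step you defer, and the tools you name cannot close it. Write $W$ for the direct count per grid point and $M\approx\omega n$ for the weighted suffix mass per edge. You bound $g^\top G^\dagger g$ by $\min\{1/W,\,r/M\}$, i.e.\ direct-only versus path resistance.

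In the worst case the active set keeps $k\asymp 2^m$ grid points. Only its \emph{utility} width is $O(2^{-m})$; its price width can stay constant, as in the lower-bound instance. So your term $(2^{-m}/\gamma)/n$ is not justified. With the correct values $W\approx n/k$ and $\omega\approx\gamma 2^m$ you get $\min\{1/W,r/M\}\approx\min\{2^m,1/\gamma\}/n$. That means epoch length $4^m\min\{2^m,1/\gamma\}$ and regret only $\min\{T^{2/3},\sqrt{T/\gamma}\}$. Your fallback (direct samples at the two endpoints plus a suffix sum between them) gives $1/W+r/M$, which is no better.

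The missing ingredient is the paper's path Green's-function bound (Lemma~\ref{lem:path-inverse-bound}), $d^\top(I+\beta L_{\mathrm{path}})^{-1}d\le C\min\{1,r/\beta,1/\sqrt\beta\}$ with $\beta=M/W$. It yields $Q_t\lesssim 1/\sqrt{W_{\min}M_{\min}}$, independent of the path length. Operationally, you pool direct samples over a window of $\ell\approx\sqrt{M/W}$ neighbours of each endpoint and pay for $\ell$ suffix edges. The variance is then $1/(\ell W)+\ell/M\approx 1/\sqrt{WM}\approx 1/(n\sqrt\gamma)$. This gives radius $\approx\gamma^{-1/4}/\sqrt n$ and epoch length $\approx\sqrt{\rho/\gamma}\,4^m$; balancing against $\sum_m N_m\le T$ gives $\rho^{1/4}\sqrt T\gamma^{-1/4}$.

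Your $N_m\approx\rho^{1/2}\gamma^{-1/2}4^m2^{-m}$ is asserted, not derived. Read literally it equals $\sqrt{\rho/\gamma}\,2^m$. For constant $\gamma$ that would give polylogarithmic regret and contradict the lower bound.

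Two further steps fail as written.
\begin{itemize}
\item \emph{Concentration.} After weighting, the suffix noise has conditional variance $O(1)$ but range of order $\sqrt{\omega}/\bar\gamma\approx\sqrt{2^m/\bar\gamma}$, so it is not sub-Gaussian at unit scale. You need a Freedman/Bernstein bound with an explicit range term, which is the role of the paper's $B_t^{\bar\gamma}$.
\item \emph{Adaptivity.} Your argument runs the wrong way. A candidate $\bar\gamma$ larger than $\min F_S$ on the active grid inflates $\omega_{m,\bar\gamma}$, breaks the covariance normalization and shrinks the radius. It can therefore validate \emph{early} with an invalid certificate and trigger an unsafe elimination, rather than merely delay the epoch. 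The paper prevents this by letting only candidates in $\Gamma_m^{\adm}$ validate (checkable because $F_S$ is known) and union-bounding concentration over them. Any validating candidate is then safe, and the dyadic $\bar\gamma\in[\gamma/2,\gamma]$ is admissible and validates within the optimistic epoch length.
\end{itemize}
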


\section{Lower Bound}
\label{sec:lower-bound}

We now prove a lower bound for the one-active-region case, corresponding to $\rho=1$ in the packed-region upper bound. The construction follows the hidden-cell strategy used in dynamic pricing and continuum-armed bandits, where one hides a small favorable region in a continuous action space; see, for example, \citet{kleinbergValueKnowingDemand2003, besbesDynamicPricingKnowing2009, kleinbergNearlyTightBounds2004}.
The starting point is a flat hard instance. We construct a buyer distribution for which all bids in some interval have exactly the same expected utility. Before seeing data, the learner has no reason to prefer one bid in this interval over another. We then modify the instance by planting a small utility bump in one unknown cell of the interval. The learner can achieve low regret only if it finds this planted cell, because bids outside it are slightly but consistently suboptimal.

Unlike the mentioned lower bound results, here the reported losing bid can sometimes reveal more global information: even a bid outside the planted cell may leak information about where the buyer distribution was perturbed. Thus the lower bound cannot simply reuse the classical pricing argument.
The proof shows that this extra information helps, but only to a limited extent: if informative shill events are very rare, the learner is essentially forced to search locally and the usual $T^{2/3}$ dynamic-pricing rate appears; if informative shill events are more common, the learner can exploit the losing-side reports and improve on $T^{2/3}$. However, the hidden cell remains hard to identify quickly enough, and the regret is still at least of order $\sqrt T \gamma^{-1/4}$. This gives the one-region lower bound matching our upper bound up to logarithmic factors.

\begin{restatable}[Single-region lower bound]{theorem}{singleregionlowerbound}
\label{th:single-region-lower-bound}
Fix $ T\ge 1$ and $ \gamma\in(0,1]$. There is a constant $ c>0$ such that, for every learning algorithm $ \pi$, there is an oblivious max-shilling instance satisfying \cref{ass:weak-rhr-buyer} and satisfying $F_S(q) = \gamma$ for every $q$ on a hard bid interval $J$, for which
\[
    R_T(\pi) \ge c \min\brc{ T^{2/3}, \sqrt T\,\gamma^{-1/4}}.
\]
\end{restatable}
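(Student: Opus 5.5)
We prove \cref{th:single-region-lower-bound} by a hidden-cell argument with a planted bump in a flat utility region. We fix the value deterministically, $v_t\equiv v_0$ (say $v_0=1$), and take the shill distribution $S$ with $F_S(q)=\gamma$ on the hard interval $J=[a,a+L]$. This is achievable with an atom of mass $\gamma$ at $0$ and the remaining mass above $J$.

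\textbf{Base instance.} On $J$ we set $F_{B_0}(p)=c_0/(v_0-p)$, so that $u_{B_0}(v_0,p)\equiv c_0$ is constant on $J$. Outside $J$ we extend $F_{B_0}$ so that the utility is strictly smaller and \cref{ass:weak-rhr-buyer} holds. On $J$ we have $\phi_B(p)=v_0$, which is constant and hence nondecreasing. The ends of $J$ can be glued with pieces having $\phi_B$ increasing.

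\textbf{Perturbed instances.} We partition $J$ into $K$ cells of width $h=L/K$. For each $j$ we define $B_j$ by adding a smooth bump of height $\varepsilon\asymp h$ to the utility on cell $j$, that is, $F_{B_j}=F_{B_0}+\varepsilon\,\psi_j/(v_0-p)$ with $\psi_j$ a bump supported on cell $j$. We must check that $\phi_{B_j}$ stays nondecreasing. This is the point where a bump of height $\varepsilon\lesssim h^{2}$ relative to a curvature budget, or a rescaled tilt, may be needed. We will follow the standard dynamic-pricing construction in which the utility is a tent of slope $O(1)$, giving $\varepsilon\asymp h$.

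Any bid outside cell $j$ then loses $\gtrsim\varepsilon$ per round under $B_j$. Hence
\[
R_T(B_j)\gtrsim \varepsilon\,(T-\lE_j N_j),
\]
where $N_j$ counts the plays in cell $j$.

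\textbf{Information accounting.} This is the main obstacle. We bound $\KL(\P_0,\P_j)$ over the whole observation process. When the learner bids $p$:
\begin{itemize}
\item The win indicator differs between the two instances only if $p$ lies in cell $j$. This contributes $O(\varepsilon^2)$ per such play, the dynamic-pricing channel.
\item After a loss, the report $o=\max\{b,s\}$ reveals the law of $B$ on $(p,1]$ only on the event $\{s\le o\}$. That event has probability $\gamma$ on $J$, since the atom at $0$ makes $o=b$.
\end{itemize}
For a bid $p$ left of cell $j$, the report density differs between the instances by $\gamma\,\varepsilon\psi_j'/(v_0-p)$ on cell $j$. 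Because the report is a continuous variable, the per-round KL is
\[
\sum_{\text{cell}}\frac{(\gamma\varepsilon/h)^2}{\gamma f_B}\,h\;\asymp\;\gamma\,\varepsilon^2/h .
\]
The information in the mass is $O(\gamma\varepsilon^2)$, but the density derivative costs $1/h$. Summing over rounds gives
\[
\KL(\P_0,\P_j)\lesssim \varepsilon^2\,\lE_0 N_j+\frac{\gamma\varepsilon^2}{h}\,\lE_0 N_{<j}.
\]
The second term applies uniformly to all bids left of $j$, so it cannot be averaged away. To make the final scaling come out, we will try flattening the density derivative by choosing the bump so that the report-density perturbation is spread over an $O(1)$ scale. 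In that case the leak is $\gamma\varepsilon^2$ per round, while the local win/loss signal remains $\varepsilon^2$ only inside the cell.

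\textbf{Combining.} We apply Pinsker together with averaging over $j$, using $\sum_j\lE_0N_j\le T$:
\[
\frac1K\sum_j R_T(B_j)\gtrsim \varepsilon T\left(1-\frac1K-\sqrt{\varepsilon^2T/K}-\sqrt{\gamma'\varepsilon^2T}\right),
\]
where $\gamma'$ is the effective leak rate.
\begin{itemize}
\item If the leak term is negligible, choosing $K\asymp T^{1/3}$ and $\varepsilon\asymp K^{-1}$ gives $T^{2/3}$.
\item If the leak term binds, we need $\varepsilon\lesssim(\gamma' T)^{-1/2}$, and we also keep $\varepsilon^2T/K\lesssim1$. Tuning the effective leak to the cell scale, with $\gamma'\asymp\gamma^{1/2}$ after accounting for the $1/h$ density cost at the optimal $h$, gives $R\gtrsim\sqrt T\,\gamma^{-1/4}$.
\end{itemize}

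The hardest step is making the KL leakage exponent come out exactly as $\gamma^{1/2}$. We plan to do this by choosing $h\asymp\sqrt{\gamma}$-dependent cell widths and balancing the $\gamma\varepsilon^2/h$ leak against the $\varepsilon\asymp h$ bump height. The regret is then the minimum of the two regimes.
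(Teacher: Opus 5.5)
Your architecture matches the paper's. You use the flat base $F_{B_0}=c_0/(v_0-p)$ on $J$ and a two-atom shill with $F_S\equiv\gamma$ on $J$. Your KL accounting has a local win/loss term that averages to $h\varepsilon^2$ over the hidden cell and a uniform low-shill leak of $\gamma\varepsilon^2/h$, followed by Pinsker and averaging. Your conversion through $\lE_jN_j\le\lE_0N_j+T\,\mathrm{TV}(P_0,P_j)$ is an equivalent substitute for the paper's most-visited-cell estimator.

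What does not close is the tuning. You build $\varepsilon\asymp h$ into the construction and plan to balance the leak ``against the $\varepsilon\asymp h$ bump height''. But $\varepsilon\lesssim h$ is only the CDF-monotonicity constraint. If you enforce $\varepsilon\asymp h$, the budget $T(h^3+\gamma h)\lesssim 1$ yields only $\min\{T^{2/3},1/\gamma\}$. That is strictly weaker than $\sqrt T\gamma^{-1/4}$ whenever $\gamma>T^{-2/3}$.

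Your second bullet implicitly lands on the right choice, but the step that makes it valid is missing:
\begin{enumerate}
\item Minimize the averaged KL $T\varepsilon^2(h+\gamma/h)$ over $h$, which gives $h\asymp\sqrt\gamma$ (capped by a constant so that $K\ge 4$).
\item Set $\varepsilon\asymp T^{-1/2}\gamma^{-1/4}$.
\item Observe that the validity constraint $\varepsilon\lesssim h$ holds exactly when $\gamma\gtrsim T^{-2/3}$. This is where the regime boundary comes from.
\end{enumerate}
For $\gamma\le T^{-2/3}$, take $h\asymp\varepsilon\asymp T^{-1/3}$ and check that the leak $T\gamma\varepsilon^2/h\asymp\gamma T^{2/3}$ is $O(1)$. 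Your first bullet leaves this condition implicit. So there is nothing delicate about the $\gamma^{1/2}$ exponent.

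Drop the ``flattening'' detour, because it is impossible. Let $g=F_{B_j}-F_{B_0}$ be supported on a cell of width $h$ and reach height $\asymp\varepsilon$; you need this both for the utility bump and for a win/loss signal confined to the cell. Then Cauchy--Schwarz gives $\int (g')^2\ge(\int|g'|)^2/h\gtrsim\varepsilon^2/h$, so a leak of order $\gamma\varepsilon^2/h$ is forced. Spreading $g'$ over an $O(1)$ scale would delocalize the bump.

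Your concern about $\phi_{B_j}$ is well founded, but neither remedy you mention (a smaller height, or a tent of slope $O(1)$) addresses it. Since $u_B'=f_B(v-\phi_B)$, we have $\phi_B=v_0-u_B'/f_B$. This equals $v_0$ on the plateau, is below $v_0$ on the rising flank of the bump, and is above $v_0$ on the falling flank. Every cell has plateau on at least one side, so $\phi_B$ must decrease somewhere, whatever the bump's shape or height. Hence the planted instances cannot satisfy the literal monotonicity in \cref{ass:weak-rhr-buyer}.

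The paper's proof does not attempt this. It only checks that $u_a$, a flat plateau plus one unimodal bump, has interval upper level sets. That is the consequence of the assumption the algorithm actually uses (\cref{lem:rhr-implies-interval-level-sets}). You should verify that property rather than promise a monotone-$\phi_B$ fix.
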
    
\begin{sketchproof} The complete proof of this theorem can be found in \cref{app:proof-of-single-region-lower-bound}.
The construction has two ingredients. First, we choose a hard interval \(J\) on which the baseline utility is flat. More precisely, we take a buyer distribution \(F_0\) such that, for the fixed value \(v=1\), the utility \(u_0(p)=(1-p)F_0(p)\) is constant on \(J\). Thus, before perturbation, all bids in \(J\) are equally good. We then divide \(J\) into \(N\) cells of width \(h\) and choose one hidden cell \(A\) uniformly at random. In instance \(A=a\), we add a small smooth bump of height \(\varepsilon\) to the utility on cell \(a\). The resulting buyer distribution is still a valid CDF as long as \(\varepsilon\lesssim h\), and the utility still has interval upper level sets.

The learner must identify the planted cell to obtain small regret. Let \(P_a^\pi\) be the transcript law of policy \(\pi\) when the bump is in cell \(a\), and let \(P_0^\pi\) be the transcript law under the flat baseline. We compare every planted instance to this common reference law. By the chain rule for KL divergence over the adaptive transcript, the average reference KL satisfies \(\frac1N\sum_{a=1}^N \KL(P_0^\pi\|P_a^\pi)\lesssim T(h\varepsilon^2+\gamma\varepsilon^2/h)\). The first term comes from direct win/loss observations: the perturbation is supported on one cell, so after averaging over the random planted cell, a fixed bid falls in the perturbed region with probability of order \(h\). The second term comes from losing-side observations. These are informative only on the low-shill branch, which has probability \(\gamma\) on \(J\); on that branch, the local density perturbation has scale \(\varepsilon/h\), giving order \(\gamma\varepsilon^2/h\) per round. Hence the average information collected over \(T\) rounds is of order \(T(h\varepsilon^2+\gamma\varepsilon^2/h)\) (see \cref{fig:lower-bound}).

We choose the numerical constant in \(\varepsilon\) so that this average reference KL is bounded by a sufficiently small constant. Now let \(\phi(Z)\in[N]\) be any estimator of the planted cell, and write \(E_a=\{\phi(Z)=a\}\). Since the events \(E_a\) are disjoint and exhaustive under the baseline law, \(N^{-1}\sum_a P_0^\pi(E_a)=1/N\). Pinsker's inequality gives \(P_a^\pi(E_a)\le P_0^\pi(E_a)+\sqrt{\KL(P_0^\pi\|P_a^\pi)/2}\). Averaging over \(a\) and applying Jensen's inequality shows that, if the average reference KL is small enough and \(N\ge 4\), no estimator can recover the planted cell with probability better than a constant. Equivalently, every estimator fails to identify the planted cell with constant probability on average over the planted instances.

To turn this testing statement into regret, define \(\phi_\pi\) from the learner's actions as the cell whose good region is visited most often. If \(\phi_\pi\ne A\), then the learner must have bid outside the true good region on a constant fraction of the rounds. Each such bid loses \(\Omega(\varepsilon)\), so some planted instance has regret at least \(\Omega(T\varepsilon)\).

It remains to tune \(h\) and \(\varepsilon\). If \(\gamma\le T^{-2/3}\), take \(h\asymp\varepsilon\asymp T^{-1/3}\), which gives regret \(\Omega(T^{2/3})\). If \(\gamma>T^{-2/3}\), take \(h\asymp\sqrt\gamma\) and \(\varepsilon\asymp T^{-1/2}\gamma^{-1/4}\), which gives regret \(\Omega(\sqrt T\,\gamma^{-1/4})\). Combining the two regimes gives the stated lower bound. 
\end{sketchproof}

\begin{figure}
    \centering
    \includegraphics[width=.8\textwidth, page=1]{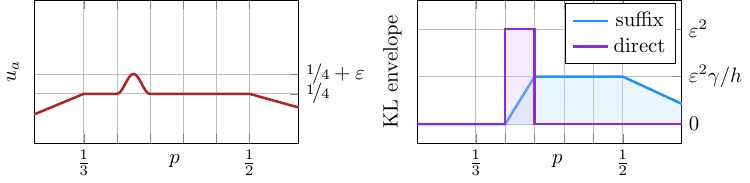}
    \caption{Schematic lower-bound instance and information profile. On the left, the utility $u_a$ is flat on the hard interval $J = [1/3, 1/2]$ apart from one hidden cell (in this case cell number $a = 2$ out of $N = 5$) which contains a bump of height $\varepsilon$. On the right, the order of the per-round KL envelope: direct local testing is based on the outcome of the auction and gives a strong signal of order $\varepsilon^2$, but only on the hidden cell; cross-learning leverages the shilled maximal bid $o_t$ and is of order $\gamma\varepsilon^2/h$, because we can extract information only on the \emph{low-shill} event, which occurs with probability $\gamma$, but let's us recover information spread over a suffix of larger bids.}
    \label{fig:lower-bound}
\end{figure}

The lower bound continues to hold when the learner knows the shill distribution \(F_S\). In the construction, \(F_S\) is the same for all planted alternatives: it places mass \(\gamma\) below the hard interval \(J\) and mass \(1-\gamma\) at \(1\), so \(F_S(q)=\gamma\) for every \(q\in J\). Thus revealing \(F_S\) only tells the learner how often losing-side reports are informative; it does not reveal which planted buyer distribution \(F_a\) is in force. Since the hidden index affects only \(F_B\), the KL comparison to the baseline and the subsequent Pinsker testing argument are unchanged.

\begin{corollary}
When $\rho = 1$, the upper bound gives \( \widetilde \cO (\min\{T^{2/3}, \sqrt T\,\gamma^{-1/4}\}) \).
The lower bound above matches this rate up to logarithmic factors. Hence the algorithm is minimax-optimal, up to logarithms, in the one-active-region case.
\end{corollary}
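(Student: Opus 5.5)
The corollary is a direct combination of the two main theorems, so the argument is bookkeeping on their statements. For the first claim I will specialize \cref{thm:upper-bound} to $\rho=1$. The factor $\rho^{1/4}$ then equals $1$, and the general bound $\widetilde O(\min\{T^{2/3},\rho^{1/4}\sqrt T\gamma^{-1/4}\})$ reduces to $\widetilde O(\min\{T^{2/3},\sqrt T\gamma^{-1/4}\})$. This is exactly the "in particular" clause of that theorem, so nothing new needs to be proved.

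For matching, I will check that the instance family in \cref{th:single-region-lower-bound} lies inside the class covered by the upper bound with $\rho=1$. I expect this to be the only point needing care. The conditions to verify are the following.
\begin{itemize}
\item \cref{ass:weak-rhr-buyer} holds for each planted $F_a$; the lower-bound theorem asserts this.
\item $F_S$ is known and identical across all alternatives; the remark after the sketch states this.
\item $F_S\equiv\gamma$ on $J$, so $(\gamma,C_S)$-regularity holds trivially on $J$, since adjacent increments are $0\le C_S\gamma h$.
\item The hardness is concentrated in the single interval $J$ at the single value $v=1$, so the active region relevant to the hard part is one interval, i.e.\ $\rho=1$.
\end{itemize}
If the algorithm's active sets might also involve regions where $F_S$ jumps (the mass $1-\gamma$ at $1$), I would restrict attention to the regular region $J$. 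Alternatively, I would note that the minimax statement only needs the upper bound to hold on a class that contains the hard instances up to constants; the lower bound is nonetheless stated over instances satisfying the same assumptions with the same parameter $\gamma$.

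With both bounds in hand, I conclude as follows. For every $\gamma$ and $T$, the worst-case regret of the algorithm over this class is $\widetilde O(\min\{T^{2/3},\sqrt T\gamma^{-1/4}\})$. Every policy incurs at least $c\min\{T^{2/3},\sqrt T\gamma^{-1/4}\}$ on some instance in the class. The ratio between the two is therefore polylogarithmic in $T$, coming from the choice $\delta=1/T$ and the grid $|\cV|=T$, and this is minimax optimality up to logarithms. The main obstacle is the class-consistency check in the previous paragraph, namely confirming that the covering by one active region and the regularity hypothesis genuinely hold for the planted instances. The asymptotic comparison itself is immediate.
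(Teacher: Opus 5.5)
Your route is the paper's: the corollary is just \cref{thm:upper-bound} at $\rho=1$ placed beside \cref{th:single-region-lower-bound}, and the paper gives no argument beyond that juxtaposition. You rightly single out the one substantive point: the hard instances must lie in the class on which the upper bound holds with the same $\gamma$. The check you sketch for it does not go through.

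The hypothesis of \cref{thm:upper-bound} concerns the algorithm's active sets, not the hard interval $J$. It requires $(\gamma,C_S)$-regularity on regions covering $\bigcup_{v\in\cV}A_m(v)$ at every epoch, and $\Gamma_m^{\adm}$ is tested over every $v\in\cV$. Values $v<1/16$ always have $A_m(v)\subseteq[0,v]$. If the low shill is spread out (e.g.\ uniform on $[0,1/8]$, which the construction allows), then $F_S<\gamma/2$ on these sets. So the candidate of \cref{lem:dyadic-gamma} is never admissible and the regularity hypothesis fails at every epoch. Moreover, $A_0(1)=[0,1]$ contains the shill atom at $1$, a jump of size $1-\gamma$, far above $C_S\gamma h_m$ for small $\gamma$. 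Only the fallback $\widetilde O(T^{2/3})$ is then guaranteed, and it does not match when $\gamma>T^{-2/3}$. ``Restricting attention to $J$'' is not an option, because the learner does not know $J$ and the hypothesis is stated on its active sets.

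Your first bullet has the same flaw: the construction does not literally satisfy \cref{ass:weak-rhr-buyer}. $F_{\base}$ has an atom at $1$ and zero density on $(1/2,1)$. Also, since $\phi_{F_a}(p)\le 1$ iff $u_a'(p)\ge 0$, the plateau adjacent to the bump forces $\phi_{F_a}$ to decrease at an edge of $I_a$. So citing the theorem's assertion is not a verification.

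What is missing is an argument through the proofs rather than the statements. Take $S^{\mathrm{low}}=\delta_0$. The lower-bound proof permits this, since it only uses that $S^{\mathrm{low}}$ lies below $J$ and has the same law in every instance, and it gives $F_S\equiv\gamma$ on $[0,1)$. Since $A_m(v)\subseteq[0,v]$, only $A_m(1)$ can contain the atom at $1$. Price $1$ has utility $0$ against $\max u_a\ge 1/4$, so on the good event it is eliminated after $O(1)$ epochs. Because \cref{lem:bound_n_m} holds epoch by epoch, those epochs can be charged to the robust length $C2^{3m}\log(2T/\delta)$, at total cost $O(\log(T/\delta))$. Afterwards every active set lies in the single region $[0,1)$, on which $F_S$ is $(\gamma,C_S)$-regular for any $C_S$, so $\rho=1$.

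On the buyer side, replace the appeal to \cref{ass:weak-rhr-buyer} by a direct check of what the upper-bound analysis actually uses. That is interval upper level sets (\cref{lem:rhr-implies-interval-level-sets} and the convex-hull step) and a unique maximizer. Both hold: $u_a$ increases on $[0,1/3]$, is flat on $J$ except for one unimodal bump, and decreases on $[1/2,1]$. With these repairs your final comparison of rates is correct.
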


A more general packed-region upper bound would depend on the number of distinct active bid regions, mimicking the structure in \cref{sec:upper-bound}. It is tempting to try to match it by replicating the hidden-cell construction on several disjoint bid regions. but this is not immediate in the max-shilling model as a bid in a lower active region may, through the informative losing-side report, reveal information about higher bid regions without causing additional regret. Thus off-region information cannot simply be charged to regret. For this reason, the matching lower bound for multiple packed active regions is left as a separate question.

\section{Conclusion}

We study repeated first-price auctions in which shilling affects feedback but not allocation. Our analysis shows why this problem is technically demanding as useful information is present, but it is hidden inside correlated path-like observations whose reliability depends on the shill distribution, the active bid geometry and the current localization scale. The algorithm addresses this by combining interval elimination, weighted least-squares certificates, validation and racing over unknown scales. This should be viewed as a first step rather than a complete theory of feedback manipulation in auctions. Extending these ideas to richer shilling models, unknown shill distributions, broader auction formats and more general active-region geometries remains open, but the results suggest that carefully exploiting the structure of manipulated feedback can recover much more than a purely robust approach would allow.



\section*{Acknowledgements}
LF acknowledges support by the MUR PRIN grant 2022EKNE5K (Learning in Markets and Society), by the FAIR (Future Artificial Intelligence Research) project, funded by the NextGenerationEU program within the PNRR-PE-AI scheme (M4C2, investment 1.3, line on Artificial Intelligence), and by the EU Horizon CL4-2022-HUMAN-02 research and innovation action under grant agreement 101120237, project ELIAS (European Lighthouse of AI for Sustainability). VP’s research was supported in part by the French National Research Agency (ANR) in the framework of the PEPR IA FOUNDRY project (ANR-23-PEIA-0003) and through the grant DOOM ANR-23-CE23-0002. It was also funded by the European Union (ERC, Ocean, 101071601).


\clearpage
\bibliography{references}

@misc{komo_shill-proof_2024,
    title = {Shill-{Proof} {Auctions}},
    url = {http://arxiv.org/abs/2404.00475},
    doi = {10.48550/arXiv.2404.00475},
    urldate = {2025-07-10},
    publisher = {arXiv},
    author = {Komo, Andrew and Kominers, Scott Duke and Roughgarden, Tim},
    month = nov,
    year = {2024},
    note = {arXiv:2404.00475 [econ]},
}

@inproceedings{yang_adversarial_2020,
    title = {Adversarial {Bandits} with {Corruptions}: {Regret} {Lower} {Bound} and {No}-regret {Algorithm}},
    volume = {33},
    url = {https://proceedings.neurips.cc/paper_files/paper/2020/file/e655c7716a4b3ea67f48c6322fc42ed6-Paper.pdf},
    booktitle = {Advances in {Neural} {Information} {Processing} {Systems}},
    publisher = {Curran Associates, Inc.},
    author = {yang, lin and Hajiesmaili, Mohammad and Talebi, Mohammad Sadegh and Lui, John C. S. and Wong, Wing Shing},
    editor = {Larochelle, H. and Ranzato, M. and Hadsell, R. and Balcan, M. F. and Lin, H.},
    year = {2020},
    pages = {19943--19952},
}

@misc{wang_learning_2023,
    title = {Learning against {Non}-credible {Auctions}},
    url = {http://arxiv.org/abs/2311.15203},
    doi = {10.48550/arXiv.2311.15203},
    urldate = {2025-10-24},
    publisher = {arXiv},
    author = {Wang, Qian and Xia, Xuanzhi and Yang, Zongjun and Deng, Xiaotie and Kong, Yuqing and Zhang, Zhilin and Wang, Liang and Yu, Chuan and Xu, Jian and Zheng, Bo},
    month = nov,
    year = {2023},
    note = {arXiv:2311.15203 [cs]},
}

@article{akbarpour_credible_2020,
    title = {Credible {Auctions}: {A} {Trilemma}},
    volume = {88},
    issn = {0012-9682},
    shorttitle = {Credible {Auctions}},
    url = {https://www.econometricsociety.org/doi/10.3982/ECTA15925},
    doi = {10.3982/ECTA15925},
    language = {en},
    number = {2},
    urldate = {2025-10-20},
    journal = {Econometrica},
    author = {Akbarpour, Mohammad and Li, Shengwu},
    year = {2020},
    pages = {425--467},
}

@article{han2020optimal,
  title={Optimal no-regret learning in repeated first-price auctions},
  author={Han, Yanjun and Zhou, Zhengyuan and Weissman, Tsachy},
  journal={arXiv preprint arXiv:2003.09795},
  year={2020}
}

@misc{zhang_leveraging_2022,
    title = {Leveraging the {Hints}: {Adaptive} {Bidding} in {Repeated} {First}-{Price} {Auctions}},
    shorttitle = {Leveraging the {Hints}},
    url = {http://arxiv.org/abs/2211.06358},
    doi = {10.48550/arXiv.2211.06358},
    urldate = {2025-11-07},
    publisher = {arXiv},
    author = {Zhang, Wei and Han, Yanjun and Zhou, Zhengyuan and Flores, Aaron and Weissman, Tsachy},
    month = nov,
    year = {2022},
    note = {arXiv:2211.06358 [cs]},
}

@misc{gupta_better_2019,
    title = {Better {Algorithms} for {Stochastic} {Bandits} with {Adversarial} {Corruptions}},
    url = {http://arxiv.org/abs/1902.08647},
    doi = {10.48550/arXiv.1902.08647},
    urldate = {2025-11-07},
    publisher = {arXiv},
    author = {Gupta, Anupam and Koren, Tomer and Talwar, Kunal},
    month = mar,
    year = {2019},
    note = {arXiv:1902.08647 [cs]},
}

@misc{alon_online_2015,
    title = {Online {Learning} with {Feedback} {Graphs}: {Beyond} {Bandits}},
    shorttitle = {Online {Learning} with {Feedback} {Graphs}},
    url = {http://arxiv.org/abs/1502.07617},
    doi = {10.48550/arXiv.1502.07617},
    urldate = {2026-02-12},
    publisher = {arXiv},
    author = {Alon, Noga and Cesa-Bianchi, Nicolò and Dekel, Ofer and Koren, Tomer},
    month = feb,
    year = {2015},
    note = {arXiv:1502.07617 [cs]},
}

@misc{lykouris_stochastic_2018,
    title = {Stochastic bandits robust to adversarial corruptions},
    url = {http://arxiv.org/abs/1803.09353},
    doi = {10.48550/arXiv.1803.09353},
    urldate = {2026-02-16},
    publisher = {arXiv},
    author = {Lykouris, Thodoris and Mirrokni, Vahab and Leme, Renato Paes},
    month = mar,
    year = {2018},
    note = {arXiv:1803.09353 [cs]},
}

@article{lu_stochastic_2021,
    title = {Stochastic {Graphical} {Bandits} with {Adversarial} {Corruptions}},
    volume = {35},
    issn = {2374-3468, 2159-5399},
    url = {https://ojs.aaai.org/index.php/AAAI/article/view/17060},
    doi = {10.1609/aaai.v35i10.17060},
    number = {10},
    urldate = {2026-02-16},
    journal = {Proceedings of the AAAI Conference on Artificial Intelligence},
    author = {Lu, Shiyin and Wang, Guanghui and Zhang, Lijun},
    month = may,
    year = {2021},
    pages = {8749--8757},
}

@article{fan2024policy,
  title={Policy optimization using semiparametric models for dynamic pricing},
  author={Fan, Jianqing and Guo, Yongyi and Yu, Mengxin},
  journal={Journal of the American Statistical Association},
  volume={119},
  number={545},
  pages={552--564},
  year={2024},
  publisher={Taylor \& Francis}
}

@article{javanmard2019dynamic,
  title={Dynamic pricing in high-dimensions},
  author={Javanmard, Adel and Nazerzadeh, Hamid},
  journal={Journal of Machine Learning Research},
  volume={20},
  number={9},
  pages={1--49},
  year={2019}
}

@misc{balseiroContextualBanditsCrosslearning2021,
    title = {Contextual {Bandits} with {Cross}-learning},
    url = {http://arxiv.org/abs/1809.09582},
    doi = {10.48550/arXiv.1809.09582},
    urldate = {2026-04-26},
    publisher = {arXiv},
    author = {Balseiro, Santiago and Golrezaei, Negin and Mahdian, Mohammad and Mirrokni, Vahab and Schneider, Jon},
    month = nov,
    year = {2021},
    note = {arXiv:1809.09582 [cs]},
}

@inproceedings{achddouFastRateLearning2021,
    series = {Proceedings of machine learning research},
    title = {Fast rate learning in stochastic first price bidding},
    volume = {157},
    url = {https://proceedings.mlr.press/v157/achddou21a.html},
    booktitle = {Proceedings of the 13th asian conference on machine learning},
    publisher = {PMLR},
    author = {Achddou, Juliette and Cappé, Olivier and Garivier, Aurélien},
    editor = {Balasubramanian, Vineeth N. and Tsang, Ivor},
    month = nov,
    year = {2021},
    pages = {1754--1769},
}

@inproceedings{kleinbergValueKnowingDemand2003,
    address = {Cambridge, MA, USA},
    title = {The value of knowing a demand curve: bounds on regret for online posted-price auctions},
    isbn = {978-0-7695-2040-7},
    shorttitle = {The value of knowing a demand curve},
    url = {http://ieeexplore.ieee.org/document/1238232/},
    doi = {10.1109/SFCS.2003.1238232},
    urldate = {2024-10-18},
    booktitle = {44th {Annual} {IEEE} {Symposium} on {Foundations} of {Computer} {Science}, 2003. {Proceedings}.},
    publisher = {IEEE Computer. Soc},
    author = {Kleinberg, R. and Leighton, T.},
    year = {2003},
    pages = {594--605},
}

@article{besbesDynamicPricingKnowing2009,
    title = {Dynamic {Pricing} {Without} {Knowing} the {Demand} {Function}: {Risk} {Bounds} and {Near}-{Optimal} {Algorithms}},
    volume = {57},
    issn = {0030-364X, 1526-5463},
    shorttitle = {Dynamic {Pricing} {Without} {Knowing} the {Demand} {Function}},
    url = {https://pubsonline.informs.org/doi/10.1287/opre.1080.0640},
    doi = {10.1287/opre.1080.0640},
    language = {en},
    number = {6},
    urldate = {2026-05-02},
    journal = {Operations Research},
    author = {Besbes, Omar and Zeevi, Assaf},
    month = dec,
    year = {2009},
    pages = {1407--1420},
}

@inproceedings{kleinbergNearlyTightBounds2004,
    title = {Nearly tight bounds for the continuum-armed bandit problem},
    url = {https://api.semanticscholar.org/CorpusID:15364316},
    booktitle = {Neural information processing systems},
    author = {Kleinberg, Robert D.},
    year = {2004},
}

@inproceedings{cesa-bianchiDynamicPricingFinitely2019,
    series = {Proceedings of machine learning research},
    title = {Dynamic pricing with finitely many unknown valuations},
    volume = {98},
    url = {https://proceedings.mlr.press/v98/cesa-bianchi19a.html},
    booktitle = {Proceedings of the 30th international conference on algorithmic learning theory},
    publisher = {PMLR},
    author = {Cesa-Bianchi, Nicolò and Cesari, Tommaso and Perchet, Vianney},
    editor = {Garivier, Aurélien and Kale, Satyen},
    month = mar,
    year = {2019},
    pages = {247--273},
}

@misc{alonBanditsExpertsTale2013,
    title = {From {Bandits} to {Experts}: {A} {Tale} of {Domination} and {Independence}},
    shorttitle = {From {Bandits} to {Experts}},
    url = {http://arxiv.org/abs/1307.4564},
    doi = {10.48550/arXiv.1307.4564},
    urldate = {2026-02-13},
    publisher = {arXiv},
    author = {Alon, Noga and Cesa-Bianchi, Nicolò and Gentile, Claudio and Mansour, Yishay},
    month = jul,
    year = {2013},
    note = {arXiv:1307.4564 [cs]},
}

@inproceedings{erezBestofallworldsOnlineLearning2021,
    title = {Towards best-of-all-worlds online learning with feedback graphs},
    volume = {34},
    url = {https://proceedings.neurips.cc/paper_files/paper/2021/file/ef8f94395be9fd78b7d0aeecf7864a03-Paper.pdf},
    booktitle = {Advances in neural information processing systems},
    publisher = {Curran Associates, Inc.},
    author = {Erez, Liad and Koren, Tomer},
    editor = {Ranzato, M. and Beygelzimer, A. and Dauphin, Y. and Liang, P.S. and Vaughan, J. Wortman},
    year = {2021},
    pages = {28511--28521},
}

@incollection{boucheron2003concentration,
  title={Concentration inequalities},
  author={Boucheron, St{\'e}phane and Lugosi, G{\'a}bor and Bousquet, Olivier},
  booktitle={Summer school on machine learning},
  pages={208--240},
  year={2003},
  publisher={Springer}
}
\bibliographystyle{abbrvnat}

\clearpage

\appendix

\crefalias{section}{appendix}
\crefalias{subsection}{appendix}
\crefalias{subsubsection}{appendix}

\section{Proofs of Section \ref{sec:model}}
\label{app:proofs-model}

\subsection{Proof of Lemma \ref{lem:rhr-decomposition}}
\label{app:proof-of-rhr-decomposition}

\rhrdecomposition*

\begin{proof}
    First, we can write the PDF of $O$ for any $p \in [0, 1]$ as
    \begin{multline*}
        f_O(p) 
        = \P{O = p} 
        = \P{\max\{B, S\} = p}
        \\ = \P{B = p \wedge S < p} + \P{S = p \wedge B < p}
        = f_B(p) F_S(p) + f_S(p) F_B(p),
    \end{multline*}
    then, because $F_O(p) = F_B(p)F_S(p)$,
    \[
        r_O(p)
        = \frac{f_O(p)}{F_O(p)}
        = \frac{f_B(p) F_S(p) + f_B(p) F_S(p)}{F_B(p)F_S(p)}
        = r_B(p) + r_S(p).
        \qedhere
    \]
\end{proof}

\subsection{Proof of Lemma \ref{lem:optima-under-shilling}}
\label{app:proof-of-optima-under-shilling}

\optimaundershilling*
    
\begin{proof}
For all $p\in[0,1]$ we have $u_O(p)=u_B(p)F_S(p)$. Let $p_B^{\max}=\max A_B$. For any $p\le p_B^{\max}$,
\[
u_B(p)\le u_B(p_B^{\max}),\qquad F_S(p)\le F_S(p_B^{\max}),
\]
because $F_S$ is non-decreasing, hence
\[
u_O(p)=u_B(p)F_S(p)\le u_B(p_B^{\max})F_S(p_B^{\max})=u_O(p_B^{\max}).
\]
Thus no $p\le p_B^{\max}$ can strictly beat $p_B^{\max}$ under $u_O$. If there exists $p>p_B^{\max}$ with $u_O(p)>u_O(p_B^{\max})$, then every maximizer of $u_O$ lies in $(p_B^{\max},1]$ and $p_O^{\max}>p_B^{\max}$. Otherwise, $p_B^{\max}\in A_O$ and therefore $p_O^{\max}\ge p_B^{\max}$. This proves $p_O^{\max}\ge p_B^{\max}$.

For the second claim, fix $p\in[0,p_B^{\max}]$ and assume $p\in A_O$. Then $u_O(p)=\max_{q}u_O(q)\ge u_O(p_B^{\max})$, while the inequality above gives $u_O(p)\le u_O(p_B^{\max})$. Hence $u_O(p)=u_O(p_B^{\max})$, i.e.
\[
u_B(p)F_S(p)=u_B(p_B^{\max})F_S(p_B^{\max}).
\]
Since $u_B(p)\le u_B(p_B^{\max})$ and $F_S(p)\le F_S(p_B^{\max})$, equality implies $u_B(p)=u_B(p_B^{\max})$ and $F_S(p)=F_S(p_B^{\max})$. The first identity yields $p\in A_B$, proving the stated implication.
\end{proof}

\subsection{Proof of Lemma \ref{lem:single-optima-under-shilling}}
\label{app:proof-of-single-optima-under-shilling}

\singleoptimaundershilling*
\begin{proof}
By \cref{lem:optima-under-shilling}, $ p_O^{\max}\ge p_B^*$. It remains to rule out equality. Since $ p_B^*$ is an interior maximizer of $ u_B$, we have $ u_B'(p_B^*) = 0$. Also $u_O(p)=u_B(p)F_S(p)$,
so
\[
u_O'(p_B^*)
=
u_B'(p_B^*)F_S(p_B^*)+u_B(p_B^*)f_S(p_B^*)
=
u_B(p_B^*)f_S(p_B^*).
\]
At a nontrivial optimum, $ u_B(p_B^*)>0$. Since $ f_S(p_B^*)>0$, we get $ u_O'(p_B^*)>0$. Therefore $ p_B^*$ cannot be a maximizer of $ u_O$, and since $ p_O^{\max}\ge p_B^*$, we must have $ p_O^{\max}>p_B^*$.
\end{proof}



\subsection{Proof of Lemma \ref{lem:mixture-optimum}}
\label{app:proof-of-mixture-optimum}

\mixtureoptimum*

\begin{proof}
Fix \(v\in[0,1]\). For every \(\lambda\in[0,1]\), recall that
\[
    F_\lambda(p)
    =
    \lambda F_B(p)+(1-\lambda)F_O(p)
    =
    F_B(p)G_\lambda(p),
    \qquad
    G_\lambda(p)\coloneq \lambda+(1-\lambda)F_S(p).
\]
Hence \(u_\lambda(p)=u_B(p)G_\lambda(p)\), and \(G_\lambda\) is nondecreasing because \(F_S\) is nondecreasing.

We first prove that \(p_\lambda^{\max}(v)\ge p_B^{\max}(v)\). This is exactly the same comparison as in \cref{lem:optima-under-shilling}, with \(F_S\) replaced by \(G_\lambda\). Indeed, for every \(p\le p_B^{\max}(v)\), we have \(u_B(p)\le u_B(p_B^{\max}(v))\) by definition of \(p_B^{\max}(v)\), and \(G_\lambda(p)\le G_\lambda(p_B^{\max}(v))\) because \(G_\lambda\) is nondecreasing. Therefore
\[
    u_\lambda(p)
    =
    u_B(p)G_\lambda(p)
    \le
    u_B(p_B^{\max}(v))G_\lambda(p_B^{\max}(v))
    =
    u_\lambda(p_B^{\max}(v)).
\]
Thus no point to the left of \(p_B^{\max}(v)\) can strictly beat \(p_B^{\max}(v)\) under \(u_\lambda\). It follows that the rightmost maximizer of \(u_\lambda\) satisfies \(p_\lambda^{\max}(v)\ge p_B^{\max}(v)\).

It remains to prove monotonicity in \(\lambda\). Let \(0\le \lambda_1\le \lambda_2\le 1\). We will show that \(p_{\lambda_1}^{\max}(v)\ge p_{\lambda_2}^{\max}(v)\). Since
\[
    u_{\lambda_1}(p)
    =
    u_{\lambda_2}(p)
    \frac{G_{\lambda_1}(p)}{G_{\lambda_2}(p)},
\]
it is enough to show that the ratio \(G_{\lambda_1}(p)/G_{\lambda_2}(p)\) is nondecreasing in \(p\). Write \(x=F_S(p)\in[0,1]\) and define
\[
    R(x)
    \coloneq
    \frac{\lambda_1+(1-\lambda_1)x}
         {\lambda_2+(1-\lambda_2)x}.
\]
The denominator is positive whenever \(G_{\lambda_2}(p)>0\); on points where it is zero, both \(G_{\lambda_1}\) and \(G_{\lambda_2}\) are zero and the value of the ratio is irrelevant for the objective. On the positive region, differentiating gives
\[
    R'(x)
    =
    \frac{(1-\lambda_1)(\lambda_2+(1-\lambda_2)x)
    -(1-\lambda_2)(\lambda_1+(1-\lambda_1)x)}
    {(\lambda_2+(1-\lambda_2)x)^2}
    =
    \frac{\lambda_2-\lambda_1}
    {(\lambda_2+(1-\lambda_2)x)^2}
    \ge 0.
\]
Since \(F_S\) is nondecreasing in \(p\), the map \(p\mapsto G_{\lambda_1}(p)/G_{\lambda_2}(p)\) is also nondecreasing.

Now apply the same rightmost-maximizer comparison again, but with \(u_{\lambda_2}\) as the baseline objective and \(G_{\lambda_1}/G_{\lambda_2}\) as the nondecreasing multiplier. For every \(p\le p_{\lambda_2}^{\max}(v)\), we have \(u_{\lambda_2}(p)\le u_{\lambda_2}(p_{\lambda_2}^{\max}(v))\), and we also have
\[
    \frac{G_{\lambda_1}(p)}{G_{\lambda_2}(p)}
    \le
    \frac{G_{\lambda_1}(p_{\lambda_2}^{\max}(v))}
         {G_{\lambda_2}(p_{\lambda_2}^{\max}(v))}.
\]
Therefore \(u_{\lambda_1}(p)\le u_{\lambda_1}(p_{\lambda_2}^{\max}(v))\) for every \(p\le p_{\lambda_2}^{\max}(v)\). Hence no point to the left of \(p_{\lambda_2}^{\max}(v)\) can strictly beat \(p_{\lambda_2}^{\max}(v)\) under \(u_{\lambda_1}\), and so \(p_{\lambda_1}^{\max}(v)\ge p_{\lambda_2}^{\max}(v)\).

Taking \(\lambda_2=1\) gives \(p_\lambda^{\max}(v)\ge p_1^{\max}(v)=p_B^{\max}(v)\), and taking \(\lambda_1=0\), \(\lambda_2=\lambda\), gives \(p_0^{\max}(v)=p_O^{\max}(v)\ge p_\lambda^{\max}(v)\). Therefore
\[
    p_B^{\max}(v)
    =
    p_1^{\max}(v)
    \le
    p_\lambda^{\max}(v)
    \le
    p_0^{\max}(v)
    =
    p_O^{\max}(v),
\]
as claimed.
\end{proof}

\section{Algorithm for Known $F_s$}\label{sec:algorithm}

\Cref{alg:known_FS} details the elimination procedure that allows to recover the regret rates described in \cref{thm:upper-bound}. The algorithm runs in epochs. The variable \(t\) denotes global time, \(n_m\) the instant of time at which \(m\) starts, and \(\tau_m\) counts the rounds elapsed within that epoch. A different grid of prices is considered during each epoch, specifically during epoch $m$, the grid $\cG_m$ counts a total of $2^m$ evenly-spaced points chosen in the active set region $\cA^m(v)$, and it holds that $\cG_m \subset \cG_{m+1}$.  For each \(v\in\mathcal V\), the active set \(A_m(v)\subseteq[0,v]\) is a continuous interval, initially identified with the whole space. All sampling, estimation, and elimination comparisons are performed on the finite set \(A_m(v)\cap\mathcal G_m\).

Throughout the interaction, the algorithm maintains two different certificates, a robust one, \(\mathsf C^{\rob}_t=(N_t,\widehat F^{\rob}_t,\widehat\Delta^{\rob}_t,r_{\rob}(t,\cdot))\), which relies only on direct win/loss observations; and an optimistic one \(\mathsf C^{\opt}_t=(G_t,z_t,\widehat F^{\opt}_t,\widehat\Delta^{\opt}_t,r_{\opt}(t,\cdot))\) that uses the weighted least-squares construction based on direct and suffix-difference measurements.

For the gaps we use the convention \(\Delta(v,q,p)=u_B(v,q)-u_B(v,p)\), such that a positive gap implies that \(q\) improves over \(p\).
At the end of the epoch, the indicator \(\iota_m\) records which certificate triggered elimination: \(\iota_m=1\) for the optimistic certificate and \(\iota_m=0\) for the robust certificate. 

\subsection{Dyadic validation over the shill scale}
\label{sec:dyadic-gamma}

At epoch \(m\), the grid \(\mathcal G_m\) has mesh \(h_m\asymp 2^{-m}\). The algorithm therefore does not race over price grids. It only validates the unknown shill scale through the dyadic set \( \Gamma=\{2^{-\ell}:0\le \ell\le \lceil\log_2 T\rceil\} \). For every \(\bar\gamma\in\Gamma\), define
\[
    \omega_{m,\bar\gamma}
    \coloneq 
    \frac{\bar\gamma}{C_{\suf}(2^{-m}+h_m)}.
\]
The optimistic routine uses direct and suffix statistics, using candidate \(\bar\gamma\) define
\[
    G_t^{\bar\gamma}
    =
    G_t^{\dir}+\omega_{m,\bar\gamma}G_t^{\suf},
    \qquad
    z_t^{\bar\gamma}
    =
    z_t^{\dir}+\omega_{m,\bar\gamma}z_t^{\suf}
    \quad \text{and} \quad
    \widehat F_t^{\opt,\bar\gamma}
    =
    (G_t^{\bar\gamma})^\dagger z_t^{\bar\gamma}.
\]
For a fixed validation constant \(C_{\val}\ge 2C_S\), define the admissible candidate set at epoch \(m\) by
\begin{multline*}
\Gamma_m^{\adm}
\coloneq 
\Big\{
    \bar\gamma\in\Gamma:
    F_S(q) \ge \bar\gamma \text{ and } F_S(q_{j+1})-F_S(q_j) \le C_{\val}\bar\gamma h_m
\\
    \text{ for every }v\in\cV \text{ and every } q_j,q_{j+1}\in A_m(v)\cap\cG_m
\Big\}.
\end{multline*}
Only candidates in \(\Gamma_m^{\adm}\) are allowed to validate. Candidates outside
\(\Gamma_m^{\adm}\) are ignored by the optimistic validation step, equivalently their
optimistic radius is set to \(+\infty\).

\begin{lemma}[Dyadic admissible shill-scale candidate]
\label{lem:dyadic-gamma}
Fix an epoch \(m\), and suppose that \(F_S\) is \((\gamma,C_S)\)-regular on the active regions considered by the optimistic branch. Let \(C_{\val}\ge 2C_S\). If \(\gamma\ge 1/T\), then there exists \(\bar\gamma\in\Gamma_m^{\adm}\) such that
\[
    \bar\gamma\le \gamma\le 2\bar\gamma.
\]
For this candidate,
\[
    \frac12\omega_m^\star
    \le
    \omega_{m,\bar\gamma}
    \le
    \omega_m^\star,
    \qquad
    \omega_m^\star
    \coloneq 
    \frac{\gamma}{C_{\suf}(2^{-m}+h_m)}.
\]
\end{lemma}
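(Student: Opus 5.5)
The plan is to pick the dyadic candidate directly and then check its two properties. Since $\gamma\in[1/T,1]$, set $\ell\coloneq\lceil\log_2(1/\gamma)\rceil$ and $\bar\gamma\coloneq 2^{-\ell}$. Then $0\le\ell\le\lceil\log_2 T\rceil$, so $\bar\gamma\in\Gamma$, and by construction $2^{-\ell}\le\gamma<2^{-\ell+1}$, that is, $\bar\gamma\le\gamma\le 2\bar\gamma$. The bracket on the weight then follows at once: $\omega_{m,\bar\gamma}$ and $\omega_m^\star$ share the denominator $C_{\suf}(2^{-m}+h_m)$ and are linear in their numerators, so
\[
\tfrac12\omega_m^\star\le\omega_{m,\bar\gamma}\le\omega_m^\star
\]
is just $\tfrac12\gamma\le\bar\gamma\le\gamma$ divided through by that common positive quantity.

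It remains to show $\bar\gamma\in\Gamma_m^{\adm}$. Fix $v\in\cV$ and adjacent points $q_j<q_{j+1}$ in $A_m(v)\cap\cG_m$. By hypothesis, $A_m(v)$ lies in one of the active regions on which $F_S$ is $(\gamma,C_S)$-regular, and $A_m(v)\cap\cG_m$ is a set of consecutive grid points of mesh $h_m$. This lets me invoke \cref{ass:local-shill-regularity} on it.

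The lower-bound condition gives $F_S(q)\ge\gamma\ge\bar\gamma$ for every $q$ in the set. The increment condition gives
\[
F_S(q_{j+1})-F_S(q_j)\le C_S\gamma h_m\le 2C_S\bar\gamma h_m\le C_{\val}\bar\gamma h_m,
\]
using $\gamma\le 2\bar\gamma$ and $C_{\val}\ge 2C_S$. Since $v$ and $j$ were arbitrary, $\bar\gamma\in\Gamma_m^{\adm}$.

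The only delicate step is matching the regularity hypothesis to the exact grid and mesh used in $\Gamma_m^{\adm}$. Regularity is stated for intervals of a grid $G$ with mesh $h$, while admissibility uses $\cG_m$ with mesh $h_m$ and all adjacent pairs of every $A_m(v)\cap\cG_m$. I will read the hypothesis of the lemma as regularity with respect to $\cG_m$ and its mesh $h_m$ on each covering region. I will also note that adjacent points of $A_m(v)\cap\cG_m$ are adjacent in the region's grid, because $A_m(v)$ is an interval and hence its grid points are consecutive.

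The factor $2$ lost in rounding $\gamma$ down to a dyadic value is exactly what $C_{\val}\ge 2C_S$ absorbs. The assumption $\gamma\ge1/T$ is needed only to keep $\ell$ within the range of $\Gamma$.
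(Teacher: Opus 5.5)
Your proof is correct and follows the paper's argument: you pick the dyadic $\bar\gamma\in\Gamma$ with $\bar\gamma\le\gamma\le 2\bar\gamma$ (which $\gamma\ge 1/T$ guarantees), get admissibility from $(\gamma,C_S)$-regularity via $C_S\gamma h_m\le 2C_S\bar\gamma h_m\le C_{\val}\bar\gamma h_m$, and obtain the weight bracket by dividing by the common factor $C_{\suf}(2^{-m}+h_m)$. Your explicit choice $\ell=\lceil\log_2(1/\gamma)\rceil$, and your remark that the regularity hypothesis is read on $\cG_m$ with mesh $h_m$ over each $A_m(v)\cap\cG_m$, make explicit what the paper's short proof leaves implicit.
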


\begin{proof}
Since \(\Gamma\) is dyadic and contains all powers \(2^{-\ell}\) down to \(1/T\), for every
\(\gamma\in[1/T,1]\) there exists \(\bar\gamma\in\Gamma\) with
\(\bar\gamma\le\gamma\le2\bar\gamma\). By \((\gamma,C_S)\)-regularity,
\(F_S(q)\ge\gamma\ge\bar\gamma\) on the active grid. For every adjacent pair
\(q_j,q_{j+1}\),
\[
F_S(q_{j+1})-F_S(q_j)
\le C_S\gamma h_m
\le 2C_S\bar\gamma h_m
\le C_{\val}\bar\gamma h_m.
\]
Thus \(\bar\gamma\in\Gamma_m^{\adm}\). Dividing
\(\bar\gamma\le\gamma\le2\bar\gamma\) by \(C_{\suf}(2^{-m}+h_m)\) gives the displayed comparison between \(\omega_{m,\bar\gamma}\) and \(\omega_m^\star\).
\end{proof}
Next, we provide a closed expression for $B_t^{\bar \gamma}(v)$, which is computed by the algorithm. For candidate \(\bar\gamma\in\Gamma_m^{\adm}\), let \(\mathcal S_t^{\dir}\) be the set of direct measurements collected up to time \(t\), and let \(\mathcal S_t^{\suf}\) be the set of suffix-difference measurements collected up to time \(t\). For each measurement \(s\in\mathcal S_t^{\dir}\cup\mathcal S_t^{\suf}\), define
\[
\alpha_s^{\bar\gamma}
=
\begin{cases}
1, & s\in\mathcal S_t^{\dir},\\
\omega_{m,\bar\gamma}, & s\in\mathcal S_t^{\suf},
\end{cases}
\qquad
R_s^{\bar\gamma}
=
\begin{cases}
1, & s\in\mathcal S_t^{\dir},\\
4/\bar\gamma, & s\in\mathcal S_t^{\suf}.
\end{cases}
\]
Then
\begin{equation}
\label{eq:range-cert-gamma}
B_t^{\bar\gamma}(v)
\coloneq 
\max_{q_i,q_j\in A_m(v)\cap\cG_m}
\max_{s\in\mathcal S_t^{\dir}\cup\mathcal S_t^{\suf}}
\alpha_s^{\bar\gamma}
\left|
g_{v,i,j}^{\top}
(G_t^{\bar\gamma})^\dagger
\Phi_s^\top
\right|
R_s^{\bar\gamma}.
\end{equation}

\subsection{Estimate of the average maximal confidence radius}
The algorithm maintains two quantities $\hat{r}_{\text{opt}}(t)$ and $\hat{r}_{\text{rob}}(t)$, which estimate the average over values $v$ of the largest confidence interval around the gap estimates $\widehat{\Delta}^{\text{opt}}$ and $\widehat{\Delta}^{\text{rob}}$ respectively.
During $m+1$ epoch, denoting $n_m$ the instant of time when the previous epoch ended, they are formally defined as 
\begin{align}\label{eq:har_r_rob}
    \widehat r_{\rob}(t) = \frac{1}{t-n_m}\sum_{s=n_m}^t \max_{q \in \cA^m(v_s)\cap \cG_m}\sqrt{\frac{2\log(2T / \delta)}{N_t(q)}}
\end{align}
where $N_t(q)$ is the counter of the amount of times price $q$ was played during the epoch. Similarly
\begin{align}\label{eq:har_r_opt}
  \widehat r_{\opt}(t) = \frac{1}{t-n_m}\sum_{s=n_m}^t \max_{q \in \cA^m(v_s)\cap \cG_m}\sqrt{2\log(2T |\Gamma| / \delta)Q^{\bar\gamma}_t(v)}
            +\frac23\log(2T |\Gamma| / \delta)B^{\bar\gamma}_t(v)
\end{align}
where $B_t^{\bar\gamma}(v)$ as in \cref{eq:range-cert-gamma} and \(Q^{\bar\gamma}_t(v) = \max_{q_i,q_j\in A_m(v)\cap\cG_m} g_{v,i,j}^{\top}(G^{\bar\gamma})^\dagger g_{v,i,j}\)\;

The need to average over time allows the algorithm to correctly weight each context depending on the frequency with which the learner observes it.

\begin{algorithm}
\caption{Shill-proof algorithm for known \(F_S\)}
\label{alg:known_FS}
\DontPrintSemicolon
\SetKwInOut{Input}{Input}
\SetKwInOut{Setup}{Setup}

\Input{\(T\), \(\delta\), value grid \(\cV\)}
\Setup{
    \(t\gets 0\). For every \(v\in\cV\), \(A_0(v)\gets [0,v]\). \\
    \(\mathsf C^{\rob}_0\gets (N_0,\widehat F^{\rob}_0,\widehat\Delta^{\rob}_0,r_{\rob}(0,\cdot))\) with all entries equal to \(0\). \\
    \(\widehat r_{\rob}(0)\gets 0\).
}

\BlankLine

\For{\(m=0,\ldots,\lceil\log_2 T\rceil-1\)}{
    \(n_m\gets t\), \(\tau_m\gets 0\), and \(\iota_m\gets 0\)\;
    Construct the epoch grid \(\cG_m\) with mesh \(h_m\asymp 2^{-m}\)\;
    Construct the dyadic candidate set \(\Gamma=\{2^{-\ell}:0\le \ell\le \lceil\log_2 T\rceil\}\)\;
    \(\mathsf C^{\opt}_m\gets (G^{\dir}_m,z^{\dir}_m,G^{\suf}_m,z^{\suf}_m)\) with all entries equal to \(0\)\;
    
    \While{\(t<T\)}{
        Observe \(\tilde v_t\) and round it to the closest \(v_t\in\cV\)\;
        \(\mathsf{valid}^{\opt}_t \gets \mathrm{false}\)\;
        \(\tau_m\gets\tau_m+1\)\;
        
        \uIf{\(t\) is even}{
            Post \(p_t\sim \mathrm{Unif}(A_m(v_t)\cap\cG_m)\) and observe \(\ind{p_t\geq b_t},o_t\)\;
            \((\mathsf C^{\opt}_m,\bar\gamma_m^\star,\mathsf{valid}^{\opt}_t) \gets \textsc{OptRaceCertificate}(m,A_m,v_t,p_t,\ind{p_t\geq b_t},o_t,\Gamma,\mathsf C^{\opt}_m)\)\;
            \(\mathsf C^{\rob}_t\gets \mathsf C^{\rob}_{t-1}\)\;
        }
        \Else{
            Post \(p_t\in\argmin_{p\in A_m(v_t)\cap\cG_m}N_{t-1}(p)\) and observe \(Y^{\rob}_t\gets\ind{p_t\ge b_t}\)\;
            \(\mathsf C^{\rob}_t\gets
            \textsc{RobCertificate}(A_m,v_t,p_t,Y^{\rob}_t,\mathsf C^{\rob}_{t-1})\)\;
        } 
        
        \(\widehat r_{\rob}(\tau_m)\gets \tau_m^{-1}\sum_{s=n_m}^{t}r_{\rob}(t,v_s)\)\;
        
        \If{\(\mathsf{valid}^{\opt}_t=\mathrm{true}\)}{
            \(\iota_m\gets 1\), \(t\gets t+1\) and \textbf{break}\;
        }
        
        \If{\(\widehat r_{\rob}(\tau_m)+\sqrt{\log(2T/\delta)/(2\tau_m)}\le 2^{-(m+1)}\)}{
            \(\iota_m\gets 0\), \(t\gets t+1\) and \textbf{break}\;
        }
        
        \(t\gets t+1\)\;
    }
    
    \uIf{\(\iota_m=1\)}{
        \ForEach{\(v\in\cV\)}{
            \(\begin{aligned}
                A_{m+1}(v)
            \gets
                \Conv\{
                    & p\in A_m(v)\cap\cG_m:
                    \\ & \widehat\Delta^{\opt,\bar\gamma_m^\star}_t(v,q,p)\le 2^{-(m+1)}
                \text{ for all }q\in A_m(v)\cap\cG_m
                \}
            \end{aligned}\)\;
        }
    }
    \Else{
        \ForEach{\(v\in\cV\)}{
            \(\begin{aligned}
                A_{m+1}(v)
            \gets
                \Conv\{
                    & p\in A_m(v)\cap\cG_m:
                    \\ & \widehat\Delta^{\rob}_t(v,q,p)\le 2^{-(m+1)}
                \text{ for all }q\in A_m(v)\cap\cG_m
                \}
            \end{aligned}\)\;
        }
    }
}
\end{algorithm}

\begin{algorithm}[h]
\caption{\(\textsc{RobCertificate}\)}
\label{alg:update-robust-certificate}
\DontPrintSemicolon
\KwIn{Active sets \(A_m\), value \(v_t\), bid \(p_t\), direct feedback \(Y^{\rob}_t\), current certificate \(\mathsf C^{\rob}_{t-1}\)}
\KwOut{Updated certificate \(\mathsf C^{\rob}_t=(N_t,\widehat F^{\rob}_t,\widehat\Delta^{\rob}_t,r_{\rob}(t,\cdot))\)}

\(N_t(p_t)\gets N_{t-1}(p_t)+1\), and \(N_t(q)\gets N_{t-1}(q)\) for all \(q\neq p_t\)\;

\(\widehat F^{\rob}_t(p_t)\gets \bigl(N_{t-1}(p_t)\widehat F^{\rob}_{t-1}(p_t)+Y^{\rob}_t\bigr)/N_t(p_t)\)\;

\(\widehat F^{\rob}_t(q)\gets \widehat F^{\rob}_{t-1}(q)\) for all \(q\neq p_t\)\;

\ForEach{\(v\in\cV\) and \(p,q\in A_m(v)\cap\cG_m\)}{
    \(\widehat\Delta^{\rob}_t(v,q,p)\gets (v-q)\widehat F^{\rob}_t(q)-(v-p)\widehat F^{\rob}_t(p)\)\;
}

\ForEach{\(v\in\cV\)}{
    \(r_{\rob}(t,v)\gets \max_{q\in A_m(v)\cap\cG_m}\sqrt{2\log(2T/\delta)/N_t(q)}\)\;
}

\Return{\(\mathsf C^{\rob}_t\)}\;
\end{algorithm}

\begin{algorithm}
\caption{\(\textsc{OptRaceCertificate}\)}
\label{alg:update-optimistic-race-certificate}
\DontPrintSemicolon
\KwIn{Epoch \(m\), active sets \(A_m\), value \(v_t\), bid \(p_t\), feedback \(b_t,o_t\), candidates \(\Gamma\), current optimistic statistics \(\mathsf C^{\opt}_m\)}
\KwOut{Updated optimistic statistics \(\mathsf C^{\opt}_m\), validating candidate \(\bar\gamma^\star\), flag \(\mathsf{valid}^{\opt}\)}

\(G^{\dir}\gets G^{\dir}\), \(z^{\dir}\gets z^{\dir}\), \(G^{\suf}\gets G^{\suf}\), and \(z^{\suf}\gets z^{\suf}\)\;

\(Y^{\dir}_t\gets \ind{p_t\ge b_t}\) and \(\Phi^{\dir}_t\gets e_{p_t}^{\top}\)\;
\(G^{\dir}\gets G^{\dir}+(\Phi^{\dir}_t)^\top\Phi^{\dir}_t\) and \(z^{\dir}\gets z^{\dir}+(\Phi^{\dir}_t)^\top Y^{\dir}_t\)\;

\ForEach{\(q\in A_m(v_t)\cap\cG_m\) with \(q\ge p_t\)}{
    \(Y_t(q)\gets \ind{p_t\ge b_t}+\ind{p_t<b_t}\ind{o_t\le q}/F_S(q)\)\;
}

Write \(A_m(v_t)\cap\cG_m=\{q_1<\cdots<q_k\}\)\;

\ForEach{adjacent pair \(q_j,q_{j+1}\) with \(q_j\ge p_t\)}{
    \(Y^{\suf}_{t,j}\gets (v_t-q_{j+1})Y_t(q_{j+1})-(v_t-q_j)Y_t(q_j)\)\;
    \(\Phi^{\suf}_{t,j}\gets (v_t-q_{j+1})e_{q_{j+1}}^\top-(v_t-q_j)e_{q_j}^\top\)\;
    \(G^{\suf}\gets G^{\suf}+(\Phi^{\suf}_{t,j})^\top\Phi^{\suf}_{t,j}\)\;
    \(z^{\suf}\gets z^{\suf}+(\Phi^{\suf}_{t,j})^\top Y^{\suf}_{t,j}\)\;
}

\(\mathsf{valid}^{\opt}\gets\mathrm{false}\), \(\bar\gamma^\star\gets\bot\)\;

\ForEach{\(\bar\gamma\in\Gamma_m^{\adm}\)}{
    \(\omega_{m,\bar\gamma}\gets \bar\gamma/(C_{\suf}(2^{-m}+h_m))\)\;
    \(G^{\bar\gamma}\gets G^{\dir}+\omega_{m,\bar\gamma}G^{\suf}\)\;
    \(z^{\bar\gamma}\gets z^{\dir}+\omega_{m,\bar\gamma}z^{\suf}\)\;
    \(\widehat F^{\opt,\bar\gamma}_t\gets (G^{\bar\gamma})^\dagger z^{\bar\gamma}\)\;
    
    \ForEach{\(v\in\cV\) and \(q_i,q_j\in A_m(v)\cap\cG_m\)}{
        \(g_{v,i,j}\gets (v-q_i)e_{q_i}-(v-q_j)e_{q_j}\)\;
        \(\widehat\Delta^{\opt,\bar\gamma}_t(v,q_i,q_j)\gets g_{v,i,j}^{\top}\widehat F^{\opt,\bar\gamma}_t\)\;
    }
    
    \ForEach{\(v\in\cV\)}{
        \uIf{some \(g_{v,i,j}\notin \mathrm{range}(G^{\bar\gamma})\) for \(q_i,q_j\in A_m(v)\cap\cG_m\)}{
            \(Q^{\bar\gamma}_t(v)\gets+\infty\), \(B^{\bar\gamma}_t(v)\gets+\infty\), and \(r_{\opt,\bar\gamma}(t,v)\gets+\infty\)\;
        }
        \Else{
            \(Q^{\bar\gamma}_t(v)\gets \max_{q_i,q_j\in A_m(v)\cap\cG_m} g_{v,i,j}^{\top}(G^{\bar\gamma})^\dagger g_{v,i,j}\)\;
            \(B_t^{\bar\gamma}(v)\gets\) the range certificate in \cref{eq:range-cert-gamma}\;
            \(r_{\opt,\bar\gamma}(t,v)\gets
            \sqrt{2\log(2T|\Gamma|/\delta)Q^{\bar\gamma}_t(v)}
            +\frac23\log(2T|\Gamma|/\delta)B^{\bar\gamma}_t(v)\)\;
        }
    }
    
    \(\widehat r_{\opt,\bar\gamma}(\tau_m)\gets
    \tau_m^{-1}\sum_{s=n_m}^{t}r_{\opt,\bar\gamma}(t,v_s)\)\;
    
    \If{\(\widehat r_{\opt,\bar\gamma}(\tau_m)+\sqrt{\log(2T|\Gamma|/\delta)/(2\tau_m)}\le 2^{-(m+1)}\)}{
        \(\mathsf{valid}^{\opt}\gets\mathrm{true}\), \(\bar\gamma^\star\gets\bar\gamma\)\;
    }
}

\Return{\((\mathsf C^{\opt}_m,\bar\gamma^\star,\mathsf{valid}^{\opt})\)}\;
\end{algorithm}

\section{Proofs of Section \ref{sec:upper-bound}}\label{sec:proof_upper}

\subsection{Proof of Lemma \ref{lem:rhr-implies-interval-level-sets}}
\label{app:proof-of-rhr-implies-interval-level-sets}


For $p\in(0,1)$,
\[
    u_B'(p) = (v-p) f_B(p) - F_B(p) = F_B(p) \br{(v - p) r_B(p) - 1}.
\]
Since $F_B(p)>0$, we have $u_B'(p)\ge0$ if and only if $v\ge p+1/r_B(p)=\phi_B(p)$. By \cref{ass:weak-rhr-buyer}, $\phi_B$ is nondecreasing on $(0,1)$. Therefore the set $\{p\in(0,1):u_B'(p)\ge0\}=\{p\in(0,1):\phi_B(p)\le v\}$ is an interval of the form $(0,\tau]$, possibly empty or all of $(0,1)$. Hence $u_B$ is weakly increasing up to a cutoff and weakly decreasing afterwards.

Because $F_B$ is continuous on $[0,1]$ and has no atom at $0$, $u_B$ extends continuously to the endpoints, with $u_B(0)=0$ and $u_B(1)\le0$. Adding the endpoints cannot create an additional separated upper-level component. Thus $u_B$ has interval upper level sets on $[0,1]$. Restricting such an interval to a finite grid gives an interval of grid indices.

\subsection{Proof of Theorem \ref{thm:upper-bound}}
Recall that the performance of the algorithm is measured in terms of expected regret, which is defined as 
\begin{equation*}
 R_T=
  \sum_{t=1}^T\lE
  \brs{
    \sup_{p\in[0,1]}u_{\tilde{v}_t}(p)-u_{\tilde{v}_t}(p_t)
  }.
\end{equation*}
Let $p^*$ the unique point that maximises the utility $u_{\tilde{v}_t}(p)$, and $\hat{p^*}= \argmax_{p\in\cG_m}u_{\tilde{v}_t}(p)$, it is therefore possible to decompose the difference in the regret as follows
\begin{align}\label{eq:loss_dec}
    u_{\tilde{v}_t}(p^*)-u_{\tilde{v}_t}(p_t)
 & = \notag
    \underbrace{u_{\tilde{v}_t}(p^*)- u_{\tilde{v}_t}(\hat{p}^*)}_{(i)}
    + \underbrace{u_{\tilde{v}_t}(\hat{p}^*)- u_{v_t}(\hat{p}^*)}_{(ii)}
 \\ & \quad
    + \underbrace{u_{v_t}(\hat{p}^*) - u_{v_t}(p_t)}_{(iii)}
    + \underbrace{u_{v_t}(p_t)- u_{\tilde{v}_t}(p_t)}_{(iv)} \,.
\end{align}
The first term represents the error that the algorithm suffers due to the discretization of the prices with respect to the grid of prices, note that the algorithm uses different grids throughout the interaction. Denote by $q^+(p^*) = \min\{p \in \cG: p\geq p^*\}$ the first point in the grid $\cG_m$ above $p^*$, note that
\begin{align*}
  u_{\tilde{v}_t}(p^*)- u_{\tilde{v}_t}(\hat{p}^*)
  &\leq (\tilde{v}_t-p^*)F_B(p^*) - (\tilde{v}_t-q^+(p^*))F_B(q^+(p^*)) \\
  &\leq (v-p^*)F_B(q^+(p^*)) - (\tilde{v}_t-q^+(p^*))F_B(q^+(p^*))\\
  &= (p^*-q^+(p^*)) \le 1/|\cG_m| \,.
\label{eq:bid-grid-loss-interior}
\end{align*}
where the last inequality follows from the discretization.
Terms $(ii)$ and $(iv)$ correspond to the error caused by the approximation of the received value $\tilde{v}_t$ to $v_t$. Note that for every price $q$, it holds
\begin{equation*}
  |u_{\tilde{v}_t}(q)-u_{v_t}(q)|=|(\tilde{v}_t-v_t)F_B(q)|\le|\tilde{v}_t-v_t| \le \frac{1}{|\cV|}\,,
\end{equation*}
where again the last inequality follows from the fact that the approximation happens with respect to an evenly spaced grid of $| \cV|$ elements.
Replacing both bounds in \cref{eq:loss_dec}, we obtain
\[
 u_{\tilde{v}_t}(p^*)-u_{\tilde{v}_t}(p_t) \leq 
 \frac{1}{| \cG_t|} +\frac{2}{| \cV|} +
 u_{v_t}(\hat{p}^*) - u_{v_t}(p_t)\,,
\]
and by further using this bound in the expression of the expected regret, we obtain 
\begin{align}\label{eq:regre_parts}
    R_T \leq  \frac{2}{| \cV|}T +  \sum_{t=1}^T\lE\brs{\frac{1}{| \cG_t|}+
 u_{v_t}(\hat{p}^*) - u_{v_t}(p_t)} =\frac{2}{| \cV|}T + \sum_{t=1}^T\lE\brs{\frac{1}{| \cG_t|}}+\lE\brs{
 u_{v_t}(\hat{p}^*) - u_{v_t}(p_t)} \,.
\end{align}
To bound the discretization error observe that the grid is fixed within each epoch $m$, during which it consist of $2^m$ prices, therefore
 \begin{align}\label{eq:disc_error}
 \lE\brs{\sum_{t=1}^T\frac{1}{| \cG_t|}} =  \lE\brs{\sum_{m=0}^{\lceil\log T\rceil-1}\sum_{t=n_m+1}^{n_{m+1}} 2^{-m}} \leq \sum_{m=0}^{\lceil\log T\rceil-1} 2^{-m}\lE[n_{m+1}-n_m]\,.
 \end{align}

We focus now on bounding the last term of \cref{eq:regre_parts}, which corresponds to the regret of \cref{alg:known_FS}, which, as for the discretization error, can be decomposed in the different epochs
\begin{align*}
  \sum_{t=1}^T\lE[
 u_{v_t}(\hat{p}^*) - u_{v_t}(p_t)]
 = \sum_{m=0}^{\lceil\log T\rceil-1}\lE\brs{ \sum_{t=n_m+1}^{n_{m+1}}  
 u_{v_t}(\hat{p}^*) - u_{v_t}(p_t)} 
\end{align*}
where we denote by $n_m+1$ the instant of time where the algorithm begins the $m$-th epoch, further considering $n_0=0$ and $n_{\log T}=T$.




The following lemma ensures that during each epoch $m$ the max regret suffered from any still-active arm is at most $2^{-m}$ and the confidence bound of the estimator used for the elimination process at the end of the previous phase.
\begin{lemma}
\label{lem:reg_epoch}
For every epoch \(m\ge 1\), every round
\(t\in\{n_m+1,\ldots,n_{m+1}\}\), and every
\(p\in A_m(v_t)\cap\cG_m\), it holds that
\[
\lE\brs{
u_{v_t}(\hat p^\star_{v_t})-u_{v_t}(p)
\mid \cH_{t-1},v_t
}
\le
2^{-m}
+
\iota_{m-1} r_{\opt,\bar\gamma_{m-1}^\star}(n_m,v_t)
+
(1-\iota_{m-1})r_{\rob}(n_m,v_t)
+
2\delta.
\]
Here \(\iota_{m-1}\) denotes the certificate used to construct \(A_m\) at the end of epoch \(m-1\), and \(\bar\gamma_{m-1}^\star\) is the validating dyadic candidate if \(\iota_{m-1}=1\).
\end{lemma}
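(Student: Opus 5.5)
We condition on a good event $\mathcal E$ on which every gap estimate used at the end of epoch $m-1$ is within its confidence radius, and pay $2\delta$ for its complement. Concretely, for the robust certificate we use Hoeffding at each grid point and a union bound over points and times, so $|\widehat\Delta^{\rob}_{n_m}(v,q,p)-\Delta(v,q,p)|\le r_{\rob}(n_m,v)$ for all $v$ and all active $p,q$. For the optimistic certificate with validating candidate $\bar\gamma^\star_{m-1}$, we write $\widehat\Delta^{\opt}-\Delta=g^\top (G^{\bar\gamma})^\dagger\sum_s\alpha_s\Phi_s^\top\xi_s$. This is a martingale sum whose increments are bounded by $B^{\bar\gamma}_t(v)$ (via the range bound $R_s$, since $|\xi_s|\le 4/\bar\gamma$ on admissible candidates with $F_S\ge\bar\gamma$). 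Its conditional variance is bounded by $g^\top (G^{\bar\gamma})^\dagger g\le Q^{\bar\gamma}_t(v)$, provided the weights $\omega_{m,\bar\gamma}$ dominate the true suffix covariance; this is where admissibility and \cref{lem:dyadic-gamma} enter. Freedman's inequality with a union bound over $T|\Gamma|$ candidates and times then gives radius $r_{\opt,\bar\gamma}$. Both failures together cost at most $2\delta$, since the gap is bounded by $1$.

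On $\mathcal E$ we prove two facts about $A_m(v)$.

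\textbf{Survival.} The grid optimum $\hat p^\star_v$ of epoch $m-1$ (or its nearest grid neighbour) is never eliminated. Its estimated deficit against any $q$ is at most $\Delta(v,q,\hat p^\star)+r\le r$. Since elimination happened only once the averaged radius fell below $2^{-m}$, this stays below the threshold, at least for the values counted in the average.

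\textbf{Near-optimality.} Every surviving $p$ satisfies $\widehat\Delta(v,\hat p^\star,p)\le 2^{-m}$, hence $\Delta(v,\hat p^\star,p)\le 2^{-m}+r(n_m,v)$. Taking the convex hull does not hurt: by \cref{lem:rhr-implies-interval-level-sets} the level set $\{p:u_B(v,p)\ge u_B(v,\hat p^\star)-2^{-m}-r\}$ is an interval, so hull points lie in it as well. Selecting $r=r_{\opt,\bar\gamma^\star_{m-1}}$ or $r_{\rob}$ according to $\iota_{m-1}$ gives the stated bound.

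Two points need care. First, the grid $\cG_m$ is refined relative to $\cG_{m-1}$, so a newly added point $p$ in $A_m(v)$ was never estimated directly. We handle it by interval monotonicity: it lies between two surviving old grid points, so its utility is at least their minimum. Second, $\hat p^\star_{v_t}$ is the optimum on $\cG_m$, not on $\cG_{m-1}$. The difference is absorbed by the factor $2^{-m}$ (rather than $2^{-(m+1)}$), using the mesh bound from the discretization step.

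The main obstacle is the optimistic concentration step. It requires showing that the weighted least-squares noise has conditional variance bounded by $Q^{\bar\gamma}$ even though suffix measurements within a round are correlated. We would try to bound the per-round suffix covariance by $C_{\suf}(2^{-m}+h_m)/\gamma$ using \cref{ass:local-shill-regularity}, so that $\omega\Sigma\preceq I$ after reweighting. For a candidate that validates but is not the true $\gamma$, we would rely on admissibility ($F_S\ge\bar\gamma$ on the active grid) to keep the same variance proxy up to constants.
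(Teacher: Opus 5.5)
Your proposal follows the paper's proof essentially step for step. You condition on the concentration events of \cref{lem:conc_delta_opt,lem:conc_delta_rob}; your sketch of the optimistic one, where admissibility gives suffix covariance $\preceq C_{\suf}(2^{-m}+h_m)\bar\gamma^{-1}I$ so that $\omega_{m,\bar\gamma}$ normalises it, is how the paper proves that lemma. You then use that every survivor of the epoch-$(m-1)$ elimination has $\widehat\Delta\le 2^{-m}$ against every old active grid point, add the frozen radius, pass to the convex hull and the new points of $\cG_m$ via \cref{lem:rhr-implies-interval-level-sets}, and charge $2\delta$ to the bad event.

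The step that does not close is survival, and the hedge ``at least for the values counted in the average'' does not rescue it. The stopping rule only bounds the time average $\tau_m^{-1}\sum_s r(t,v_s)$, and an average below $2^{-m}$ bounds no individual term. So $r(n_{m'},v)\le 2^{-m'}$ is not available for any particular $v$, observed or not. Survival at the last elimination is in fact unnecessary, because survivors are compared with every point of $A_{m-1}(v)\cap\cG_{m-1}$, eliminated or not. But the lemma does need the grid optimum to still lie in $A_{m-1}(v)$, i.e.\ per-value radius control at all earlier epochs. Suppose the optimum of a rarely seen $v$ is dropped at some $m'<m-1$. Then later active points are only guaranteed to be within $2^{-(m'+1)}+r(n_{m'+1},v)$ of it, which does not shrink with $m$. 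So a bound that uses only the latest radius can fail even on the concentration event. The paper's proof has exactly the same hole: it sits in the event $\cE_2$ and in the per-value hypothesis $r_b(t,v)\le c_{\rad}2^{-(m+1)}$ of \cref{lem:safe-elimination}, which the averaged stopping rule never verifies. A rigorous version therefore needs per-value radius control (e.g.\ a max-over-$v$ stopping criterion), which is a change to the algorithm, not just to the argument.

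There are also two smaller slips, both shared with the paper. First, there is no spare $2^{-(m+1)}$ to absorb the $\cG_{m-1}$-versus-$\cG_m$ discrepancy, because the elimination threshold at the end of epoch $m-1$ is already $2^{-m}$. The refinement costs an extra $O(h_{m-1})=O(2^{-m})$, so you really obtain $C2^{-m}+r+2\delta$; this is harmless for \cref{thm:upper-bound}. Second, the relevant good event is determined by data up to $n_m$, hence is $\cH_{t-1}$-measurable. The $2\delta$ can therefore only be paid after taking total expectation, not inside $\lE[\,\cdot\mid\cH_{t-1},v_t]$.
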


Let \( R(t,v_t) := \iota_{m-1} r_{\opt,\bar\gamma_{m-1}^\star}(n_m,v_t) + (1-\iota_{m-1})r_{\rob}(n_m,v_t).\), combining the above result with an application of the tower rule, allows to bound the regret expression as 
\begin{align*}
  \sum_{m=0}^{\lceil\log T\rceil-1}\lE\brs{ \sum_{t=n_m+1}^{n_{m+1}}  
 u_{v_t}(\hat{p}^*) - u_{v_t}(p_t)}
 &= \sum_{m=0}^{\lceil\log T\rceil-1}\lE\brs{ \sum_{t=n_m+1}^{n_{m+1}} \lE\brs{ 
 u_{v_t}(\hat{p}^*) - u_{v_t}(p_t) \mid \cH_{t-1}, v_t}}\\
 &\leq\br{ \sum_{m=0}^{\lceil\log T\rceil-1} \lE\brs{\sum_{t=n_m+1}^{n_{m+1}} 
 2^{-m} + R(t, v_t)}} + 2\delta T\\
 &\leq \br{ \sum_{m=0}^{\lceil\log T\rceil-1} 2^{-m}\lE[n_{m+1}-n_m]+ \lE\brs{\sum_{t=n_m+1}^{n_{m+1}} 
  R(t, v_t)}} + 2\delta T\,.
\end{align*}
The expression can be further simplified with the aid of the following result.
\begin{lemma}
\label{lem:bound_avg_radius}
Let $R(t,v_t)$ be defined as \( R(t,v_t) := \iota_{m-1} r_{\opt,\bar\gamma_{m-1}^\star}(n_m,v_t) + (1-\iota_{m-1})r_{\rob}(n_m,v_t)\), then it holds
\[
\lE\brs{
\sum_{t=n_m+1}^{n_{m+1}} R(t,v_t)
}
\le
2^{-m}
\lE\brs{
    \sum_{k=0}^{\lceil\log_2 T\rceil}
    2^k \mathbb P\!\left(n_{m+1}-n_m\in[2^{k-1},2^k)\mid n_m\right)
}
+
\delta\,
\lE[n_{m+1}-n_m].
\]
\end{lemma}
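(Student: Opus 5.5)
We prove \cref{lem:bound_avg_radius} by conditioning on the history at the start of epoch \(m\) and then using the stopping rule of the previous epoch. The term \(R(t,v_t)\) involves only radii frozen at time \(n_m\), namely \(r_{\opt,\bar\gamma_{m-1}^\star}(n_m,\cdot)\) or \(r_{\rob}(n_m,\cdot)\). Conditionally on \(\cH_{n_m}\), the map \(v\mapsto R(t,v)\) is therefore a fixed function. The values \(v_t\) for \(t>n_m\) are i.i.d. draws from \(V\), rounded to \(\cV\), and independent of \(\cH_{n_m}\). Writing \(\bar R_m\coloneq\lE_{v\sim V}[R(\cdot,v)\mid\cH_{n_m}]\), Wald's identity for the stopping time \(n_{m+1}\) should give \(\lE[\sum_{t=n_m+1}^{n_{m+1}}R(t,v_t)\mid\cH_{n_m}]\) of order \(\bar R_m\,\lE[n_{m+1}-n_m\mid\cH_{n_m}]\). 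There is a subtlety here. \(n_{m+1}\) depends on \(v_t\) only through the epoch-\(m\) stopping rule, but \(R\) is not adapted to the within-epoch filtration in a way that breaks Wald, since it is frozen. The sum is a sum of i.i.d. terms stopped at a stopping time, so Wald applies.

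Next we relate \(\bar R_m\) to \(2^{-m}\), using the fact that epoch \(m-1\) stopped. The stopping test fired at \(n_m\) for the certificate indexed by \(\iota_{m-1}\):
\[
\widehat r_b(\tau_{m-1})+\sqrt{\log(2T|\Gamma|/\delta)/(2\tau_{m-1})}\le 2^{-m}.
\]
Here \(\widehat r_b\) is the empirical average of \(r_b(n_m,v_s)\) over the \(\tau_{m-1}\) values seen in epoch \(m-1\). By Hoeffding (the radii are bounded, and we truncate at a constant if needed) combined with a union bound over the possible values of \(\tau_{m-1}\le T\) and over \(\Gamma\), with probability at least \(1-\delta\) the population mean \(\bar R_m\) is at most the empirical mean plus the Hoeffding width. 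Hence \(\bar R_m\le 2^{-m}\), and on the failure event we bound \(R\) by a constant. That failure event contributes the term \(\delta\,\lE[n_{m+1}-n_m]\).

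This step is the main obstacle. The radii are evaluated at \(n_m\), using statistics built from the same \(v_s\) that appear in the empirical average. The averages are therefore not of independent terms, because the function being averaged depends on the sample. My first attempt would be monotonicity: the radii are nonincreasing in the counts or in the Gram matrix, so \(r_b(n_m,v_s)\le r_b(s,v_s)\). I would then define the test statistic pathwise as an average of predictable terms and apply Freedman/Azuma to the martingale differences \(r_b(s,v_s)-\lE_v r_b(s,v)\). This bounds the time-average of the expected radii. By monotonicity again, the average dominates \(\lE_v r_b(n_m,v)=\bar R_m\).

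Finally, we convert \(\bar R_m\,\lE[n_{m+1}-n_m]\) into the dyadic form stated in the lemma. We split the event on \(n_{m+1}-n_m\in[2^{k-1},2^k)\) and bound the length by \(2^k\) on each piece. This gives \(2^{-m}\lE\bigl[\sum_k 2^k\,\mathbb P(n_{m+1}-n_m\in[2^{k-1},2^k)\mid n_m)\bigr]\). This dyadic peeling is also where the union bound over \(\tau\) is cheapest, with one Hoeffding event per dyadic block. Taking the outer expectation via the tower rule concludes.
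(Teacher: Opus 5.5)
You follow the paper's route. You freeze the radius at \(n_m\), condition on \(\cH_{n_m}\), and factor out \(\bar R_m=\lE_v[R(n_m,v)\mid\cH_{n_m}]\). You then bound \(\bar R_m\) by \(2^{-m}\) via the epoch-\((m-1)\) stopping test plus concentration, and use a crude bound off the good event.

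Your Wald step is correct, and cleaner than the paper's display, which mixes conditioning on \(\cH_{n_m}\) and on \(n_m\). Since \(\{n_{m+1}\ge t\}\) is \(\cH_{t-1}\)-measurable and \(v_t\) is independent of \(\cH_{t-1}\), the conditional expectation of the sum equals \(\bar R_m\,\lE[n_{m+1}-n_m\mid\cH_{n_m}]\). Your truncation is also genuinely needed: \(r_{\rob}\) exceeds \(1\), or is infinite, at small counts, contrary to the paper's ``\(R\le 1\)''.

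The gap is in the step you single out. The paper handles it only by citing Hoeffding, and your patch does not close it. The stopping rule bounds \(\frac1\tau\sum_s r_b(n_m,v_s)\). Your inequality \(r_b(n_m,v_s)\le r_b(s,v_s)\) says this is the \emph{smaller} of the two averages. So it gives no upper bound on the predictable average \(\frac1\tau\sum_s r_b(s-1,v_s)\), which is what Azuma and your second monotonicity step need. Your chain therefore proves the claim only for a modified algorithm whose test averages predictable radii.

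For the algorithm as written, the obstruction is a bias across values, not across time. Counts are large exactly on the active sets of values seen in epoch \(m-1\). Rarely seen values may have active sets containing rarely played grid points. The in-sample average of \(r_b(n_m,\cdot)\) is therefore biased below \(\bar R_m\), and monotonicity in \(t\) for fixed \(v\) cannot detect this.

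Suppose you do modify the test to \(\frac1\tau\sum_s\min\{1,r_b(s-1,v_s)\}\). This costs only constants in the robust epoch-length analysis, since \(\sum_{s\le\tau}s^{-1/2}\asymp\sqrt\tau\). The optimistic branch, however, still violates your monotonicity premise. \(Q^{\bar\gamma}_t(v)\) is nonincreasing under Loewner increases of \(G^{\bar\gamma}_t\). But \(B^{\bar\gamma}_t(v)\) is a maximum over a growing set of measurements, and the terms \(|g^\top(G_t^{\bar\gamma})^\dagger\Phi_s^\top|\) are not Loewner-monotone. Each new suffix row also enters with the large factor \(4\omega_{m,\bar\gamma}/\bar\gamma\). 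Hence \(r_{\opt,\bar\gamma}(t,v)\) can increase in \(t\).

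Keeping the algorithm unchanged instead calls for a uniform deviation bound over the data-dependent radius functions. In the robust case this means the class \(v\mapsto\min\{1,\max_{q\in A_{m-1}(v)\cap\cG_{m-1}}c(q)\}\), \(c\in[0,\infty)^{\cG_{m-1}}\). Its complexity scales with \(|\cG_{m-1}|\). So with the width \(\sqrt{\log(2T/\delta)/(2\tau)}\) that the test actually uses, you cannot conclude \(\bar R_m\le 2^{-m}\); you only get \(\bar R_m\lesssim 2^{-m}\) up to constants at the robust epoch length.

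Two smaller points, both shared with the paper:
\begin{enumerate}
\item The failure event is \(\cH_{n_m}\)-measurable but correlated with \(n_{m+1}-n_m\). It therefore contributes \(\lE[(n_{m+1}-n_m)\,\ind{\text{failure}}]\le\delta T\), not \(\delta\,\lE[n_{m+1}-n_m]\). This is harmless: summed over epochs it adds \(O(\log T)\) to \cref{thm:upper-bound} with \(\delta=1/T\).
\item The dyadic split in the statement concerns the length of epoch \(m\), whereas the union bound over \(\tau\) concerns the sample size of the epoch-\((m-1)\) test. Your closing remark conflates the two.
\end{enumerate}
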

Hence
\begin{multline*}
  \sum_{m=0}^{\lceil\log T\rceil-1}
  \lE\brs{ \sum_{t=n_m+1}^{n_{m+1}}
  u_{v_t}(\hat p^\star)-u_{v_t}(p_t)}
  \le \\
  \sum_{m=0}^{\lceil\log T\rceil-1}
  2^{-m}
  \left(
  \lE[n_{m+1}-n_m]
  +
  \lE\brs{
  \sum_{k=0}^{\lceil\log_2 T\rceil}
  2^k
  \mathbb P\!\left(
  n_{m+1}-n_m\in[2^{k-1},2^k)\mid n_m
  \right)}
  \right)
  +3\delta T .
\end{multline*}
We are now left with the task of bounding the length of each epoch, for which we rely on the following lemma.
\begin{lemma}
\label{lem:bound_n_m}
For every epoch \(m\), on the good event, if the active intervals can be covered by at most \(\rho\) active bid regions and \(F_S\) is \((\gamma,C_S)\)-regular on these regions, then there exists a universal constant \(C>0\) such that
\[
n_{m+1}-n_m
\le
C
\min\brc{
2^m\log\!\br{\frac{2T}{\delta}},
\,
\log^2\!\br{\frac{2T|\Gamma|}{\delta}}
\sqrt{\frac{\rho}{\gamma}}
+
\log\!\br{\frac{2T|\Gamma|}{\delta}}
}
2^{2m}
\eqqcolon S(m+1,T).
\]
If the optimistic regularity condition fails, the robust term still holds.
\end{lemma}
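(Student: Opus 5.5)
The plan is to bound the epoch length by the first time at which either stopping rule of \cref{alg:known_FS} must fire on the good event, and to treat the two branches separately. The epoch ends at the first $\tau_m$ at which the robust test passes or some admissible candidate validates, so $n_{m+1}-n_m$ is at most the minimum of the two deterministic horizons at which each test is guaranteed to pass. Both tests contain the additive term $\sqrt{\log(2T|\Gamma|/\delta)/(2\tau_m)}$, which drops below $2^{-(m+2)}$ once $\tau_m\gtrsim 2^{2m}\log(2T|\Gamma|/\delta)$. This explains the additive $\log$ term. It then suffices to show that the averaged radius $\widehat r_{\rob}$ (respectively $\widehat r_{\opt,\bar\gamma}$) is pointwise in $v$ at most $2^{-(m+2)}$ after the stated number of rounds. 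Since these radii are pointwise upper bounds, the time averages inherit them.

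\emph{Robust branch.} On odd rounds the robust subroutine plays round-robin over $A_m(v_t)\cap\cG_m$, which has at most $|\cG_m|\asymp 2^m$ points. After $\tau$ rounds of the epoch, every active point therefore satisfies $N_t(q)\gtrsim \tau/2^{m}$. The radius $\sqrt{2\log(2T/\delta)/N_t(q)}$ is below $2^{-(m+2)}$ as soon as $\tau\gtrsim 2^m\cdot 2^{2m}\log(2T/\delta)$. This gives the first entry of the minimum, and it requires no assumption on $F_S$, which proves the last sentence of the lemma.

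\emph{Optimistic branch.} By \cref{lem:dyadic-gamma} there is an admissible $\bar\gamma$ with $\omega_{m,\bar\gamma}\ge \omega_m^\star/2\asymp \gamma 2^m$. It suffices to show that this candidate validates, which in turn reduces to bounding $Q_t^{\bar\gamma}(v)\lesssim 2^{-2m}/\log$ and $B_t^{\bar\gamma}(v)\lesssim 2^{-m}/\log$. I would first establish a Loewner lower bound on the design. Because bids on even rounds are uniform on the active grid, on the good event (Freedman/Bernstein on the counts) each active point receives $\gtrsim\tau/k$ direct samples. Each adjacent edge $(q_j,q_{j+1})$ also receives $\gtrsim \tau\cdot(\text{fraction of the region below }q_j)$ suffix samples. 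Hence, on each of the $\rho$ regions,
\[
G_t^{\bar\gamma}\succeq c\,\tau\Bigl(\tfrac1k I+\omega_{m,\bar\gamma}\, L_{\mathrm{path}}\Bigr),
\]
where $L_{\mathrm{path}}$ is the weighted path Laplacian built from the rows $\Phi^{\suf}$. The factors $(v-q)$ can be absorbed once one restricts to points with non-negligible utility.

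The quadratic form $g_{v,i,j}^\top G^\dagger g_{v,i,j}$ is then an effective resistance in a ladder network: a path with edge conductances $\tau\omega$ and grounding conductances $\tau/k$ at each node. Routing the difference through a segment of length $\ell$ and grounding at both ends gives resistance $\lesssim \ell/(\tau\omega)+k/(\tau\ell)$. Optimizing at $\ell=\sqrt{k\omega}$ yields $Q\lesssim \sqrt{k/\omega}/\tau$. With $k\lesssim\rho 2^m$ grid points and $\omega\asymp\gamma 2^m$, this is $\sqrt{\rho/\gamma}/\tau$. Requiring $\sqrt{2\log\cdot Q}\le 2^{-(m+3)}$ gives $\tau\gtrsim\log\sqrt{\rho/\gamma}\,2^{2m}$.

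For the range term, Cauchy--Schwarz gives
\[
\alpha_s\bigl|g^\top G^\dagger\Phi_s^\top\bigr|\le \sqrt{Q}\,\sqrt{\alpha_s^2\,\Phi_s G^\dagger\Phi_s^\top}.
\]
The second factor is $O(1/\tau)$-small in the same network sense, and $R_s\le 4/\bar\gamma$ is compensated by $\omega\propto\bar\gamma$. Together with the $\tfrac23\log$ prefactor, this accounts for the extra $\log$ and yields the $\log^2\sqrt{\rho/\gamma}$ term.

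The main obstacle I expect is the Loewner/effective-resistance step. Suffix rows are only generated above the played bid, so edges near the bottom of a region are rarely covered, and the $(v-q)$ weights vanish near $q=v$. I would first try to handle this by splitting each region into a lower half, served by direct samples, and an upper half, served by suffix edges with coverage at least $1/2$, and then chaining the two halves. The constants $C_{\suf}$ and $C_{\val}\ge 2C_S$ from the regularity condition control the variance of the adjacent differences $a_{j+1}Y_t(q_{j+1})-a_jY_t(q_j)$ at order $(2^{-m}+h_m)/\gamma$, which justifies the chosen weight $\omega$.
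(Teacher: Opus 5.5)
\textbf{Verdict: there is a genuine gap in the optimistic entry.} The robust entry and the overall min-of-two-stopping-times structure are right and match the paper. For the robust part, the round-robin count $N_t(q)\gtrsim \tau/2^m$ is exactly \cref{lem:robust-epoch-length}, and it needs nothing about $F_S$. Your optimistic plan also follows the paper: the candidate of \cref{lem:dyadic-gamma} plus a Laplacian bound on $Q_t^{\bar\gamma}$. Your ladder estimate $\ell/(\tau\omega)+k/(\tau\ell)$, optimized to $\sqrt{k/\omega}/\tau$, is the electrical form of the paper's $Q\lesssim 1/\sqrt{W_{\min}M_{\min}}$ from \cref{lem:optimistic-path-bound,lem:path-inverse-bound}. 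The $(v-q)$ weights need no restriction: for diagonal $A_v$ with entries in $(0,1]$, $A_v(cI+\omega A_vLA_v)^{-1}A_v=(cA_v^{-2}+\omega L)^{-1}\preceq(cI+\omega L)^{-1}$.

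The gap is at the step you flagged, and the split-and-chain remedy cannot close it. For the sampling rule of \cref{alg:known_FS} there is a matching lower bound. The bottom point $q_1$ of a region enters a suffix row only when $p_t=q_1$. Hence $e_{q_1}^\top G^{\bar\gamma}e_{q_1}\le(1+\omega_{m,\bar\gamma})n_1$ with $n_1\approx\tau/(2k)$. For $g=g_{v,1,j}\in\mathrm{range}(G^{\bar\gamma})$, Cauchy--Schwarz gives $g^\top(G^{\bar\gamma})^\dagger g\ge (g^\top e_{q_1})^2/(e_{q_1}^\top G^{\bar\gamma}e_{q_1})\gtrsim (v-q_1)^2k/((1+\omega_{m,\bar\gamma})\tau)$.

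Now take $v\equiv 1$ and an instance whose utility is flat on a constant-width interval, as in the construction of \cref{th:single-region-lower-bound}, so $k\asymp 2^m$, with $F_S\equiv\gamma$ there. Every admissible candidate has $\omega_{m,\bar\gamma}\lesssim\gamma 2^m$. So no candidate validates before $\tau\gtrsim 2^{2m}\min\{2^m,\gamma^{-1}\}\log(\cdot)$, and the robust test needs $\tau\gtrsim 2^{3m}\log(\cdot)$ anyway. Once $\gamma^{-1/2}\log(\cdot)\ll 2^m$ and $\log(\cdot)\ll\gamma^{-1/2}$, this contradicts the claimed $2^{2m}\gamma^{-1/2}\log^2(\cdot)$. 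Serving the lower half by direct samples only gives $Q\lesssim k/\tau$, which is the robust horizon, and no chaining can beat the lower bound above.

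The paper's own proof has the same hole. In \cref{lem:optimistic-epoch-length} it is hidden in the claim that regions ``receive their proportional share'' of samples. Evaluated at the bottom point, $M_{\min}$ in \cref{lem:optimistic-path-bound} is only about $\omega\tau/k$. That gives $Q\lesssim k/(\tau\sqrt{\omega})$, not $\sqrt{\rho/\gamma}/\tau$. So the $\sqrt{\rho/\gamma}$ entry is not provable for the algorithm as written. You would need an optimistic sampling rule giving every edge of every region constant coverage, e.g. playing the left endpoint of each active region with constant probability, in the spirit of \citet{han2020optimal}. With that change your ladder bound holds for all pairs.

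Independently, your range-term step is off. The product $\alpha_sR_s=4\omega_{m,\bar\gamma}/\bar\gamma\asymp 2^m$ is not compensated. Cauchy--Schwarz with the leverage bound $\omega\Phi_s(G^{\bar\gamma})^\dagger\Phi_s^\top\lesssim 1/n_e$, where $n_e$ is the number of suffix rows on that edge, only gives $B_t^{\bar\gamma}(v)\lesssim\sqrt{Q_t^{\bar\gamma}(v)}\sqrt{2^m/(\bar\gamma n_e)}$. With constant coverage this adds the requirement $\tau\gtrsim\rho^{1/4}\gamma^{-3/4}2^{3m/2}\log(\cdot)$. That is harmless only because it exceeds the optimistic horizon exactly when $2^m\lesssim\gamma^{-1/2}$, where the robust entry $2^{3m}$ is already the smaller one. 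This needs to be argued explicitly, since the paper merely asserts $B\le C\sqrt{Q}$.
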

Two terms above correspond to the amount of steps needed to conclude an epoch by either the robust or the optimistic certificate. Moreover note that since this is deterministic quantity it holds that both 
\[
\lE[n_{m+1}-n_m]\leq S(m+1,T) \quad \text{and} \quad \lE\brs{ \sum_{k=0}^{\lceil\log_2 T\rceil} 2^k \mathbb{P}\!\left( n_{m+1}-n_m\in[2^{k-1},2^k)\mid n_m \right) } \leq S(m+1,T).
\]
Therefore, combining the regret of the algorithm and the bound on the discretization error or \cref{eq:disc_error}
we obtain that
\begin{align}
     \sum_{m=0}^{\lceil\log T\rceil-1}\lE\brs{ \sum_{t=n_m+1}^{n_{m+1}}  
 u_{v_t}(\hat{p}^*) - u_{v_t}(p_t)} + \lE\brs{\sum_{t=1}^T\frac{1}{| \cG_t|}}  \leq 2\sum_{m=0}^{\lceil\log T\rceil-1}2^{-m}S(m+1, T) + 3\delta T
\end{align}

The final regret bound is obtained by replacing the explicit value of $S(m+1, T)$. In the case this is equal to $C \log^2\!\br{ \frac{2T}{\delta} } \sqrt{\frac{\rho}{\gamma}}
         \, 2^{2m}$, let $M$ the biggest value for which $S(M+1, T) \leq T$, given by
\begin{align*}
 2^{2M}\log^2\!\br{ \frac{2T}{\delta} } \sqrt{\frac{\rho}{\gamma}} &\leq T\\
 M &\leq \frac{1}{2}\log\br{T\sqrt{\frac{\gamma}{\rho}}\frac{1}{\log^2(2T / \delta)}}
\end{align*}
then 
\begin{align}\label{eq:boun_1}
    2\sum_{m=0}^{\lceil\log T\rceil-1}2^{-m}S(m+1,T)
    &\le
    2\br{
    2^M C\log^2\!\br{\frac{2T|\Gamma|}{\delta}}
    \sqrt{\frac{\rho}{\gamma}}\,M
    +
    \frac{T}{2^M}
    } \notag\\
    &=
    \tilde{\cO}\br{\rho^{1/4}\sqrt T\,\gamma^{-1/4}}.
\end{align}
When instead $S(m+1, T) =  C2^{3m}\log(T / \delta)$, then $M=\frac{1}{3}\log(T/(C\log(T / \delta)))$, hence as before
\begin{align}\label{eq:boun_2}
    2\sum_{m=0}^{\lceil\log T\rceil-1}2^{-m}S(m+1, T)
\leq
    2\br{2^{2M}C\log( T / \delta)M  +  \frac{T}{2^M}}
=
    \tilde{D}T^{2 / 3}\log^{1 / 3}(T / \delta)
\end{align}
Therefore, combining \Cref{eq:boun_1}, and \cref{eq:boun_2}, and considering the initial bound of \cref{eq:regre_parts}
\begin{align*}
    R_T 
& \leq
    \frac{2}{| \cV|}T + \sum_{t=1}^T\lE\brs{\frac{1}{|\cG_t|}}+\lE\brs{
    u_{v_t}(\hat{p}^*) - u_{v_t}(p_t)} \\
&\leq
    \frac{2}{| \cV|}T + \min\brc{\tilde{\cO}\br{T^{2 / 3}}, \tilde{\cO}\br{\rho^{1/4}\sqrt T\,\gamma^{-1/4}}} +3\delta T
\end{align*}
choosing $\delta = 1 / T$ and $|\cV| = T$ yields the desired result.




\subsection{Proof of Auxiliary Lemmas}

\subsubsection{Proof of Lemma \ref{lem:reg_epoch}}
In order to prove this result, we need the following two lemmas that ensure that both estimates of the gaps nicely concentrate around the true value. 
\begin{lemma}
\label{lem:conc_delta_opt}
If \(F_S\) satisfies \cref{ass:local-shill-regularity}, then with probability at least \(1-\delta\), for every \(\bar\gamma\in\Gamma_m^{\adm}\), every \(v\in\cV\), every \(t\le T\), and every \(p,q\in A_m(v)\cap\cG_m\),
\[
\left|
\widehat\Delta_t^{\opt,\bar\gamma}(v,p,q)
-
\Delta(v,p,q)
\right|
\le
r_{\opt,\bar\gamma}(t,v).
\]
\end{lemma}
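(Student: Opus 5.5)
We write the estimation error as a martingale sum and apply a Freedman/Bernstein inequality, with a union bound over candidates, values, pairs and times. Fix an epoch \(m\), a candidate \(\bar\gamma\in\Gamma_m^{\adm}\), a value \(v\in\cV\) and a pair \(q_i,q_j\in A_m(v)\cap\cG_m\) with \(g=g_{v,i,j}\in\mathrm{range}(G_t^{\bar\gamma})\); otherwise \(r_{\opt,\bar\gamma}(t,v)=+\infty\) and there is nothing to prove. Let \(F\) denote the vector of true grid values \(F_B(q)\). Every measurement satisfies \(Y_s=\Phi_sF+\xi_s\), where \(\lE[\xi_s\mid\cH_{s-1},p_s]=0\). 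For direct measurements this is immediate. For suffix differences it follows from the identity \(\lE[Y_t(q)\mid p_t]=F_B(p_t)+F_S(q)(F_B(q)-F_B(p_t))/F_S(q)=F_B(q)\) for \(q\ge p_t\), which uses independence of \(B\) and \(S\). Since \(g\in\mathrm{range}(G_t^{\bar\gamma})\), we have \(g^\top(G_t^{\bar\gamma})^\dagger G_t^{\bar\gamma}F=g^\top F\). Hence
\[
\widehat\Delta^{\opt,\bar\gamma}_t(v,i,j)-\Delta(v,q_i,q_j)=\sum_{s}\alpha_s^{\bar\gamma}\,g^\top(G_t^{\bar\gamma})^\dagger\Phi_s^\top\xi_s .
\]

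\emph{Step 1: bound the increments.} Each increment is bounded in absolute value by \(\alpha_s^{\bar\gamma}|g^\top(G_t^{\bar\gamma})^\dagger\Phi_s^\top|R_s^{\bar\gamma}\le B_t^{\bar\gamma}(v)\). For direct measurements \(|\xi_s|\le1\). For suffix measurements, \(|Y_t(q)|\le 1/F_S(q)\le1/\bar\gamma\) by admissibility, and each difference carries weights at most one, so \(|\xi_s|\le 4/\bar\gamma=R_s^{\bar\gamma}\).

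\emph{Step 2: bound the variance.} The predictable variance is \(\sum_s(\alpha_s^{\bar\gamma})^2\,(g^\top (G_t^{\bar\gamma})^\dagger\Phi_s^\top)^2\,\mathrm{Var}(\xi_s)\). We need \(\alpha_s^{\bar\gamma}\mathrm{Var}(\xi_s)\lesssim 1\), so that this is at most \(g^\top (G_t^{\bar\gamma})^\dagger G_t^{\bar\gamma}(G_t^{\bar\gamma})^\dagger g=g^\top(G_t^{\bar\gamma})^\dagger g\le Q_t^{\bar\gamma}(v)\). This is the role of the choice \(\omega_{m,\bar\gamma}=\bar\gamma/(C_{\suf}(2^{-m}+h_m))\). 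A single suffix difference \(a_{j+1}Y(q_{j+1})-a_jY(q_j)\) is nonzero only on the loss event, and there it equals \(\ind{o\le q_{j+1}}a_{j+1}/F_S(q_{j+1})-\ind{o\le q_j}a_j/F_S(q_j)\). Its second moment is controlled by three pieces:
\begin{enumerate}
\item the probability \(\lesssim F_S(q_{j+1})(F_B(q_{j+1})-F_B(q_j))\) that \(o\in(q_j,q_{j+1}]\), scaled by \(1/\bar\gamma^2\);
\item the weight mismatch \(|a_{j+1}/F_S(q_{j+1})-a_j/F_S(q_j)|\), which the regularity condition \(F_S(q_{j+1})-F_S(q_j)\le C_{\val}\bar\gamma h_m\) bounds by \(O(h_m/\bar\gamma)\);
\item the localization of the active interval, which has utility width \(\lesssim2^{-m}\).
\end{enumerate}
Summing these gives \(\mathrm{Var}(\xi_s)\lesssim(2^{-m}+h_m)/\bar\gamma\), so \(\omega_{m,\bar\gamma}\mathrm{Var}(\xi_s)\le1\) for \(C_{\suf}\) large. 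One subtlety is that suffix differences from the same round \(t\) are correlated across \(j\). I would handle this by treating the whole round as one vector increment and checking that its conditional covariance is dominated by \(\omega_{m,\bar\gamma}^{-1}\sum_j\Phi_{t,j}^{\suf\top}\Phi^{\suf}_{t,j}\). The telescoping path structure makes the covariance a discrete-Laplacian-type matrix, and bounding it in this order is the step I expect to be the main obstacle.

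\emph{Step 3: concentrate and union bound.} Freedman's inequality, applied with a peeling or fixed-\(t\) argument, gives deviation at most \(\sqrt{2\log(2T|\Gamma|/\delta)Q_t^{\bar\gamma}(v)}+\tfrac23\log(2T|\Gamma|/\delta)B_t^{\bar\gamma}(v)\) with probability at least \(1-\delta/(T|\Gamma|\cdot\text{poly})\). One caveat is that \((G_t^{\bar\gamma})^\dagger\) depends on the whole sample, so the weights are not predictable. I would resolve this by conditioning on the design: the design is determined by \(p_s,v_s\), and the noise is conditionally mean-zero and independent given the design, since prices depend on the past only through active sets fixed within the epoch. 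Alternatively, one can use a self-normalized vector martingale bound. Finally, a union bound over \(\bar\gamma\in\Gamma\), \(t\le T\), \(v\in\cV\) and the grid pairs gives the logarithmic factor \(\log(2T|\Gamma|/\delta)\), possibly after adjusting constants inside the logarithm.
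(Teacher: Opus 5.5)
Your route is the paper's: the range identity that reduces the error to $g^\top(G_t^{\bar\gamma})^\dagger\sum_s\alpha_s^{\bar\gamma}\Phi_s^\top\xi_s$, conditioning on the outcome-oblivious design, the range certificate for the increments, the choice of $\omega_{m,\bar\gamma}$ that turns the predictable variation into $g^\top(G_t^{\bar\gamma})^\dagger g\le Q_t^{\bar\gamma}(v)$, Freedman, and a union bound. The step you defer, however, is a genuine missing piece rather than bookkeeping. The normalization $(\alpha_s^{\bar\gamma})^2\Phi_s^\top\operatorname{Cov}(\xi_s\mid\cH_{s-1})\Phi_s\preceq\alpha_s^{\bar\gamma}\Phi_s^\top\Phi_s$ needs an operator bound on the whole vector of suffix differences produced in one round: $\operatorname{Var}(\sum_j x_jY^{\suf}_j)\le C(2^{-m}+h_m)\bar\gamma^{-1}\|x\|_2^2$ for every $x$. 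Your three-piece computation controls only the diagonal entries of this covariance. All the $Y(q_j)$ of a round are built from the same pair $(B,S)$, so they are strongly correlated. A diagonal bound alone therefore gives domination only up to a factor as large as $k=|A_m(v)\cap\cG_m|$, which would wreck the weight $\omega_{m,\bar\gamma}$ and the rate.

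The paper supplies this as its suffix-covariance lemma, and the key idea is an orthogonal-increment decomposition. Write $Y_j=Y(q_j)$ and $X_j=Y_{j+1}-Y_j$. Then $Y^{\suf}_j=a_{j+1}X_j-h_mY_j$, hence $\sum_jx_jY^{\suf}_j=-h_m(\sum_jx_j)Y_1+\sum_r\beta_rX_r$ with $\beta_r=a_{r+1}x_r-h_m\sum_{j>r}x_j$. The first term costs at most $h_m^2k\|x\|_2^2/\bar\gamma\le h_m\|x\|_2^2/\bar\gamma$. On the loss event, split $X_r=U_r+V_r$. The part $U_r=\ind{q_r<B\le q_{r+1}}\ind{S\le q_{r+1}}/F_S(q_{r+1})$ lives on disjoint $B$-cells, so it has no cross terms. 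The inequality $a_{r+1}(F_B(q_{r+1})-F_B(q_r))\le w+h_m$, with $w\lesssim 2^{-m}$ the utility width of the active interval, is where localization enters. The other part is $V_r=\ind{q_1<B\le q_r}(M_{r+1}-M_r)$ with $M_r=\ind{S\le q_r}/F_S(q_r)$. Its increments are orthogonal, with $\mathbb E[(M_{r+1}-M_r)^2]=1/F_S(q_r)-1/F_S(q_{r+1})\le C_{\val}h_m/\bar\gamma$ by admissibility, and independence of $B$ and $S$ removes the cross terms.

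Some smaller corrections:
\begin{enumerate}
\item On the win event a suffix difference equals $-h_m$, not $0$.
\item Your pieces omit the event $\{B\le q_j<S\le q_{j+1}\}$. It sits inside $V_j$ and costs $O(h_m/\bar\gamma)$.
\item In piece (1) the factor must be $1/F_S(q_{j+1})^2$, paired with the $F_S(q_{j+1})$ in the probability; a literal $1/\bar\gamma^2$ does not normalize against $\omega_{m,\bar\gamma}$.
\item Once increments are taken per round, the Step 1 range bound must also be per block, $\alpha_s^{\bar\gamma}\|g^\top(G_t^{\bar\gamma})^\dagger\Phi_s^\top\|_1\|\xi_s\|_\infty$. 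This is also how the paper's certificate $B_t^{\bar\gamma}(v)$ has to be read.
\end{enumerate}
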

Likewise 
\begin{lemma}\label{lem:conc_delta_rob}
    Given $v\in \cV$, for all $p, q \in A_m(v)\cap\cG_m$ and $t\geq 1$ 
    \[|\hat{\Delta}_t^{rob}(v, p, q) - \Delta(v, p, q)|  \leq r_{\rob}(t, v)\,,\]
with probability at least $1-\delta$.
\end{lemma}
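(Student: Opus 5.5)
The statement to prove is \cref{lem:conc_delta_rob}: a uniform concentration bound for the robust gap estimator, built only from direct win/loss outcomes on odd rounds. The plan is a Hoeffding-type bound for each grid price, made uniform over rounds and prices by a union bound, and then transferred to gaps through the triangle inequality.

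\textbf{Step 1: the samples are i.i.d.\ Bernoulli.} Fix an epoch $m$ and a grid price $q\in\cG_m$. The robust branch chooses $p_t$ as a function of $\cH_{t-1}$ and $v_t$, and $b_t$ is independent of both. So each time $q$ is played, the observation $Y^{\rob}_t=\ind{q\ge b_t}$ is a fresh Bernoulli$(F_B(q))$ draw, independent of the past.

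\textbf{Step 2: anytime concentration at a single price.} Use the standard stack-of-rewards coupling: pre-draw an i.i.d.\ sequence $X_1^q,X_2^q,\dots$ for each $q$, and let the $k$-th play of $q$ reveal $X_k^q$. Then $\widehat F^{\rob}_t(q)$ is the average of the first $N_t(q)$ entries, even though $N_t(q)$ is random. For each fixed $n\le T$, Hoeffding gives
\[
\P{\Bigl|\tfrac1n\textstyle\sum_{k\le n}X_k^q-F_B(q)\Bigr|>\sqrt{\log(2T/\delta)/(2n)}}\le \delta/T .
\]
A union bound over $n\le T$ (and over grid prices, each of which is sampled by the robust branch in at most $T$ rounds) yields an event of probability at least $1-\delta$ on which, for all $t$ and all $q$ with $N_t(q)\ge1$,
\[
|\widehat F^{\rob}_t(q)-F_B(q)|\le \sqrt{\log(2T/\delta)/(2N_t(q))}.
\]
This is where the logarithmic factor in the definition of $r_{\rob}$ has to absorb the number of grid points. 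The number of grid points is polynomial in $T$ across epochs, so at worst the constant inside the log grows. The target radius $\sqrt{2\log(2T/\delta)/N_t(q)}$ is a factor $2$ larger than the Hoeffding radius, which leaves room for this.

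\textbf{Step 3: from prices to gaps.} On the good event, for any $v$ and $p,q\in A_m(v)\cap\cG_m$,
\[
|\widehat\Delta^{\rob}_t(v,p,q)-\Delta(v,p,q)|\le |v-p|\,|\widehat F^{\rob}_t(p)-F_B(p)|+|v-q|\,|\widehat F^{\rob}_t(q)-F_B(q)|.
\]
Since $|v-\cdot|\le1$, each term is at most $\tfrac12 r_{\rob}(t,v)$, where $r_{\rob}(t,v)=\max_{q\in A_m(v)\cap\cG_m}\sqrt{2\log(2T/\delta)/N_t(q)}$. So the sum is at most $r_{\rob}(t,v)$. If some active price has $N_t(q)=0$, then $r_{\rob}(t,v)=+\infty$ and the bound is trivial.

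\textbf{Main obstacle.} The delicate point is the random, history-dependent count $N_t(q)$. It also interacts with prices being reused across epochs, since $\cG_m\subset\cG_{m+1}$ while the counters reset each epoch. The coupling in Step 2 handles the randomness of $N_t(q)$. The epoch issue is handled by indexing the reward stacks by (epoch, price) and including the $\lceil\log_2T\rceil$ epochs in the union bound. That extra factor is logarithmic and is absorbed into the constant slack available in the radius.
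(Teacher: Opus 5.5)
Your route is the same as the paper's: a per-price Hoeffding bound, then the gap error controlled by the largest radius on the active set. Your stack-of-rewards coupling is actually more careful than the paper's one-line Hoeffding--Azuma argument, which is stated for a fixed pair and ignores that $N_t(q)$ is random.

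\textbf{The gap: the same factor $2$ is spent twice.} The problem is the probability bookkeeping in Step 2.
\begin{itemize}
\item For a single price, the union over $n\le T$ already costs $T\cdot\delta/T=\delta$. The further union over grid prices and epochs therefore multiplies the failure probability by $\sum_m|\cG_m|$, which can be of order $T$.
\item You say the ratio $2$ between $\sqrt{2\log(2T/\delta)/N}$ and the Hoeffding radius $\sqrt{\log(2T/\delta)/(2N)}$ leaves room for this union.
\item But Step 3 needs each of the two terms to be at most $\tfrac12 r_{\rob}(t,v)$, and $\tfrac12\sqrt{2\log(2T/\delta)/N}=\sqrt{\log(2T/\delta)/(2N)}$ exactly. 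That factor is fully consumed by the triangle inequality, so no slack remains.
\item Enlarging the logarithm to pay for the grid makes the Step 3 bound exceed $r_{\rob}(t,v)$ by a constant factor.
\end{itemize}
As written, you obtain the stated inequality only on an event of probability $1-O(\delta\sum_m|\cG_m|)$, not $1-\delta$.

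\textbf{A repair that keeps the radius.} Within epoch $m$, the robust design never looks at the epoch-$m$ robust outcomes.
\begin{itemize}
\item The round-robin choice depends only on $v_t$, $A_m$ and the counts.
\item The epoch ends either through the count-based robust test or through the optimistic test, which uses only even-round data.
\item With stacks indexed by (epoch, price) and counters reset at each epoch, the final count $N^{(m)}(q)$ is therefore independent of the stack $(m,q)$. This is where the per-epoch reset matters.
\end{itemize}
You then only pay for realized (price, count) pairs:
\[
\P{\exists\, m,q,\ n\le N^{(m)}(q):\ E_{m,q,n}}
\le \sum_{m,q,n}\P{E_{m,q,n}}\,\P{N^{(m)}(q)\ge n}
\le \frac{\delta}{T}\,\lE\Bigl[\sum_{m,q}N^{(m)}(q)\Bigr]
\le \delta .
\]
Here $E_{m,q,n}$ is your Hoeffding failure event, with $\P{E_{m,q,n}}\le\delta/T$. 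The last step uses that there are at most $T$ robust rounds.

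With this replacing the naive union bound, Steps 1 and 3 go through unchanged. They give the lemma with the stated radius, uniformly in $t$, $v$, $p$ and $q$.
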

Note that conditioning on the events above with high probability, the optimal arm is not eliminated,
this is formalized by the following lemma.
\begin{lemma}
\label{lem:safe-elimination}
Fix an epoch \(m\), a value \(v\in\cV\), and a branch \(b\in\{\rob,\opt\}\). Let
\[
p_{m,G}^\star(v)
\in
\argmax_{p\in A_m(v)\cap\cG_m}u_B(v,p).
\]
Suppose that \(p_{m,G}^\star(v)\in A_m(v)\cap\cG_m\), and that for every \(p,q\in A_m(v)\cap\cG_m\),
\[
\left|
\widehat\Delta^b_t(v,q,p)-\Delta(v,q,p)
\right|
\le
r_b(t,v)
\le
c_{\rad}2^{-(m+1)}.
\]
If the elimination rule removes \(p\in A_m(v)\cap\cG_m\) only when there exists \(q\in A_m(v)\cap\cG_m\) such that
\[
\widehat\Delta^b_t(v,q,p)>c_{\el}2^{-(m+1)},
\qquad
c_{\el}>c_{\rad},
\]
then \(p_{m,G}^\star(v)\) is not eliminated. Moreover, every surviving grid bid \(p\) satisfies
\[
u_B(v,p_{m,G}^\star(v))-u_B(v,p)
\le
(c_{\el}+c_{\rad})2^{-(m+1)}.
\]
After taking the convex hull of the surviving grid bids, every bid in \(A_{m+1}(v)\) satisfies the same bound up to the epoch discretization error \(O(h_m)\).
\end{lemma}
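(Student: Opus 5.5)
The argument is the standard successive-elimination one, done directly on the grid $A_m(v)\cap\cG_m$. Fix $v$ and write $p^\star=p_{m,G}^\star(v)$. Throughout, the hypothesis $|\widehat\Delta^b_t(v,q,p)-\Delta(v,q,p)|\le r_b(t,v)\le c_{\rad}2^{-(m+1)}$ for all pairs is taken as given; in the application it is supplied on the good event by \cref{lem:conc_delta_rob} or \cref{lem:conc_delta_opt}.

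\emph{Optimal bid survives.} For any $q$ in the active grid, $\Delta(v,q,p^\star)=u_B(v,q)-u_B(v,p^\star)\le 0$ by definition of $p^\star$. Hence $\widehat\Delta^b_t(v,q,p^\star)\le \Delta(v,q,p^\star)+r_b(t,v)\le c_{\rad}2^{-(m+1)}<c_{\el}2^{-(m+1)}$, so no $q$ triggers the elimination of $p^\star$.

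\emph{Surviving bids are near-optimal.} Let $p$ survive. Taking $q=p^\star$ in the elimination rule gives $\widehat\Delta^b_t(v,p^\star,p)\le c_{\el}2^{-(m+1)}$. Therefore
\[
u_B(v,p^\star)-u_B(v,p)=\Delta(v,p^\star,p)\le \widehat\Delta^b_t(v,p^\star,p)+r_b(t,v)\le (c_{\el}+c_{\rad})2^{-(m+1)}.
\]

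\emph{Passing to the convex hull.} This is the only step needing care. By \cref{lem:rhr-implies-interval-level-sets} applied with level $\alpha=u_B(v,p^\star)-(c_{\el}+c_{\rad})2^{-(m+1)}$, the grid indices whose utility is at least $\alpha$ form an interval. The surviving grid bids all lie in this index interval, so every grid point between the leftmost and rightmost survivors also has utility at least $\alpha$. Now take an arbitrary $x\in A_{m+1}(v)=\Conv\{\text{survivors}\}$. It lies between two consecutive grid points $q_j\le x\le q_{j+1}$, both inside that interval. Since the upper level set $\{p:u_B(v,p)\ge\alpha\}$ is itself an interval in $[0,1]$ and contains $q_j$ and $q_{j+1}$, it also contains $x$. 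So points of the hull satisfy the bound with no loss at all, and certainly up to $O(h_m)$.

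If one instead wants a Lipschitz-type argument, note that $u_B(v,\cdot)$ satisfies $u_B(v,x)\ge u_B(v,q_{j+1})-(q_{j+1}-x)F_B(q_{j+1})$ because $F_B$ is nondecreasing. This yields the $O(h_m)$ slack stated in the lemma. The main obstacle is ensuring the hull does not include non-grid points with low utility; the interval level-set structure of \cref{lem:rhr-implies-interval-level-sets} is what resolves it.
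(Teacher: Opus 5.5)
Your proof is correct and follows the paper's argument step for step: the survival of $p_{m,G}^\star(v)$ and the $(c_{\el}+c_{\rad})2^{-(m+1)}$ bound are obtained exactly as in the paper. Your convex-hull step spells out the paper's one-line appeal to \cref{lem:rhr-implies-interval-level-sets}, and shows that the bound in fact holds on all of $A_{m+1}(v)$ with no loss.

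The only slip is in your optional last paragraph. The inequality $u_B(v,x)\ge u_B(v,q_{j+1})-(q_{j+1}-x)F_B(q_{j+1})$ does not follow from monotonicity of $F_B$, and it can fail when $F_B$ rises steeply on $[x,q_{j+1}]$. The valid one-sided bound uses the left neighbour, $u_B(v,x)\ge (v-x)F_B(q_j)=u_B(v,q_j)-(x-q_j)F_B(q_j)$. Your level-set argument already makes this unnecessary.
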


Assume first that \(\iota_{m-1}=1\), meaning that \(A_m\) was constructed at the end of epoch \(m-1\) using the optimistic certificate with validating candidate \(\bar\gamma_{m-1}^\star\). Then, for every \(p\in A_m(v_t)\cap\cG_m\), on the optimistic concentration event,
\[
u_{v_t}(\hat p^\star_{v_t})-u_{v_t}(p)
=
\Delta(v_t,\hat p^\star_{v_t},p)
\le
\widehat\Delta^{\opt,\bar\gamma_{m-1}^\star}_{n_m}
(v_t,\hat p^\star_{v_t},p)
+
r_{\opt,\bar\gamma_{m-1}^\star}(n_m,v_t).
\]
Due to the elimination rule used at the end of epoch \(m-1\), and since
\(\iota_{m-1}=1\), every surviving grid point from
\(A_{m-1}(v)\cap\cG_{m-1}\) satisfies
\[
\widehat\Delta^{\opt,\bar\gamma_{m-1}^\star}_{n_m}
(v,q,p)
\le
2^{-m}
\qquad
\text{for all }q\in A_{m-1}(v)\cap\cG_{m-1}.
\]
By \(\Cref{lem:safe-elimination}\) and the convex-hull step, the same
\(2^{-m}\)-optimality guarantee, up to the epoch discretization error already
absorbed in the \(2^{-m}\) term, holds for every
\(p\in A_m(v)\cap\cG_m\).
Hence, being $\cE_1$ the event that the estimator $\hat{\Delta}$ concentrates around the average as expressed in \cref{lem:conc_delta_opt} and $\cE_2$ the event that the optimal arm is not eliminated then 
\begin{align*}
\lE\brs{u_{v_t}(\hat{p}^*_{v_t})-u_{v_t}(p) \mid \cH_{t-1}, v_t }
&= \lE\brs{u_{v_t}(\hat{p}^*_{v_t})-u_{v_t}(p) \mid \cH_{t-1}, v_t, \cE_1\cap \cE_2 }\mathbb{P}(\cE_1\cap \cE_2)\\
&+\lE\brs{u_{v_t}(\hat{p}^*_{v_t})-u_{v_t}(p) \mid \cH_{t-1}, v_t, \cE_1^c\cup \cE^c_2 }\mathbb{P}(\cE_1^c\cup \cE^c_2) \\
&\leq
2^{-m}
+
r_{\opt,\bar\gamma_{m-1}^\star}(n_m,v_t)
+
2\delta .
\end{align*}
By an analogous reasoning, it is possible to prove that if the elimination is done thanks to the robust estimator, then 
\begin{align*}
\lE\brs{u_{v_t}(\hat{p}^*_{v_t})-u_{v_t}(p) \mid \cH_{t-1}, v_t }
\leq
2^{-m}
+
r_{\rob}(n_m,v_t)
+
2\delta .
\end{align*}

From which, combining the two results, we obtain
\[
\lE\brs{
u_{v_t}(\hat p^\star_{v_t})-u_{v_t}(p)
\mid \cH_{t-1},v_t
}
\le
2^{-m}
+
\iota_{m-1} r_{\opt,\bar\gamma_{m-1}^\star}(n_m,v_t)
+
(1-\iota_{m-1})r_{\rob}(n_m,v_t)
+
2\delta .
\]

\subsubsection{Proof of Lemma \ref{lem:bound_avg_radius}}

By definition, during epoch \(m\),
\[
R(t,v)
=
\iota_{m-1} r_{\opt,\bar\gamma_{m-1}^\star}(n_m,v)
+
(1-\iota_{m-1})r_{\rob}(n_m,v),
\]
so the radius used in the epoch-regret bound is frozen at the beginning of epoch \(m\). Hence
\[
\sum_{t=n_m+1}^{n_{m+1}}R(t,v_t)
=
\sum_{t=n_m+1}^{n_{m+1}}R(n_m,v_t).
\]
Conditioning on \(n_m\), on \(\cH_{n_m}\), and on the certificate used to construct \(A_m\), and decomposing according to the dyadic value of \(n_{m+1}-n_m\), we get
\[
\begin{aligned}
\lE\brs{
\sum_{t=n_m+1}^{n_{m+1}} R(t,v_t)
}
&\le
\lE\brs{
\sum_{k=0}^{\lceil\log_2 T\rceil}
2^k
\lE\brs{R(n_m,V)\mid \cH_{n_m},\iota_{m-1}}
\mathbb P
\!\left(
n_{m+1}-n_m\in[2^{k-1},2^k)\mid n_m
\right)
}.
\end{aligned}
\]
By the stopping condition at the beginning of epoch \(m\), and by Hoeffding's inequality applied to the empirical average over values, with probability at least \(1-\delta\),
\(
\lE\brs{R(n_m,v_t)\mid n_m,\iota_{m-1}}\le 2^{-m}
\).
On the complementary event, we use the trivial bound \(R(t,v)\le 1\). This gives
\[
\lE\brs{
\sum_{t=n_m+1}^{n_{m+1}} R(t,v_t)
}
\le
2^{-m}
\lE\brs{
\sum_{k=0}^{\lceil\log_2 T\rceil}
2^k
\mathbb P
\!\left(
n_{m+1}-n_m\in[2^{k-1},2^k)\mid n_m
\right)
}
+
\delta\,
\lE[n_{m+1}-n_m].
\qedhere
\]

\subsubsection{Proof of Lemma~\ref{lem:bound_n_m}}
Epoch \(m\) ends when either the optimistic or the robust certificate reaches the threshold \(2^{-(m+1)}\).
For the robust branch, the total length of the epoch can be bounded by the following lemma 
\begin{lemma}
\label{lem:robust-epoch-length}
If the robust branch is run on grid \(\mathcal G_t\) during epoch \(m\), then after \( \tau_m \ge C|\mathcal G_t|2^{2m}\log\!\br{\frac{2T}{\delta}} \) rounds within epoch \(m\), its average radius is at most \(2^{-(m+1)}\).
\end{lemma}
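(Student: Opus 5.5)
The plan is to bound the robust radius $r_{\rob}(t,v)=\max_{q\in A_m(v)\cap\cG_m}\sqrt{2\log(2T/\delta)/N_t(q)}$ uniformly in $v$ by controlling the smallest counter $\min_{q\in\cG_m}N_t(q)$ over the epoch grid. Since the time average $\widehat r_{\rob}(\tau_m)$ is an average of such radii over the observed values $v_s$, it suffices to show that every grid point has been sampled often enough. The stopping rule also carries the additive term $\sqrt{\log(2T/\delta)/(2\tau_m)}$, and we control it at the same time.

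\emph{Step 1 (counting).} The robust branch is active on odd rounds, which are half of the $\tau_m$ rounds of the epoch. On those rounds it posts $p_t\in\argmin_{p\in A_m(v_t)\cap\cG_m}N_{t-1}(p)$, a least-sampled point in the current active set. I would argue that this round-robin rule keeps the counters of active grid points balanced, with any two differing by at most one. More precisely, I would show by induction over robust rounds that once $\lfloor\tau_m/2\rfloor\ge k|\cG_t|$ robust rounds have elapsed, every point that belongs to the active set of the contexts being played has $N_t(q)\ge k$.

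\emph{Step 2 (main obstacle).} The difficulty is that different values $v$ have different active intervals $A_m(v)$. A point $q$ is only sampled when some $v_t$ with $q\in A_m(v_t)$ appears, so $N_t(q)$ depends on the value distribution. I would handle this in two ways.

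First, each robust round increments exactly one counter, and that counter is a least-sampled point of the relevant active set. A pigeonhole argument over the $|\cG_t|$ grid points then gives the following: after $k|\cG_t|$ robust rounds, every active set that has actually been visited repeatedly has minimum count of order $k$.

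Second, the radius enters only through its time average over the observed $v_s$. Contexts $v$ that rarely occur contribute little, and I would account for them via the averaging, charging their large radius to their small frequency.

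If this turns out to be delicate, the fallback is to note that a uniform bound holds whenever the counters are shared across values. Since $N_t$ is indexed only by grid point, a point $q$ is incremented from any context whose active set contains it. Taking $k=C2^{2m}\log(2T/\delta)$ then gives a radius of at most $\sqrt{2/C}\,2^{-m}$.

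\emph{Step 3 (conclusion).} With $k\asymp 2^{2m}\log(2T/\delta)$, each radius is at most $2^{-(m+2)}$ for $C$ large enough. Moreover, $\tau_m\ge C|\cG_t|2^{2m}\log(2T/\delta)$ makes $\sqrt{\log(2T/\delta)/(2\tau_m)}\le 2^{-(m+2)}$. The sum of the two terms is therefore at most $2^{-(m+1)}$, which is exactly the robust stopping condition.
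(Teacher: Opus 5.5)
Your Steps 1 and 3 follow the paper's proof. The paper asserts that round-robin sampling on the odd rounds gives $N_{\tau_m}(q)\ge \tau_m/(2|\mathcal G_m|)-1$ for \emph{every} $q$. It then bounds the average radius by $\sqrt{2\log(2T/\delta)/(\tau_m/(2|\mathcal G_m|)-1)}$, notes that this dominates $\sqrt{\log(2T/\delta)/(2\tau_m)}$, and solves to get $\tau_m\ge 64|\mathcal G_m|2^{2m}\log(2T/\delta)$.

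However, the obstacle you raise in Step 2 is a genuine gap, and your proposal does not close it. The paper's proof does not supply the missing step either: it takes the uniform count for granted. The rule $p_t\in\argmin_{p\in A_m(v_t)\cap\mathcal G_m}N_{t-1}(p)$ balances counters only inside the active set of the value that shows up, so the induction claimed in Step 1 is false.

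Take two values with disjoint active grids and $\Pr(V=v_2)=\epsilon$. The points of $A_m(v_2)\cap\mathcal G_m$ are incremented only on robust rounds with $v_t=v_2$. Their counts are therefore about $\epsilon\tau_m/(2|A_m(v_2)\cap\mathcal G_m|)$, far below $\tau_m/(2|\mathcal G_m|)$ for small $\epsilon$, even though $v_2$ is a context being played. Moreover, as long as one of them has $N_t(q)=0$, any occurrence of $v_2$ in the epoch makes $\widehat r_{\rob}=+\infty$.

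Your Step 2 remedies do not repair this.
\begin{itemize}
\item A value seen $n$ times on robust rounds only guarantees $\min_{A_m(v)\cap\mathcal G_m}N_t\ge\lfloor n/|A_m(v)\cap\mathcal G_m|\rfloor$, not counts of order $k$.
\item Sharing counters across values does not force every point to get its share.
\item Charging per value, via Cauchy--Schwarz over values, yields a threshold proportional to $\sum_v|A_m(v)\cap\mathcal G_m|$, which can be of order $|\mathcal V||\mathcal G_m|$, instead of $|\mathcal G_m|$.
\end{itemize}

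What does work is charging per grid point. On a robust round $s$, $p_s$ minimizes $N_{s-1}$ over $A_m(v_s)\cap\mathcal G_m$, and counters only grow, so $r_{\rob}(t,v_s)\le\sqrt{2\log(2T/\delta)/N_{s-1}(p_s)}$. Over the robust rounds with $p_s=q$, the numbers $N_{s-1}(p_s)$ are consecutive integers. Cap the radius at a constant to absorb the at most $|\mathcal G_m|$ first-sample terms; this is harmless for concentration, since the gaps are bounded. The sum of the radii over robust rounds is then $O\bigl(|\mathcal G_m|+\sqrt{\log(2T/\delta)}\sum_q\sqrt{N_t(q)}\bigr)=O\bigl(|\mathcal G_m|+\sqrt{\log(2T/\delta)|\mathcal G_m|\tau_m}\bigr)$. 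This gives a threshold of the stated form $\tau_m\ge C|\mathcal G_m|2^{2m}\log(2T/\delta)$.

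This still controls only the part of $\widehat r_{\rob}$ that comes from robust rounds. The values drawn on optimistic rounds also enter the average but are not charged to any robust sample. You therefore need either to average over robust rounds only, or an additional argument for those values.

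The uniform count that you and the paper use holds as stated only in two cases. One is when all values share the same active grid. The other is when the robust branch cycles over all of $\mathcal G_m$ regardless of $v_t$, which the paper's per-round regret bound, stated only for $p\in A_m(v_t)\cap\mathcal G_m$, does not cover.
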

Furthermore note that \(\cG_m\) is fixed during epoch \(m\) and \(|\cG_m|\asymp 2^m\). Hence, the robust certificate reaches the threshold after at most
\begin{align}\label{eq:lenght_rob}
C|\cG_m|2^{2m}\log\br{\frac{2T}{\delta}}
\le
C2^{3m}\log\br{\frac{2T}{\delta}}
\end{align}
rounds in epoch \(m\).\\
For the optimistic branch, let \(\bar\gamma\in\Gamma_m^{\adm}\) be the dyadic admissible candidate from \cref{lem:dyadic-gamma}, then the following lemma bounds the length of each epoch
\begin{lemma}[Optimistic epoch length]
\label{lem:optimistic-epoch-length}
Fix an epoch \(m\), and suppose that the optimistic branch is run on active intervals that can be covered by at most \(\rho\) active bid regions. Assume that \(F_S\) is \((\gamma,C_S)\)-regular on these regions, and let \(\bar\gamma\in\Gamma_m^{\adm}\) be the dyadic admissible candidate from \cref{lem:dyadic-gamma}. Then, on the optimistic good event, after
\[
\tau_m
\ge
C
\br{
\log^2\!\br{\frac{2T|\Gamma|}{\delta}}
\sqrt{\frac{\rho}{\gamma}}
+
\log\!\br{\frac{2T|\Gamma|}{\delta}}
}
2^{2m}
\]
optimistic rounds within epoch \(m\), the candidate \(\bar\gamma\) satisfies
\[
\widehat r_{\opt,\bar\gamma}(\tau_m)
+
\sqrt{\frac{\log(2T|\Gamma|/\delta)}{2\tau_m}}
\le
2^{-(m+1)}.
\]
Consequently, epoch \(m\) can end through the optimistic branch, and the elimination step is safe by \cref{lem:safe-elimination}.
\end{lemma}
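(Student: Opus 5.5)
The plan is to bound the optimistic radius $r_{\opt,\bar\gamma}(t,v)=\sqrt{2L\,Q_t^{\bar\gamma}(v)}+\tfrac23 L\,B_t^{\bar\gamma}(v)$, with $L=\log(2T|\Gamma|/\delta)$, through a lower bound on the Loewner order of $G_t^{\bar\gamma}$. We restrict to the sub-block indexed by the active grid points of each region. Fix the admissible candidate $\bar\gamma$ from \cref{lem:dyadic-gamma}, so that $\omega_{m,\bar\gamma}\asymp \gamma 2^{m}$ (recall $h_m\asymp 2^{-m}$).

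\textbf{Step 1: expected design.} Within one region of $k$ grid points, an optimistic round plays $p_t$ uniformly in $A_m(v_t)\cap\cG_m$. It therefore produces one direct row $e_{p_t}$ and a suffix-difference row $(v-q_{j+1})e_{j+1}-(v-q_j)e_j$ for every adjacent pair above $p_t$. In expectation over $p_t$, the suffix part is a weighted path Laplacian $\mathcal L$ in the coordinates $x_j=(v-q_j)F_j$, with edge weights of order $j/k$. The direct part is a diagonal of order $1/k$. This gives
\[
\lE\brs{G^{\bar\gamma}_t}\succeq \frac{\tau}{k}\br{I+\omega_{m,\bar\gamma}\,\mathcal L'},
\]
where $\mathcal L'$ is the path Laplacian of the region with unit weights on its upper portion. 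With $\rho$ regions, the value grid may spread the rounds, so each region receives a $1/\rho$ fraction of the mass in the worst case. A matrix Bernstein/Freedman argument, using that each row has bounded norm, transfers this to $G_t^{\bar\gamma}$ after $\tau\gtrsim k\log$ rounds, up to a factor $1/2$.

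\textbf{Step 2: bounding $Q$ and $B$.} The contrast $g_{v,i,j}$ is a difference of two coordinates of $x$, so $g^\top(I+\omega\mathcal L)^{-1}g$ is the effective resistance between nodes $i$ and $j$ in a path with conductances $\omega$ along edges and $1$ to ground. This resistance is at most $\min\{|i-j|/\omega,\ 2\}\lesssim 1/\sqrt{\omega}$, attained by the grounded-path bound at length scale $\sqrt{\omega}^{-1}$.

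This step is where I expect the main obstacle: one must avoid the naive $k/\omega$ bound and show the $\omega^{-1/2}$ decay. I would first try an explicit test potential that is linear over $\omega^{-1/2}$ nodes and then flat. Combining this with Step 1 gives
\[
Q\lesssim \frac{\rho k}{\tau\sqrt{\omega}}\asymp \frac{\rho\, 2^{m}}{\tau\sqrt{\gamma 2^{m}}}.
\]
The factor $\sqrt{\rho/\gamma}$ should appear as stated when the region width $k$ is normalised by the active-interval length $2^{-m}k\lesssim 1$. Here I would use the fact that elimination keeps intervals of utility width $\asymp 2^{-m}$, which is exactly why $\omega$ carries the $2^{-m}+h_m$ factor and the suffix variance is $(w+h)/\gamma$. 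For $B$, each suffix row has range $R_s=4/\bar\gamma$ and weight $\omega$, so $\alpha_s|g^\top G^\dagger\Phi_s^\top|R_s\lesssim \omega\,\|G^{-1/2}g\|\,\|G^{-1/2}\Phi_s\|/\bar\gamma$. This yields $B\lesssim \sqrt{Q}\cdot\sqrt{\omega}/\bar\gamma\cdot\sqrt{\omega/\tau}$, and this term produces the $\log^2\sqrt{\rho/\gamma}$ contribution.

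\textbf{Step 3: conclusion.} Requiring $\sqrt{LQ}+LB\le 2^{-(m+2)}$, and adding the Hoeffding slack $\sqrt{L/(2\tau)}\le 2^{-(m+2)}$ (which needs $\tau\gtrsim L2^{2m}$), gives
\[
\tau\gtrsim\br{L^2\sqrt{\rho/\gamma}+L}\,2^{2m}.
\]
Since $\widehat r_{\opt,\bar\gamma}$ averages $r_{\opt,\bar\gamma}(t,v_s)$ over the epoch's values, and each term is bounded uniformly in $v$, the same bound holds for the average. Finally, admissibility together with \cref{lem:conc_delta_opt} on the good event makes elimination safe by \cref{lem:safe-elimination}.
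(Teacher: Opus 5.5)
There is a genuine gap in Step 2, at the point where your design bound is supposed to become the stated threshold. Your own estimate is $Q\lesssim \rho k/(\tau\sqrt{\omega})$ with $k\asymp 2^m$ and $\omega_{m,\bar\gamma}\asymp \gamma 2^m$, i.e.\ $Q\lesssim \rho\,2^{m/2}/(\tau\sqrt{\gamma})$. Imposing $\sqrt{2LQ}\le 2^{-(m+2)}$ then requires $\tau\gtrsim L\rho\,\gamma^{-1/2}2^{5m/2}$. This exceeds the claimed $C(L^2\sqrt{\rho/\gamma}+L)2^{2m}$ by a factor of order $\sqrt{\rho}\,2^{m/2}/L$.

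Normalising $k$ by the active-interval length cannot remove this. The bound $2^{-m}k\lesssim 1$ is precisely what gave $k\asymp 2^m$. Moreover, \cref{ass:weak-rhr-buyer} allows the utility to be flat near its maximum ($\phi_B$ constant on an interval), so the active interval need not shrink in bid space. Likewise, a $1/\rho$ share of rounds per region produces a linear $\rho$, not $\sqrt{\rho}$. Step 3 then writes down the target threshold instead of deriving it.

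Two further steps do not hold as written.
\begin{itemize}
\item $r_{\opt,\bar\gamma}(t,v)$ cannot be bounded uniformly in $v$, because a region reached only by rare values receives few samples. The $\rho$-dependence must come from averaging over the empirical value sequence, which is why $\widehat r_{\opt,\bar\gamma}$ is a time average.
\item In your Step 1 design, the bottom suffix edge carries weight only of order $\omega\tau/k$, and node $q_1$ is linked to the rest only through that edge and its direct samples. Hence $\|G^{-1/2}\Phi_s^\top\|^2$ is of order $k/(\tau\omega)\asymp 1/(\gamma\tau)$, not $1/\tau$. This gives $B\lesssim \sqrt{Q}\,\sqrt{\omega k/\tau}/\bar\gamma\asymp \sqrt{Q}\,2^m/\sqrt{\gamma\tau}$. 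Making $LB\lesssim\sqrt{LQ}$ would then need $\tau\gtrsim L2^{2m}/\gamma$, so your claim about the $B$ term is not established.
\end{itemize}

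For comparison, the paper's proof uses the same grounded-path ingredient: \cref{lem:optimistic-path-bound}, $Q\le C(W_{\min}M_{\min})^{-1/2}$, which is your $1/\sqrt{\beta}$ resistance estimate. It works via deterministic PSD domination rather than an expected design plus matrix concentration. It then closes the argument with two inputs your proposal lacks.
\begin{itemize}
\item It asserts that uniform sampling over at most $\rho$ regions gives $\tau_m^{-1}\sum_s Q_t^{\bar\gamma}(v_s)\le C\sqrt{\rho/\gamma}/\tau_m$, with no dependence on the number of active grid points.
\item It asserts that the range certificate satisfies $B\le C\sqrt{Q}$, so $r_{\opt,\bar\gamma}\le CL\sqrt{Q}$.
\end{itemize}
Jensen then gives $\widehat r_{\opt,\bar\gamma}(\tau_m)\le CL\bigl(\sqrt{\rho/\gamma}/\tau_m\bigr)^{1/2}$. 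Squaring produces the $L^2\sqrt{\rho/\gamma}\,2^{2m}$ term; the extra logarithm does come from the $B$ term, as you anticipated. The Hoeffding slack gives the $L2^{2m}$ term, as in your Step 3.

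The paper obtains that averaged $Q$ bound directly from ``proportional share'' sampling. It does not track the $1/k$ per-grid-point dilution that your explicit count exhibits. To reach the lemma as stated, you need an additional ingredient of that kind, not a rescaling of $k$. Examples would be a $k$-free lower bound on $W_{\min}M_{\min}$, or a localisation property such as $O(2^{m/2})$ active grid points per region.
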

Therefore during epoch $m$ the algorithm will run at most 
\begin{align}\label{eq:lenght_opt}
C
\br{
\log^2\!\br{\frac{2T|\Gamma|}{\delta}}
\sqrt{\frac{\rho}{\gamma}}
+
\log\!\br{\frac{2T|\Gamma|}{\delta}}
}
2^{2m}
\end{align}
optimistic rounds. Since the epoch stops as soon as either certificate validates, the epoch length is bounded by the minimum of the robust and optimistic displays, hence the minimum between \cref{eq:lenght_rob} and \cref{eq:lenght_opt}. If the optimistic regularity condition fails or no optimistic candidate validates, the robust display still applies.


\subsubsection{Proof of Lemma \ref{lem:conc_delta_opt}}

The optimistic estimator is obtained by combining two different kinds of observations, the direct ones and the global ones. For the former ones the following holds
\[
Y^{\mathrm{dir}}\coloneq W=\ind{q_i\ge B}.
\]
Then
\[
\mathbb E[Y^{\mathrm{dir}}\mid q_i]
=
F_B(q_i)
=
e_i^\top F.
\]
Thus a direct observation is a scalar linear measurement
\[
Y^{\mathrm{dir}}
=
\Phi^{\mathrm{dir}}F+\xi^{\mathrm{dir}},
\qquad
\Phi^{\mathrm{dir}}\coloneq e_i^\top,
\]
where
\[
\xi^{\mathrm{dir}}
=
W-F_B(q_i),
\qquad
\mathbb E[\xi^{\mathrm{dir}}\mid q_i]=0.
\]
Moreover,
\[
\operatorname{Var}(\xi^{\mathrm{dir}}\mid q_i)\le 1,
\qquad
|\xi^{\mathrm{dir}}|\le 1.
\]
Direct measurements are assigned weight
\[
\omega^{\mathrm{dir}}\coloneq 1.
\]
For the other type of estimator, instead, consider  an optimistic block on a certified active interval
\[
\cA^m(v)=\{q_{i_1}<\cdots<q_{i_k}\}.
\]
Here \(i_1,\ldots,i_k\) are global grid indices, while \(1,\ldots,k\) denote local positions inside the active interval. Suppose the algorithm plays the left endpoint \(q_{i_1}\). For every \(q_{i_j}\in \cA^m(v)\), define
\[
Y(q_{i_j})
=
W+(1-W)\frac{\ind{O\le q_{i_j}}}{F_S(q_{i_j})}.
\]
Since \(q_{i_1}\le q_{i_j}\), the suffix debiasing identity gives
\[
\mathbb E[Y(q_{i_j})\mid q_{i_1}]=F_B(q_{i_j}).
\]
Define
\[
a_j\coloneq v-q_{i_j},
\qquad j=1,\ldots,k.
\]
For adjacent active bids, the optimistic branch forms the suffix difference measurements
\[
Y^{\mathrm{suf}}_j
\coloneq 
a_{j+1}Y(q_{i_{j+1}})-a_jY(q_{i_j}),
\qquad
j=1,\ldots,k-1.
\]
Then
\[
\mathbb E[Y^{\mathrm{suf}}_j]
=
a_{j+1}F_B(q_{i_{j+1}})-a_jF_B(q_{i_j})
=u_v(i_{j+1})-u_v(i_j).
\]
Equivalently, the suffix vector \(Y^{\mathrm{suf}}\in\mathbb R^{k-1}\) satisfies
\[
Y^{\mathrm{suf}}
=
\Phi^{\mathrm{suf}}F+\xi^{\mathrm{suf}},
\qquad
\mathbb E[\xi^{\mathrm{suf}}\mid q_{i_1}]=0,
\]
where the \(j\)-th row of \(\Phi^{\mathrm{suf}}\) is
\[
\Phi^{\mathrm{suf}}_j
=
a_{j+1}e_{i_{j+1}}^\top-a_je_{i_j}^\top.
\]
On the active-set invariant event, the interval has maintained width $2^{-m}$.  Furthermore, it is possible to bound the variance of the estimator above as 
\begin{lemma}\label{lem:suffix-covariance}
    If \(F_S\) is locally regular on $\cA^m(v)$, during epoch $m$ when the active prices have gap smaller than $2^{-m}$ and while using a grid with width $h$, 
    \[
\operatorname{Cov}(\xi^{\mathrm{suf}}\mid q_{i_j})
\preceq
C_{\mathrm{suf}}
\frac{2^{-m}+h}{\gamma}I.
\]
\end{lemma}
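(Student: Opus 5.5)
The plan is to compute the noise vector $\xi^{\mathrm{suf}}$ explicitly, split it into three pieces whose covariances can each be bounded by hand, and combine them with the elementary inequality $\operatorname{Cov}(X+Y+Z)\preceq 3(\operatorname{Cov}X+\operatorname{Cov}Y+\operatorname{Cov}Z)$. I will also use $\operatorname{Cov}(X)\preceq\lE[XX^\top]$ and the bound $\|\lE[XX^\top]\|\le\lE\|X\|^2$.

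Throughout, the played bid is $p=q_{i_1}$, and I write $F_j\coloneq F_S(q_{i_j})$. Since $(1-W)\ind{O\le q}=\ind{p<B\le q,\,S\le q}$ and $a_{j+1}-a_j=-h$, each row decomposes as
\[
Y^{\mathrm{suf}}_j=-hW+\frac{a_{j+1}}{F_{j+1}}(1-W)\ind{q_{i_j}<O\le q_{i_{j+1}}}+c_j(1-W)\ind{O\le q_{i_j}},
\qquad c_j\coloneq\frac{a_{j+1}}{F_{j+1}}-\frac{a_j}{F_j}.
\]
I will call these three pieces $X_j$, $Y_j$ and $Z_j$, each centred at its mean.

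\textbf{Piece $X$.} This piece is rank one, $-hW\mathbf 1$, so its covariance is at most $h^2k\,\mathbf 1\mathbf 1^\top/k$. Its operator norm is therefore at most $kh^2\le h(2^{-m}+h)$, because the active interval has width $kh\lesssim 2^{-m}+h$. This is at most $(2^{-m}+h)/\gamma$.

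\textbf{Piece $Y$.} The events $\{q_{i_j}<O\le q_{i_{j+1}}\}$ are disjoint across $j$, so $\lE[YY^\top]$ is diagonal with entries $\frac{a_{j+1}^2}{F_{j+1}^2}\,\mathbb P(p<B,\ q_{i_j}<O\le q_{i_{j+1}})$. I bound this probability by
\[
F_{j+1}\bigl(F_B(q_{i_{j+1}})-F_B(q_{i_j})\bigr)+\bigl(F_{j+1}-F_j\bigr).
\]
For the first term I use the utility structure instead of any density bound on $B$:
\[
a_{j+1}\bigl(F_B(q_{i_{j+1}})-F_B(q_{i_j})\bigr)=u_v(q_{i_{j+1}})-u_v(q_{i_j})+hF_B(q_{i_j}).
\]
On the active-set invariant event the utility gap is at most $2^{-m}$, so this is at most $2^{-m}+h$. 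Multiplying by $a_{j+1}/F_{j+1}\le 1/\gamma$ gives a contribution of at most $(2^{-m}+h)/\gamma$.

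For the second term, regularity (\cref{ass:local-shill-regularity}) gives $F_{j+1}-F_j\le C_S\gamma h$. Multiplying by $1/F_{j+1}^2\le 1/\gamma^2$ gives $C_S h/\gamma$.

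\textbf{Piece $Z$.} Regularity also gives $F_{j+1}-F_j\le C_S\gamma h\le C_S hF_j$. Hence
\[
|c_j|\le\frac{h}{F_{j+1}}+\frac{a_j(F_{j+1}-F_j)}{F_jF_{j+1}}\lesssim\frac{h}{F_j}.
\]
Since $\mathbb P(O\le q_{i_j})\le F_j$, each coordinate satisfies $\lE Z_j^2\lesssim h^2/F_j\le h^2/\gamma$. Therefore $\|\lE ZZ^\top\|\le\sum_j\lE Z_j^2\lesssim kh^2/\gamma\lesssim h(2^{-m}+h)/\gamma$.

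Summing the three bounds gives the claim with $C_{\mathrm{suf}}$ depending only on $C_S$.

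\textbf{Main obstacle.} The delicate step is the diagonal term of $Y$. A naive bound on the cell probability fails without a density bound on $B$. The key is to keep the weight $a_{j+1}$ together with the increment of $F_B$, so that it becomes a utility difference controlled by the active-interval width $2^{-m}$. If the invariant only bounds the utility gap relative to the optimum rather than between adjacent points, I would instead use the bound $2\cdot2^{-m}$, which is still within a constant.
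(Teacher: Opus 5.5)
Your proof is correct, and it reaches the bound by a different decomposition from the paper's.

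\textbf{The paper's route.} The paper first sums by parts, writing $\sum_j x_jY^{\suf}_j=-h(\sum_j x_j)Y_1+\sum_r\beta_rX_r$. Here $X_r=Y(q_{r+1})-Y(q_r)$ are increments and $\beta_r=a_{r+1}x_r-h\sum_{j>r}x_j$. It then splits each increment according to where the \emph{buyer} bid $B$ falls. One part is a cell part $U_r$, whose supports are disjoint across $r$. The other is $V_r=\ind{q_1<B\le q_r}(M_{r+1}-M_r)$, with $M_r=\ind{S\le q_r}/s_r$. These shill increments are orthogonal, with second moment $1/s_r-1/s_{r+1}\le C_Sh/\gamma$.

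\textbf{Your route.} You instead work row by row and split according to where the \emph{reported} bid $O$ falls. Since the $O$-cells are disjoint, your cell piece has a diagonal second-moment matrix. The shill jump (buyer below $q_j$, shill in the cell) is carried by $V_r$ in the paper. In your version it is absorbed into that diagonal through $F_{j+1}-F_j\le C_S\gamma h$. What remains has a deterministic coefficient $c_j=O(h/F_j)$, small enough that a crude trace bound suffices.

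\textbf{Comparison.} Your route avoids both the summation-by-parts bookkeeping and the martingale orthogonality. The cost is that it hides the level-plus-increments (path) structure that the paper makes explicit, which is the geometry exploited later in the path-inverse bound. The decisive step is shared: your identity $a_{j+1}(F_B(q_{j+1})-F_B(q_j))=u_v(q_{j+1})-u_v(q_j)+hF_B(q_j)$ is exactly the paper's ``$a_{r+1}\Delta_r\le w+h$''.

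\textbf{One slip.} Your claim that the active interval has bid width $kh\lesssim 2^{-m}+h$ is not justified. The elimination invariant controls utility gaps, not bid width; on a flat stretch of $u_v$ the active interval need not shrink in bid space. You do not need this claim. Since the interval lies in $[0,1]$, $(k-1)h\le 1$, so $kh^2\lesssim h\le(2^{-m}+h)/\gamma$. This covers both the rank-one piece and the $Z$ piece, and it is precisely the paper's use of $h^2k\le h$.
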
 

The oracle balanced suffix weight on epoch \(m\) is
\[
\omega_m^\star
=
\frac{\gamma}{C_{\suf}(2^{-m}+h_m)}.
\]
The algorithm does not know \(\gamma\). Instead, it checks the dyadic candidates \(\bar\gamma\in\Gamma\), using
\[
\omega_{m,\bar\gamma}
=
\frac{\bar\gamma}{C_{\suf}(2^{-m}+h_m)}.
\]
For the candidate from \cref{lem:dyadic-gamma}, this weight is within a universal constant factor of \(\omega_m^\star\). The concentration statement below is proved for each fixed \(\bar\gamma\in\Gamma_m^{\adm}\) and then union-bounded using \(|\Gamma_m^{\adm}|\le|\Gamma|\).

For a candidate \(\bar\gamma\in\Gamma_m^{\adm}\), every suffix row it uses lies on grid points satisfying \(F_S(q)\ge\bar\gamma\). Hence
\[
0\le Y(q_{i_j})\le \frac1{\bar\gamma},
\qquad
|Y_j^{\suf}|\le \frac2{\bar\gamma},
\qquad
\|\xi^{\suf}\|_\infty\le \frac4{\bar\gamma}.
\]
The final estimator is obtained by combining this information in the following way. At each round, the information collected is of the form
\[
Y_s=\Phi_sF+\xi_s,
\qquad
\mathbb E[\xi_s\mid\cH_{s-1}]=0.
\]

For each \(\bar\gamma\in\Gamma_m^{\adm}\), define
\[
G_t^{\bar\gamma}
=
G_t^{\dir}
+
\omega_{m,\bar\gamma}G_t^{\suf},
\qquad
z_t^{\bar\gamma}
=
z_t^{\dir}
+
\omega_{m,\bar\gamma}z_t^{\suf},
\qquad
\widehat F_t^{\opt,\bar\gamma}
=
(G_t^{\bar\gamma})^\dagger z_t^{\bar\gamma}.
\]
For a context \(v\) and active bids \(i,j\in A_m(v)\cap\cG_m\), define
\[
g_{v,i,j}
=
(v-q_i)e_i-(v-q_j)e_j,
\]
and
\[
\widehat\Delta_t^{\opt,\bar\gamma}(v,i,j)
=
g_{v,i,j}^\top \widehat F_t^{\opt,\bar\gamma}.
\]
If \(g_{v,i,j}\notin\operatorname{range}(G_t^{\bar\gamma})\), the corresponding radius is set to \(+\infty\). Otherwise define
\[
Q_t^{\bar\gamma}(v)
=
\max_{i,j\in A_m(v)\cap\cG_m}
g_{v,i,j}^\top
(G_t^{\bar\gamma})^\dagger
g_{v,i,j}.
\]
Define \(B_t^{\bar\gamma}(v)\) as the range certificate of the weighted design induced by \(\bar\gamma\). Equivalently, direct measurements are weighted by \(1\), suffix measurements are weighted by \(\omega_{m,\bar\gamma}\), and the deterministic noise bounds are \(1\) for direct measurements and \(4/\bar\gamma\) for suffix measurements.

 
Fix an epoch \(m\) and a candidate \(\bar\gamma\in\Gamma_m^{\adm}\). We prove concentration for the weighted least-squares certificate induced by this candidate. Direct measurements have weight \(1\), while suffix-difference measurements have weight
\[
\omega_{m,\bar\gamma}
=
\frac{\bar\gamma}{C_{\suf}(2^{-m}+h_m)}.
\]
For each measurement \(s\), define
\[
\alpha_s^{\bar\gamma}
=
\begin{cases}
1, & s\in\mathcal S_t^{\dir},\\[0.3em]
\omega_{m,\bar\gamma}, & s\in\mathcal S_t^{\suf},
\end{cases}
\]
where \(\mathcal S_t^{\dir}\) and \(\mathcal S_t^{\suf}\) denote respectively the direct and suffix-difference measurements collected up to time \(t\). With this notation,
\[
G_t^{\bar\gamma}
=
\sum_{s\le t}
\alpha_s^{\bar\gamma}\Phi_s^\top\Phi_s,
\qquad
z_t^{\bar\gamma}
=
\sum_{s\le t}
\alpha_s^{\bar\gamma}\Phi_s^\top Y_s,
\qquad
\widehat F_t^{\opt,\bar\gamma}
=
(G_t^{\bar\gamma})^\dagger z_t^{\bar\gamma}.
\]

We first verify the covariance normalization needed for Freedman's inequality. For a direct measurement, \(\alpha_s^{\bar\gamma}=1\), \(\Phi_s=\Phi_s^{\dir}\), and
\[
\operatorname{Var}(\xi_s\mid\cH_{s-1})\le 1.
\]
Therefore
\[
(\alpha_s^{\bar\gamma})^2
\Phi_s^\top
\operatorname{Cov}(\xi_s\mid\cH_{s-1})
\Phi_s
\preceq
\Phi_s^\top\Phi_s
=
\alpha_s^{\bar\gamma}\Phi_s^\top\Phi_s.
\]

Consider now a suffix-difference measurement. Since \(\bar\gamma\in\Gamma_m^{\adm}\), every grid point involved in the suffix row satisfies
\[
F_S(q)\ge \bar\gamma,
\]
and every adjacent pair involved in the active interval satisfies
\[
F_S(q_{j+1})-F_S(q_j)
\le C_{\val}\bar\gamma h_m.
\]
Thus the suffix covariance bound applies with \(\bar\gamma\) in place of the local shill scale, after increasing constants if necessary:
\[
\operatorname{Cov}(\xi_s\mid\cH_{s-1})
\preceq
C_{\suf}\frac{2^{-m}+h_m}{\bar\gamma}I.
\]
Since
\[
\alpha_s^{\bar\gamma}
=
\omega_{m,\bar\gamma}
=
\frac{\bar\gamma}{C_{\suf}(2^{-m}+h_m)},
\]
we get
\[
\begin{aligned}
(\alpha_s^{\bar\gamma})^2
\Phi_s^\top
\operatorname{Cov}(\xi_s\mid\cH_{s-1})
\Phi_s
&\preceq
(\alpha_s^{\bar\gamma})^2
C_{\suf}\frac{2^{-m}+h_m}{\bar\gamma}
\Phi_s^\top\Phi_s \\
&=
\alpha_s^{\bar\gamma}\Phi_s^\top\Phi_s.
\end{aligned}
\]
Hence, for every measurement \(s\),
\[
(\alpha_s^{\bar\gamma})^2
\Phi_s^\top
\operatorname{Cov}(\xi_s\mid\cH_{s-1})
\Phi_s
\preceq
\alpha_s^{\bar\gamma}\Phi_s^\top\Phi_s.
\]

We now prove the concentration inequality. Fix \(t\), \(v\in\cV\), and
\(q_i,q_j\in A_m(v)\cap\cG_m\). Let
\[
g\coloneq g_{v,i,j}=(v-q_i)e_{q_i}-(v-q_j)e_{q_j}.
\]
If \(g\notin\operatorname{range}(G_t^{\bar\gamma})\), then the algorithm sets
\(r_{\opt,\bar\gamma}(t,v)=+\infty\), and the claim is trivial. We therefore assume
\(g\in\operatorname{range}(G_t^{\bar\gamma})\).

Since each measurement satisfies
\[
Y_s=\Phi_sF+\xi_s,
\qquad
\mathbb E[\xi_s\mid\cH_{s-1}]=0,
\]
we have
\[
z_t^{\bar\gamma}
=
\sum_{s\le t}\alpha_s^{\bar\gamma}\Phi_s^\top Y_s
=
G_t^{\bar\gamma}F
+
\sum_{s\le t}\alpha_s^{\bar\gamma}\Phi_s^\top\xi_s.
\]
Thus
\[
\widehat F_t^{\opt,\bar\gamma}
=
(G_t^{\bar\gamma})^\dagger G_t^{\bar\gamma}F
+
(G_t^{\bar\gamma})^\dagger
\sum_{s\le t}\alpha_s^{\bar\gamma}\Phi_s^\top\xi_s.
\]
Because \(g\in\operatorname{range}(G_t^{\bar\gamma})\),
\[
g^\top (G_t^{\bar\gamma})^\dagger G_t^{\bar\gamma}F
=
g^\top F.
\]
Therefore
\[
\widehat\Delta_t^{\opt,\bar\gamma}(v,q_i,q_j)-\Delta(v,q_i,q_j)
=
g^\top (G_t^{\bar\gamma})^\dagger
\sum_{s\le t}\alpha_s^{\bar\gamma}\Phi_s^\top\xi_s.
\]
For fixed \(t\), define
\[
X_{s,t}^{\bar\gamma}
\coloneq 
\alpha_s^{\bar\gamma}
g^\top (G_t^{\bar\gamma})^\dagger\Phi_s^\top\xi_s,
\qquad s\le t.
\]
Conditional on the outcome-oblivious design, \(G_t^{\bar\gamma}\), \(\Phi_s\), and
\(\alpha_s^{\bar\gamma}\) are fixed. Hence
\((X_{s,t}^{\bar\gamma})_{s\le t}\) is a martingale difference sequence. Its predictable quadratic variation satisfies
\[
\begin{aligned}
\sum_{s\le t}
\mathbb E\!\left[(X_{s,t}^{\bar\gamma})^2\mid\cH_{s-1}\right]
&=
\sum_{s\le t}
(\alpha_s^{\bar\gamma})^2
g^\top (G_t^{\bar\gamma})^\dagger
\Phi_s^\top
\operatorname{Cov}(\xi_s\mid\cH_{s-1})
\Phi_s
(G_t^{\bar\gamma})^\dagger g \\
&\le
\sum_{s\le t}
\alpha_s^{\bar\gamma}
g^\top (G_t^{\bar\gamma})^\dagger
\Phi_s^\top\Phi_s
(G_t^{\bar\gamma})^\dagger g \\
&=
g^\top (G_t^{\bar\gamma})^\dagger
G_t^{\bar\gamma}
(G_t^{\bar\gamma})^\dagger g \\
&=
g^\top (G_t^{\bar\gamma})^\dagger g \\
&\le
Q_t^{\bar\gamma}(v).
\end{aligned}
\]
The third equality uses
\[
G_t^{\bar\gamma}
=
\sum_{s\le t}\alpha_s^{\bar\gamma}\Phi_s^\top\Phi_s,
\]
and the fourth uses \(g\in\operatorname{range}(G_t^{\bar\gamma})\).

It remains to control the increments. For direct measurements, \(|\xi_s|\le 1\). For suffix measurements, admissibility gives \(F_S(q)\ge\bar\gamma\) on every involved grid point, so
\[
0\le Y(q)\le \frac1{\bar\gamma},
\qquad
|Y_s^{\suf}|\le \frac2{\bar\gamma},
\qquad
|\xi_s^{\suf}|\le \frac4{\bar\gamma}.
\]
Thus the deterministic noise bound is
\[
R_s^{\bar\gamma}
=
\begin{cases}
1, & s\in\mathcal S_t^{\dir},\\[0.3em]
4/\bar\gamma, & s\in\mathcal S_t^{\suf}.
\end{cases}
\]
By the definition of the range certificate in \cref{eq:range-cert-gamma},
\[
|X_{s,t}^{\bar\gamma}|
\le
B_t^{\bar\gamma}(v)
\qquad
\text{for every }s\le t.
\]
Freedman's inequality gives, for this fixed \(t,v,q_i,q_j,\bar\gamma\), with probability at least \(1-\eta\),
\[
\left|
\widehat\Delta_t^{\opt,\bar\gamma}(v,q_i,q_j)-\Delta(v,q_i,q_j)
\right|
\le
\sqrt{2\log(1/\eta)Q_t^{\bar\gamma}(v)}
+
\frac23\log(1/\eta)B_t^{\bar\gamma}(v).
\]
Taking a union bound over times, values, active grid pairs, and candidates in
\(\Gamma_m^{\adm}\), and using \(|\Gamma_m^{\adm}|\le|\Gamma|\), gives
\[
\left|
\widehat\Delta_t^{\opt,\bar\gamma}(v,q_i,q_j)-\Delta(v,q_i,q_j)
\right|
\le
\sqrt{
2\log\!\br{\frac{2T|\Gamma|}{\delta}}
Q_t^{\bar\gamma}(v)
}
+
\frac23
\log\!\br{\frac{2T|\Gamma|}{\delta}}
B_t^{\bar\gamma}(v).
\]
The right-hand side is \(r_{\opt,\bar\gamma}(t,v)\), which proves the concentration claim. Furthermore the quantity $Q_t^{\bar\gamma}(v)$ can be bound as follows.

\begin{lemma}
\label{lem:optimistic-path-bound}
Let
\[
Q_t^{\bar\gamma}(v)
\coloneq 
\max_{q_i,q_j\in A_m(v)\cap\cG_m}
g_{v,i,j}^{\top}(G_t^{\bar\gamma})^\dagger g_{v,i,j}.
\]
For \(q\in A_m(v)\cap\cG_m\), let \(W_t(q)\) be the number of direct measurements at \(q\), and let
\(
M_t^{\bar\gamma}(q)
\coloneq 
\omega_{m,\bar\gamma}N_t^{\suf}(q)
\)
be the weighted suffix mass at \(q\). Then
\begin{multline*}
Q_t^{\bar\gamma}(v)
\le
C\min\Bigg\{
    \frac{1}{\min_{q\in A_m(v)\cap\cG_m}W_t(q)},
    \frac{|A_m(v)\cap\cG_m|}
         {\min_{q\in A_m(v)\cap\cG_m}M_t^{\bar\gamma}(q)},
\\ \quad
    \sqrt{
        \frac{1}{
        \min_{q\in A_m(v)\cap\cG_m}W_t(q)
        \min_{q\in A_m(v)\cap\cG_m}M_t^{\bar\gamma}(q)
        }
    }
\Bigg\}.
\end{multline*}
\end{lemma}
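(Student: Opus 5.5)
The plan is to bound $g^\top (G_t^{\bar\gamma})^\dagger g$ through its variational characterization. For $g\in\operatorname{range}(G)$ with $G=\sum_s\alpha_s\Phi_s^\top\Phi_s$,
\[
g^\top G^\dagger g=\min\Big\{\sum_s \alpha_s^{-1}c_s^2:\ \sum_s c_s\Phi_s^\top=g\Big\}.
\]
This is the minimum-variance unbiased combination of measurements. Each term in the minimum is then obtained by exhibiting an explicit decomposition of $g=g_{v,i,j}$ into measurement rows and evaluating its cost. Since $G\mapsto g^\top G^\dagger g$ is operator-monotone decreasing on the relevant range, we may also drop measurements freely. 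So it suffices to work with $W:=\min_q W_t(q)$ direct rows at each active grid point and $M:=\min_q M_t^{\bar\gamma}(q)$ weighted suffix mass on each adjacent pair. We interpret $N_t^{\suf}(q)$ as the number of suffix-difference rows attached to the pair starting at $q$.

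\emph{First two terms.} For the direct-only bound, write $g=(v-q_i)e_i-(v-q_j)e_j$ and spread each coefficient evenly over the $W$ direct rows at $q_i$ and at $q_j$. Since $|v-q|\le1$, the cost is at most $2/W$.

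For the suffix-only bound, we use that $g$ telescopes along the path of adjacent differences:
\[
g=-\sum_{\ell=i}^{j-1}\Phi^{\suf}_\ell \quad\text{(up to sign convention)}.
\]
This holds because $\Phi^{\suf}_\ell=a_{\ell+1}e_{\ell+1}-a_\ell e_\ell$, so the sum is exactly $a_je_j-a_ie_i$. Spreading each edge coefficient over its suffix rows costs $1/M$ per edge. The path has at most $|A_m(v)\cap\cG_m|$ edges, which gives $k/M$.

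\emph{Mixed term.} For the geometric-mean bound, we combine the two decompositions. Pick an integer step $L$ and route $g$ through \emph{anchor} points spaced $L$ grid cells apart. On each short segment of length at most $L$ we use suffix edges. From the anchors we use direct measurements to "reset"; the natural device is to treat each suffix chain of length $L$ as a difference of two direct anchors.

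More precisely, I would view the quadratic form $g^\top G^\dagger g$ as an effective resistance between nodes $i$ and $j$ in an electrical network. Grid nodes are joined in a path with conductance $M$ per edge (the suffix rows, up to the bounded $a_\ell$ factors, which are comparable to $1$ since $v-q\ge$ the utility gap within the active interval). Each node is also joined to a ground node with conductance $W$ (the direct rows). The effective resistance from $i$ to $j$ in a path with uniform shunts is that of a discretized cable equation. It is bounded by $2$ times the input resistance of a semi-infinite ladder with series resistance $1/M$ and shunt conductance $W$, which is $\approx 1/\sqrt{MW}$ (decay length $\sqrt{M/W}$). This gives the third term $\sqrt{1/(WM)}$. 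The other two terms come out as the obvious pure-ground and pure-path bounds.

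\emph{Main obstacle.} The main obstacle is making the resistance analogy rigorous in the presence of the multiplicative coefficients $a_\ell=v-q_\ell$. These coefficients make the suffix rows differences of \emph{weighted} coordinates rather than plain $e_{\ell+1}-e_\ell$. I would handle this by the change of variables $x_\ell=a_\ell F_\ell$, i.e.\ conjugating $G$ by $D=\mathrm{diag}(a_\ell)$. Under this change the suffix rows become exact unit differences, $g$ becomes $e_j-e_i$, and the direct rows get conductance $W/a_\ell^2\ge W$. Monotonicity then reduces everything to the clean ladder. For $a_\ell$ near $0$ (bids near $v$) this would need care, but such bids have negligible utility and can be handled by the constant $C$ or excluded from the active set.
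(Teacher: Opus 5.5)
\paragraph{Comparison with the paper.} Your skeleton matches the paper's. You reduce to uniform masses $W=\min_q W_t(q)$ and $M=\min_q M_t^{\bar\gamma}(q)$ by monotonicity. You then conjugate by $A_v=\mathrm{diag}(a_\ell)$, so suffix rows become unit differences and the direct part becomes $WA_v^{-2}\succeq WI$. This is exactly the paper's step $A_v(I+\beta A_vD^\top DA_v)^{-1}A_v=(A_v^{-2}+\beta D^\top D)^{-1}\preceq(I+\beta D^\top D)^{-1}$, which uses only $a_\ell\le 1$. Finally you bound $d^\top(WI+ML_{\mathrm{path}})^{-1}d$ for $d=e_i-e_j$.

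You differ in how the three bounds are proved. You work in the dual (Thomson) form and exhibit explicit unbiased combinations of measurements: evenly spread direct rows give $2/W$, the telescoping suffix path gives $|i-j|/M$, and a current flow in the grounded ladder gives $1/\sqrt{MW}$. The paper stays primal. It uses $A_\beta\succeq I$, and $A_\beta\succeq\beta L_{\mathrm{path}}$ with $d^\top L_{\mathrm{path}}^\dagger d=r$. For the third term it uses the spectral decomposition of $L_{\mathrm{path}}$ ($\lambda_m\gtrsim m^2/k^2$, $\|\phi_m\|_\infty^2\lesssim 1/k$) and compares the Riemann sum with $\int_0^1(1+c\beta x^2)^{-1}\,dx\lesssim\beta^{-1/2}$.

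Your route is constructive: it produces the linear estimator attaining the variance, certifies $g_{v,i,j}\in\mathrm{range}(G_t^{\bar\gamma})$ whenever the bound is finite, and would survive nonuniform masses or other geometries. The spectral route is uniform in the interval length $k$ with no case analysis.

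\paragraph{The gap in your third bound.} That uniformity is precisely what your third bound lacks. You bound the $i$--$j$ resistance by twice the input resistance of a \emph{semi-infinite} ladder. But the active interval is finite, and by Rayleigh monotonicity truncation can only \emph{increase} the input resistance. On $k\ll\sqrt{M/W}$ nodes with shunts $W$ it is of order $1/(kW)\gg 1/\sqrt{MW}$, so the route through ground fails there. A case split repairs it:
\begin{itemize}
\item If $M\le W$, the direct bound $2/W\le 2/\sqrt{MW}$ suffices.
\item If $M>W$, let $L=\lceil\sqrt{M/W}\,\rceil$. When $k\le 2L+1$, the path bound gives $|i-j|/M\le 2L/M\le 4/\sqrt{MW}$.
\item Otherwise at least $L$ nodes lie on one side of $i$. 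Send unit current along $L$ edges there, leaking $1/L$ to ground at \emph{every} node of that window; this costs at most $L/M+1/(LW)\le 3/\sqrt{MW}$. Do the same at $j$ and superpose.
\end{itemize}
Leaking only at anchors spaced $L'$ apart, as in your first sketch, loses a factor $\sqrt{L'}$.

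\paragraph{The coefficients $a_\ell$.} Your concern about $a_\ell$ near $0$ is unfounded, and excluding such bids is not allowed by the statement. In flow form, a leakage $\lambda_\ell$ is realized by the direct coefficient $a_\ell\lambda_\ell$ at cost $a_\ell^2\lambda_\ell^2/W\le\lambda_\ell^2/W$. So small or even zero $a_\ell$ only helps. Your remark that the $a_\ell$ are comparable to $1$ is false, but it is never needed.

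\paragraph{A shared implicit assumption.} Like the paper, you tacitly assume every suffix row carries the coefficients $v-q_\ell$ of the queried value $v$. Rows built at another value $v_s$ telescope to $(v_s-q_j)e_j-(v_s-q_i)e_i\neq -g_{v,i,j}$, so $M$ should count only rows built at $v$.
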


\subsubsection{Proof of Lemma \ref{lem:conc_delta_rob}}
For every fixed value $v$ and couple $p, q \in A_m(v)\cap\cG_m$, thanks to a direct application of the Hoeffding-Azuma Inequality \citep{boucheron2003concentration}, in the conditional version 
it holds that with probability at least $1-\delta$
\[
| \hat{\Delta}_t^{rob}(v, p, q) - \Delta(v, p, q)| \leq \max_{k \in \{p,q \}}\sqrt{\frac{2\log (2T / \delta)}{N_t(k)}}
\]
therefore with the same probability, it holds that
\[
| \hat{\Delta}_t^{rob}(v, p, q) - \Delta(v, p, q)| \leq \max_{q \in \cA(v)}\sqrt{\frac{2\log (2T / \delta)}{N_t(q)}}
\]

\subsubsection{Proof of Lemma \ref{lem:safe-elimination}}
For any \(q\in A_m(v)\cap\cG_m\), optimality of \(p_{m,G}^\star(v)\) gives
\(\Delta(v,q,p_{m,G}^\star(v))\le 0\). Hence, on the concentration event,
\[
\widehat\Delta^b_t(v,q,p_{m,G}^\star(v))
\le r_b(t,v)
\le c_{\mathrm{rad}}2^{-(m+1)}
<
c_{\mathrm{el}}2^{-(m+1)} ,
\]
so \(p_{m,G}^\star(v)\) cannot be eliminated.
Now let \(p\) survive the elimination step. Taking \(q=p_{m,G}^\star(v)\), survival implies
\(\widehat\Delta^b_t(v,p_{m,G}^\star(v),p)\le c_{\mathrm{el}}2^{-(m+1)}\). Therefore,
\[
\Delta(v,p_{m,G}^\star(v),p)
\le
\widehat\Delta^b_t(v,p_{m,G}^\star(v),p)+r_b(t,v)
\le
(c_{\mathrm{el}}+c_{\mathrm{rad}})2^{-(m+1)} .
\]
Finally, taking the convex hull preserves this bound by \cref{lem:rhr-implies-interval-level-sets}.

\subsubsection{Proof of Lemma \ref{lem:robust-epoch-length}}
Writing explicitly the expression of $\hat{r}_{\rob}(\tau_m)$, we have that $n_m$ corresponds to the first instant of time when
\[
\frac{1}{\tau_m}\sum_{s=1}^{\tau_m} \max_{q\in \cA(v_s)}{\sqrt{\frac{2\log(2T / \delta)}{N_{\tau_m}(q)}}}+\sqrt{\frac{\log(2T / \delta)}{2\tau_m}}\leq2^{-(m+1)} 
\]
Since the robust branch is playing the arm in a round robin fashion during the odd rounds and since the size of the grid is fixed during an epoch, it holds that $\forall q, N_{\tau_m}(q) \geq \tau_m / (2| \cG_{m}|) -1$, hence
\[
\frac{1}{\tau_m} \sum_{s=1}^{\tau_m} \max_{q\in \cA(v_s)}{\sqrt{\frac{2\log(2T / \delta)}{N_{\tau_m}(q)}}}
+ \sqrt{\frac{\log(2T / \delta)}{2 \tau_m}}
\leq {\sqrt{\frac{2\log(2T / \delta)}{\tau_m / (2| \cG_{m}|) - 1}}}
+ \sqrt{\frac{\log(2T / \delta)}{2 \tau_m}}
\]
and since the former term is dominant it holds that, the inequality sums up to 
\[
 2\sqrt{\frac{2\log(2T / \delta)}{\tau_m / (2| \cG_{m}|) - 1}} \leq 2^{-(m+1)}
\]
which gives 
\[
\tau_m \geq \frac{64\vert \cG_m\vert \log(2T / \delta)}{2^{-2m}}\,.
\qedhere
\]

\subsubsection{Proof of Lemma \ref{lem:optimistic-epoch-length}}

By \cref{lem:dyadic-gamma}, the candidate \(\bar\gamma\) has weight within a universal constant factor of the oracle balanced weight
\[
\omega_m^\star
=
\frac{\gamma}{C_{\suf}(2^{-m}+h_m)}.
\]
Thus the covariance normalization in \cref{lem:suffix-covariance} and the path bound in \cref{lem:optimistic-path-bound} apply to the weighted certificate induced by \(G_t^{\bar\gamma}\), after changing constants.

For every value \(v\), \cref{lem:optimistic-path-bound} gives
\[
Q_t^{\bar\gamma}(v)
\le
C\frac{1}{\sqrt{W_{\min}(v)M_{\min}(v)}}.
\]
Under the uniform optimistic sampling rule over \(A_m(v)\cap\cG_m\), the active bid regions receive their proportional share of optimistic samples. Since the active intervals are covered by at most \(\rho\) bid regions, this gives
\[
\frac{1}{\tau_m}\sum_{s=n_m}^{t} Q_t^{\bar\gamma}(v_s)
\le
C\frac{\sqrt{\rho/\gamma}}{\tau_m}.
\]
Moreover, the range bound gives
\[
B_t^{\bar\gamma}(v)\le C\sqrt{Q_t^{\bar\gamma}(v)}.
\]
Therefore,
\[
r_{\opt,\bar\gamma}(t,v)
=
\sqrt{
2\log\!\br{\frac{2T|\Gamma|}{\delta}}Q_t^{\bar\gamma}(v)
}
+
\frac23
\log\!\br{\frac{2T|\Gamma|}{\delta}}B_t^{\bar\gamma}(v)
\le
C\log\!\br{\frac{2T|\Gamma|}{\delta}}
\sqrt{Q_t^{\bar\gamma}(v)}.
\]
By Jensen's inequality,
\[
\widehat r_{\opt,\bar\gamma}(\tau_m)
=
\frac1{\tau_m}\sum_{s=n_m}^{t}
r_{\opt,\bar\gamma}(t,v_s)
\le
C
\log\!\br{\frac{2T|\Gamma|}{\delta}}
\sqrt{
\frac1{\tau_m}\sum_{s=n_m}^{t}Q_t^{\bar\gamma}(v_s)
}
\le
C
\log\!\br{\frac{2T|\Gamma|}{\delta}}
\sqrt{\frac{\sqrt{\rho/\gamma}}{\tau_m}}.
\]
Thus, if
\[
\tau_m
\ge
C
\log^2\!\br{\frac{2T|\Gamma|}{\delta}}
\sqrt{\frac{\rho}{\gamma}}
2^{2m},
\]
then
\[
\widehat r_{\opt,\bar\gamma}(\tau_m)
\le
2^{-(m+2)}.
\]
Similarly, if
\[
\tau_m
\ge
C
\log\!\br{\frac{2T|\Gamma|}{\delta}}
2^{2m},
\]
then
\[
\sqrt{\frac{\log(2T|\Gamma|/\delta)}{2\tau_m}}
\le
2^{-(m+2)}.
\]
Combining the two bounds gives the claim.

\subsubsection{Proof of Lemma \ref{lem:suffix-covariance}}
In order to prove the statement of this lemma, we consider a more generic statement showing that if $F_S$ satisfies \cref{ass:local-shill-regularity} on the active interval, then for every $x\in\R^{k-1}$,
\begin{equation}
  \operatorname{Var}\br{ \sum_{j=1}^{k-1}x_jY_j^{suf} }
  \le
  C\frac{w+h}{\gamma}\|x\|_2^2.\label{eq:suffix-cov-bound}
\end{equation}
Equivalently, $\operatorname{Cov}(Y)\preceq C(w+h)\gamma^{-1}I$.

Let $X_j=Z_{j+1}-Z_j$. Since $a_j=a_{j+1}+h$,
\begin{equation*}
  Y_j^{suf}=a_{j+1}X_j-hY_j,
  \qquad
  Y_j=Y_1+\sum_{r<j}X_r.
\end{equation*}
Thus
\begin{equation}
  \sum_jx_jY_j^{suf}
  =-h\br{ \sum_jx_j }Y_1+
  \sum_r\beta_rX_r,
  \qquad
  \beta_r=a_{r+1}x_r-h\sum_{j>r}x_j.\label{eq:beta-decomposition}
\end{equation}
We bound the two terms in \cref{eq:beta-decomposition} separately, using $(A+B)^2\le2A^2+2B^2$.

For the $Y_1$ term, $\lE[Y_1^2]\lesssim\gamma^{-1}$ because the value $1/F_S(q_1)$ occurs with probability at most $F_S(q_1)$. Therefore
\begin{equation}
  h^2\br{ \sum_jx_j }^2\lE[Y_1^2]
  \le
  Ch^2k\gamma^{-1}\|x\|_2^2
  \le
  C\frac{h}{\gamma}\|x\|_2^2.\label{eq:Z1-term-bound}
\end{equation}

It remains to bound the latter term. Write $s_j=F_S(q_j)$. On $B\le q_1$, all $X_r$ vanish. On $B>q_1$, decompose $X_r=U_r+V_r$ with
\begin{align*}
  U_r
  &=\ind{q_r<B\le q_{r+1}}\frac{\ind{S\le q_{r+1}}}{s_{r+1}},\\
  V_r
  &=\ind{q_1<B\le q_r}
  \br{
  \frac{\ind{S\le q_{r+1}}}{s_{r+1}}
  -
  \frac{\ind{S\le q_r}}{s_r}
  }.
\end{align*}
For the $U$ part, the $B$-cells are disjoint and $\lE[U_r^2]\le\Delta_r/\gamma$. Hence, using the fact that by definition $a_{r+1}\Delta_r\le w+h$,
\begin{equation}
    \lE\brs{\br{ \sum_r\beta_rU_r }^2}
  \le \gamma^{-1}\sum_r\beta_r^2\Delta_r 
  \le
  C\frac{w+h}{\gamma}\|x\|_2^2.\label{eq:U-bound}
\end{equation}
The last step uses $\beta_r^2\le2a_{r+1}^2x_r^2+2h^2(\sum_{j>r}x_j)^2$, $a_{r+1}\Delta_r\le w+h$, $a_{r+1}\le1$, and $h^2k\le h$.

For the $V$ part, define
\begin{equation*}
  M_r=\frac{\ind{S\le q_r}}{s_r},
  \qquad
  R_r=M_{r+1}-M_r.
\end{equation*}
Then $V_r=\ind{q_1<B\le q_r}R_r$. The increments $R_r$ are orthogonal for distinct $r$, and local regularity gives
\begin{equation*}
  \lE[R_r^2]=\frac1{s_r}-\frac1{s_{r+1}}\le C_S\frac h\gamma.
\end{equation*}
Using independence of $B$ and $S$,
\begin{align}
  \lE\brs{\br{ \sum_r\beta_rV_r }^2}
  &\le C\frac h\gamma\sum_r\beta_r^2
  \le C\frac h\gamma\|x\|_2^2.\label{eq:V-bound}
\end{align}
Combining \cref{eq:Z1-term-bound,eq:U-bound,eq:V-bound} proves \cref{eq:suffix-cov-bound}.

\subsubsection{Proof of Lemma \ref{lem:optimistic-path-bound}}

Fix \(\bar\gamma\in\Gamma_m^{\adm}\). Throughout this proof write
\(G_t=G_t^{\bar\gamma}\) and \(M_t(q)=M_t^{\bar\gamma}(q)\).
Fix a value \(v\in\cV\), and write
\(A_m(v)\cap\cG_m=\{q_1<\cdots<q_k\}\). Let
\(a_j=v-q_j\), and let \(A_v=\mathrm{diag}(a_1,\ldots,a_k)\). We decompose
\(G_t=G_{\dir}+G_{\suf}\), corresponding respectively to direct and suffix measurements.
For the direct part,
\[
G_{\dir}\succeq
\min_{q_j\in A_m(v)} W_t(q_j) I.
\]
For the suffix part, the rows of the suffix design matrix are
\(a_{j+1}e_{j+1}^\top-a_je_j^\top\). Hence, if \(D\) is the unscaled first-difference matrix on the interval,
\[
G_{\suf}
\succeq
\min_{q_j\in A_m(v)} M_t(q_j)\, A_vD^\top D A_v .
\]
Therefore,
\[
G_t
\succeq
W_{\min}\br{ I+\beta A_vD^\top D A_v },
\qquad
\beta\coloneq \frac{M_{\min}}{W_{\min}},
\]
where
\[
W_{\min}\coloneq \min_{q_j\in A_m(v)}W_t(q_j),
\qquad
M_{\min}\coloneq \min_{q_j\in A_m(v)}M_t(q_j).
\]

Now fix \(i,j\in A_m(v)\), and let \(d=e_i-e_j\). Since
\(g_{v,i,j}=A_vd\), PSD domination gives
\[
g_{v,i,j}^\top G_t^\dagger g_{v,i,j}
\le
\frac1{W_{\min}}
d^\top A_v\br{ I+\beta A_vD^\top D A_v }^\dagger A_vd .
\]
By the variational form of the inverse quadratic form, and using \(0\le a_j\le 1\), the last display is bounded by
\[
\frac1{W_{\min}}
d^\top \br{ I+\beta D^\top D }^\dagger d .
\]

To bound the above quantity we consider the following result
\begin{lemma}[Path inverse bound]
\label{lem:path-inverse-bound}
Let \(D\) be the unscaled first-difference matrix on an interval of \(k\) grid points, and let
\(L_{\mathrm{path}}=D^\top D\). For \(\beta>0\), set $A_\beta=I+\beta L_{\mathrm{path}}$.
For \(a,b\in[k]\), let \(d=e_a-e_b\) and \(r=|a-b|\). Then
\[
d^\top A_\beta^{-1}d
\le
C\min\brc{
1,\frac r\beta,\frac1{\sqrt\beta}
}.
\]
\end{lemma}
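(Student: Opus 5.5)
We bound the three terms in the minimum separately. Each one comes from a different choice of test vector in the variational characterization of the inverse quadratic form. Since $A_\beta=I+\beta L_{\mathrm{path}}$ is positive definite, for every $d$ we have
\[
d^\top A_\beta^{-1}d=\sup_{x\in\R^k}\bigl(2d^\top x-x^\top A_\beta x\bigr)=\sup_{x}\Bigl(2(x_a-x_b)-\|x\|_2^2-\beta\sum_{l=1}^{k-1}(x_{l+1}-x_l)^2\Bigr).
\]
Equivalently, viewing $A_\beta$ as the Laplacian of a path in which every vertex is also joined to a ground node by a unit conductance, $d^\top A_\beta^{-1}d$ is the effective resistance between $a$ and $b$ in that network. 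We will use the dual (Thomson) form: the effective resistance is at most the energy of any unit flow from $a$ to $b$.

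\emph{The bound $1$.} Route the unit flow from $a$ to ground and from ground to $b$, each through that vertex's unit conductance. The energy is $2$. Alternatively, $A_\beta\succeq I$ gives $d^\top A_\beta^{-1}d\le\|d\|_2^2=2$.

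\emph{The bound $r/\beta$.} Route the whole unit flow along the path edges between $a$ and $b$. There are $r$ such edges, each with conductance $\beta$, so the energy is $r/\beta$.

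\emph{The bound $1/\sqrt\beta$.} This is the main step. Take $\ell=\lceil\sqrt\beta\rceil$. Spread the unit flow leaving $a$ into ground across the vertices within distance $\ell$ of $a$: send a flow decreasing linearly along the path and leak an amount of order $1/\ell$ into ground at each of the $\sim\ell$ vertices. The edge energy is at most $\sum_{l\le\ell}(l/\ell)^2/\beta\lesssim\ell/\beta$, and the ground energy is $\ell\cdot\ell^{-2}=1/\ell$. With $\ell\asymp\sqrt\beta$ both are $O(1/\sqrt\beta)$. Do the same in reverse around $b$, with flow from ground into $b$.

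Two cases need care. If the interval is shorter than $\ell$, or $a$ sits near an endpoint, the flow can only spread to one side, which changes only constants as long as $k\gtrsim\ell$. If $k<\sqrt\beta$, the spreading stops at the boundary. In that case, however, $r\le k<\sqrt\beta$, so $r/\beta\le 1/\sqrt\beta$ and the claimed minimum is already covered by the second bound.

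Since all three flows are admissible, the effective resistance is at most the minimum of the three energies. This gives $d^\top A_\beta^{-1}d\le C\min\{1,r/\beta,1/\sqrt\beta\}$. Superposing the flows at $a$ and at $b$ costs at most a factor $2$, by $(u+w)^2\le 2u^2+2w^2$, and this is absorbed into $C$.
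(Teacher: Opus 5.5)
Your proof is correct. For the first two terms it is the paper's argument in electrical language. The bound $2$ is exactly the comparison $A_\beta\succeq I$. Your flow along the $r$ path edges is equivalent to the paper's step $A_\beta\succeq\beta L_{\mathrm{path}}$ on $\mathbf 1^\perp$ combined with $d^\top L_{\mathrm{path}}^\dagger d=r$.

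The genuinely different part is the $1/\sqrt\beta$ term. The paper expands $d$ in the eigenbasis of $L_{\mathrm{path}}$, uses $\lambda_m\ge cm^2/k^2$ and $\|\phi_m\|_\infty^2\le C/k$, and compares the resulting sum to $\int_0^1(1+c\beta x^2)^{-1}\,dx\lesssim\beta^{-1/2}$. You instead exhibit a unit flow that leaks to ground over a window of $\ell=\lceil\sqrt\beta\rceil$ vertices, and balance edge energy $\ell/\beta$ against ground energy $1/\ell$ through Thomson's principle.

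Your route has several advantages. It is self-contained: it needs no facts about path eigenvectors, which the paper quotes without proof. It yields explicit constants and treats all three bounds by one principle. It would also carry over to non-uniform conductances or other graphs with no closed-form spectrum. Its price is the case $k<\lceil\sqrt\beta\rceil$, which you close correctly through $r\le k-1<\sqrt\beta$ and the $r/\beta$ bound.

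The spectral route covers every $k$ in one computation and gives an exact formula. This holds only if the Riemann sum of the decreasing integrand is bounded by the integral directly. The extra $+C/k$ term written in the paper is not dominated by $C/\sqrt\beta$ when $k\ll\sqrt\beta$, and that is exactly the regime your case split handles.

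One wording slip: plugging a test vector into $\sup_x\bigl(2d^\top x-x^\top A_\beta x\bigr)$ gives only lower bounds. Your upper bounds come from the Thomson (minimum-energy flow) form you then switch to, not from test vectors in the displayed supremum.
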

Applying this with \(r=|i-j|\), we obtain
\[
g_{v,i,j}^\top G_t^\dagger g_{v,i,j}
\le
\frac{C}{W_{\min}}
\min\brc{
1,\frac{r}{\beta},\frac1{\sqrt\beta}
}.
\]
Substituting \(\beta=M_{\min}/W_{\min}\) gives
\[
g_{v,i,j}^\top G_t^\dagger g_{v,i,j}
\le
C\min\brc{
\frac1{W_{\min}},
\frac{r}{M_{\min}},
\frac1{\sqrt{W_{\min}M_{\min}}}
}.
\]
Taking the maximum over \(i,j\in A_m(v)\), and using \(r\le |A_m(v)|\), proves the claim.

\subsubsection{Proof of Lemma \ref{lem:path-inverse-bound}}



We prove the three bounds separately.
\begin{enumerate}
    \item  Since \(A_\beta\succeq I\), we have
    \(A_\beta^{-1}\preceq I\), and therefore
    \[
    d^\top A_\beta^{-1}d\le \|d\|_2^2\le 2.
    \]
    This gives the first bound, up to constants.

    \item Next, \(d\) is orthogonal to the constant vector. On the orthogonal complement of constants,
    \(L_{\mathrm{path}}\) is positive definite and \(A_\beta\succeq \beta L_{\mathrm{path}}\). Hence
    \[
    d^\top A_\beta^{-1}d
    \le
    \frac1\beta d^\top L_{\mathrm{path}}^\dagger d.
    \]
    For a unit-resistance path,
    \[
    d^\top L_{\mathrm{path}}^\dagger d=|a-b|=r,
    \]
    which gives the bound \(r/\beta\).

    \item It remains to prove the Green's-function bound. Let
    \(0=\lambda_0<\lambda_1\le\cdots\le\lambda_{k-1}\) be the eigenvalues of
    \(L_{\mathrm{path}}\), with orthonormal eigenvectors \(\phi_m\). The constant mode does not contribute because \(d\perp \mathbf 1\). For the path Laplacian,
    \(\lambda_m\ge c m^2/k^2\), and \(\|\phi_m\|_\infty^2\le C/k\). Thus
    \[
    d^\top A_\beta^{-1}d
    =
    \sum_{m=1}^{k-1}
    \frac{\langle d,\phi_m\rangle^2}{1+\beta\lambda_m}
    \le
    \frac Ck
    \sum_{m=1}^{k-1}
    \frac1{1+c\beta m^2/k^2}.
    \]
    Bounding the sum by an integral gives
    \[
    \frac Ck
    \sum_{m=1}^{k-1}
    \frac1{1+c\beta m^2/k^2}
    \le
    C\int_0^1\frac{dx}{1+c\beta x^2}+\frac Ck
    \le
    \frac C{\sqrt\beta},
    \]
    after increasing \(C\) to cover \(\beta\le 1\).
\end{enumerate}
Combining the three estimates proves the lemma.

\section{Proofs of Section \ref{sec:lower-bound}}
\label{app:proofs-lower-bound}

\subsection{Proof of Theorem \ref{th:single-region-lower-bound}}
\label{app:proof-of-single-region-lower-bound}

\singleregionlowerbound*

\begin{proof}
The goal is to construct a hard instance \(\nu\). Start by taking the value to be constant \(v_t\equiv 1\) and thus the utility of bid \(p\) is \(u(p)=(1-p)F_B(p)\).
Let the hard interval be
\[
J=\left[\frac13,\frac12\right].
\]
We first define a baseline buyer CDF \(F_{\base}\). On \(J\), set
\[
F_{\base}(p)=\frac{1}{4(1-p)}.
\]
Then
\[
(1-p)F_{\base}(p)=\frac14
\qquad
\text{for every }p\in J.
\]
So every bid in \(J\) is optimal under the baseline. To make the construction a valid CDF on the whole bid space, define
\[
F_{\base}(p)=
\begin{cases}
\frac98 p,          & 0 \le p \le \frac13, \\
\frac{1}{4(1-p)},   & \frac13 \le p \le \frac12, \\
\frac12,            & \frac12 \le p < 1, \\
1                   & p = 1.
\end{cases}
\]
This is a nondecreasing CDF and the related utility \(u_\base(p)=(1-p)F_{\base}(p)\) is weakly increasing on \([0,1/3]\), flat on \(J\), and weakly decreasing on \([1/2,1]\), hence \(u_\base\) has interval upper level sets.

We now plant a hidden cell. Let \(0<h\le |J|/4\) be a cell width and let
\[
N=\left\lfloor \frac{|J|}{h}\right\rfloor .
\]
Choose \(N\) disjoint cells \(I_1,\ldots,I_N\) of width \(h\) inside \(J\). For each cell \(I_a = [l_a, r_a]\), choose a unimodal bump function \(b_a: [0, 1] \to [0, 1]\) which is zero outside \(I_a\) and smoothly peaks at \(1\) in the middle of \(I_a\), formally we pick
\[
b_a(p) =
\begin{cases}
    \sin^2\br{\pi \frac{p - l_a}{h}} & p \in I_a, \\
    0 & p \notin I_a.
\end{cases}
\]
For this choice, there is an absolute constant \(C_b>0\), depending only on the chosen bump shape, such that \(\|b_a'\|_\infty\le C_b/h\). For the sine-squared bump above, \(C_b\) is a fixed numerical constant.

For a gap parameter \(\varepsilon>0\), define the planted CDF, for \(p<1\), by
\[
F_a(p)=F_{\base}(p)+\frac{\varepsilon}{1-p}b_a(p),
\]
and set \(F_a(1)=1\). On \(J\), the corresponding utility is
\[
u_a(p)=(1-p)F_a(p)=\frac14+\varepsilon b_a(p).
\]
Thus the planted cell contains the unique profitable region.
It is useful to define this profitable region directly from the utility gap. Let
\[
G_a:=\left\{p\in J:u_a^*-u_a(p)\le \frac{\varepsilon}{2}\right\}.
\]
Since \(u_a^*=1/4+\varepsilon\) and \(u_a(p)=1/4+\varepsilon b_a(p)\) on \(J\), this is
\[
G_a
= \left\{p\in J:b_a(p)\ge \frac12\right\}
= \brs{ l_a + \frac{h}{4}, l_a + \frac{3h}{4} }.
\]
Because \(b_a\) is unimodal, \(G_a\) is an interval contained in \(I_a\), and \(|G_a|\asymp h\). Every bid outside \(G_a\) incurs regret at least \(\varepsilon/2\):
\[
u_a^*-u_a(p)\ge \frac{\varepsilon}{2}
\qquad
\text{for every }p\notin G_a.
\]
Moreover, \(u_a\) is a flat plateau plus one unimodal bump, so its upper level sets are intervals.

\begin{figure}[t]
        \centering
        \includegraphics[width=\textwidth, page=2]{NeurIPS/plots.pdf}
        \caption{Planted buyer CDF $F_a$ and shilled CDF $F_O = F_a F_S$ with \(N = 5\) and \(a = 2\) in the interval \(J = [1/3, 1/2]\). The baseline distribution is unchanged outside the hard interval \(J\), while inside the planted cell \(I_a\subset J\) a localized bump raises the utility of bids in that cell. The perturbation vanishes at the endpoints of \(I_a\), so the planted CDF matches the baseline on neighboring cells. After \(J\), the CDF is flat, with the remaining probability mass placed as a common atom at \(1\). The shilled CDF is a scaled down version of the planted CDF, which further hides the bump.}
    \label{fig:lower-bound-cdf}
\end{figure}

We also need \(F_a\) to be a valid CDF. On the support of \(b_a\),
\[
\frac{d}{dp}\left(\frac{\varepsilon}{1-p}b_a(p)\right)
=
\frac{\varepsilon}{1-p}b_a'(p)
+
\frac{\varepsilon}{(1-p)^2}b_a(p).
\]
Since \(p\in J=[1/3,1/2]\), we have \((1-p)^{-1}\le 2\) and \((1-p)^{-2}\le 4\). Hence, using \(h\le 1\),
\[
\left|
\frac{d}{dp}\left(\frac{\varepsilon}{1-p}b_a(p)\right)
\right|
\le
2(C_b+2)\frac{\varepsilon}{h}.
\]
On \(J\), the baseline derivative satisfies
\[
F_{\base}'(p)=\frac{1}{4(1-p)^2}\ge \frac{9}{16}.
\]
Therefore, if
\[
\varepsilon\le \frac{9}{64(C_b+2)}h,
\]
then \(F_a'\ge0\) on \(J\). Outside \(J\), the perturbation is zero and \(F_a=F_{\base}\). Also, \(F_{\base}\le1/2\) on \([0,1)\), and the perturbation has size at most \(2\varepsilon\), so taking \(\varepsilon\le1/8\) ensures \(F_a\le3/4\) on \([0,1)\). Thus \(F_a\) is a valid CDF.

We now define the shill distribution. Put mass \(\gamma\) on a low interval below \(J\), say inside \([0,1/8]\), and put the remaining mass \(1-\gamma\) at the point \(1\). Equivalently,
\[
S=
\begin{cases}
S^{\mathrm{low}}, & \text{with probability }\gamma,\\
1, & \text{with probability }1-\gamma,
\end{cases}
\]
where \(S^{\mathrm{low}}\) is supported below \(J\). Then, for every bid \(q\in J\),
\[
F_S(q)=\gamma.
\]
The low-shill branch is the informative branch: when it occurs, the losing-side report preserves information about the buyer bid in the hard interval. On the high-shill branch, the losing-side report is always \(1\), so it carries no buyer-side information beyond the win-loss event. See \cref{fig:lower-bound-cdf}.

Fix an arbitrary policy \(\rho\). For each planted cell \(a\in[N]\), let \(P_a^\rho\) denote the law of the full transcript generated by policy \(\rho\) when the planted bump is in cell \(a\). Let \(P_\base^\rho\) denote the corresponding transcript law under the flat baseline distribution \(F_{\base}\). We compare each planted law to the baseline law in the reverse direction. By the chain rule for KL divergence over the adaptive transcript,
\[
\mathrm{KL}\!\left(P_\base^\rho\,\middle\|\,P_a^\rho\right)
=
\sum_{t=1}^T
\mathbb E_\base^\rho\!\left[
\mathrm{KL}\!\left(
Q_{0,p_t}
\,\middle\|\,
Q_{a,p_t}
\right)
\right],
\]
where \(p_t=\rho_t(H_{t-1})\), and \(Q_{a,p}\) denotes the one-round feedback law at bid \(p\) under instance \(a\). The action \(p_t\) itself contributes no KL term because it is chosen by the same policy as a function of the observed history. The important point is that the expectation is under \(P_\base^\rho\), so the distribution of \(p_t\) does not depend on the planted cell \(a\).

The one-round feedback carries information through two channels. The first is the local win-loss observation at the played bid. The second is the additional losing-side report on the low-shill branch. Revealing the low/high branch can only increase KL, and the branch probability is the same under all instances. Since the high branch carries no additional buyer-side information, for every \(a\) and every bid \(p\),
\[
\mathrm{KL}\!\left(
Q_{0,p}
\,\middle\|\,
Q_{a,p}
\right)
\le
\mathrm{kl}\!\left(
\operatorname{Bern}(F_{\base}(p))
\,\middle\|\,
\operatorname{Bern}(F_a(p))
\right)
+
\gamma\,
\mathrm{KL}\!\left(
\Lambda^{\mathrm{low}}_{0,p}
\,\middle\|\,
\Lambda^{\mathrm{low}}_{a,p}
\right),
\]
where \(\Lambda^{\mathrm{low}}_{a,p}\) is the law, under instance \(a\), of the one-round low-shill feedback at bid \(p\), including the null symbol \(\bot\) when no losing-side report is observed. We now bound these two terms.

\begin{itemize}
    \item  For the local term, a direct win-loss observation at bid \(p\) is Bernoulli with mean \(F_a(p)\). Since \(F_{\base}(p)\in[3/8,1/2]\) on \(J\), and since \(F_a(p)\in[3/8,3/4]\) under the choice \(\varepsilon\le1/8\), the standard chi-square comparison for Bernoulli laws gives
    \[
    \mathrm{kl}\!\left(
    \operatorname{Bern}(F_{\base}(p))
    \,\middle\|\,
    \operatorname{Bern}(F_a(p))
    \right)
    \le
    \frac{16}{3}\bigl(F_a(p)-F_{\base}(p)\bigr)^2.
    \]
    The perturbation \(F_a-F_{\base}\) is supported only on \(I_a\), and its size is at most \(2\varepsilon\). For every fixed bid \(p\in J\), at most one planted cell contains \(p\). Since \(h\le |J|/4\), we have \(1/N\le 2h/|J|=12h\). Therefore, for every fixed \(p\),
    \[
    \frac1N\sum_{a=1}^N
    \mathrm{kl}\!\left(
    \operatorname{Bern}(F_{\base}(p))
    \,\middle\|\,
    \operatorname{Bern}(F_a(p))
    \right)
    \le
    \frac{16}{3}\cdot(2\varepsilon)^2\cdot 12h
    =
    256h\varepsilon^2.
    \]
    Averaging over the baseline history and summing over \(T\) rounds gives the local contribution
    \[
    256T h\varepsilon^2.
    \]

    \item We now bound the low-shill contribution. Fix a bid \(p\), and let \(S^{\mathrm{low}}\) be distributed as the shill bid conditional on the low branch. For each planted cell \(a\), let \(B_a\sim F_a\), and let \(B_\base\sim F_{\base}\). Define
    \[
    Y^{\mathrm{low}}_{a,p}
    :=
    \begin{cases}
    \bot, & B_a\le p,\\
    \max\{B_a,S^{\mathrm{low}}\}, & B_a>p.
    \end{cases}
    \]
    Let \(\Lambda^{\mathrm{low}}_{a,p}\) be the law of \(Y^{\mathrm{low}}_{a,p}\), and define \(\Lambda^{\mathrm{low}}_{0,p}\) analogously with \(B_\base\sim F_{\base}\). Since \(S^{\mathrm{low}}\) has the same law under \(F_a\) and \(F_{\base}\), and since the map from \((B,S^{\mathrm{low}})\) to \(Y^{\mathrm{low}}_{a,p}\) is common to all instances, data processing gives
    \[
    \mathrm{KL}\!\left(
    \Lambda^{\mathrm{low}}_{0,p}
    \,\middle\|\,
    \Lambda^{\mathrm{low}}_{a,p}
    \right)
    \le
    \mathrm{KL}\!\left(F_{\base}\,\middle\|\,F_a\right).
    \]
    It remains to control the buyer-side divergence. Let \(f_a\) and \(F_{\base}\) be the densities of the continuous parts of \(F_a\) and \(F_{\base}\). The two distributions agree outside \(I_a\), up to the common atom at \(1\). On \(I_a\),
    \[
    f_a(x)-F_{\base}(x)
    =
    \varepsilon
    \left(
    \frac{b_a'(x)}{1-x}
    +
    \frac{b_a(x)}{(1-x)^2}
    \right),
    \]
    and hence, since \(x\in J\) and \(h\le1\),
    \[
    |f_a(x)-F_{\base}(x)|
    \le
    2(C_b+2)\frac{\varepsilon}{h}.
    \]
    Moreover, under \(\varepsilon\le 9h/(64(C_b+2))\), the previous monotonicity calculation gives \(f_a(x)\ge F_{\base}(x)/2\ge 9/32\) on \(J\). Since the perturbation vanishes at the endpoints of \(I_a\), \(\int_{I_a}(f_a-F_{\base})=0\). Using \(\log u\le u-1\), we obtain
    \[
    \begin{aligned}
    \mathrm{KL}(F_{\base}\|F_a)
    &=
    \int_{I_a} F_{\base}(x)\log\frac{F_{\base}(x)}{f_a(x)}\,dx \\
    &\le
    \int_{I_a} F_{\base}(x)\left(\frac{F_{\base}(x)}{f_a(x)}-1\right)\,dx \\
    &=
    -\int_{I_a}(f_a(x)-F_{\base}(x))\,dx
    +
    \int_{I_a}
    \frac{(f_a(x)-F_{\base}(x))^2}{f_a(x)}
    \,dx \\
    &=
    \int_{I_a}
    \frac{(f_a(x)-F_{\base}(x))^2}{f_a(x)}
    \,dx \\
    &\le
    \frac{128(C_b+2)^2}{9}\frac{\varepsilon^2}{h}.
    \end{aligned}
    \]
    Therefore the low-shill contribution over \(T\) rounds is at most
    \[
    \frac{128(C_b+2)^2}{9}T\gamma\frac{\varepsilon^2}{h}.
    \]
\end{itemize}
Combining the two information channels,
\[
\frac1N\sum_{a=1}^N
\mathrm{KL}\!\left(P_\base^\rho\,\middle\|\,P_a^\rho\right)
\le
K T\left(
h\varepsilon^2+\gamma\frac{\varepsilon^2}{h}
\right),
\qquad
K:=256+\frac{128(C_b+2)^2}{9}.
\]

We now convert this reference-KL bound into a testing statement. Let \(\phi(Z)\in[N]\) be any estimator of the planted cell, and define \(E_a:=\{\phi(Z)=a\}\). Since the events \(E_a\) are disjoint and exhaustive under the baseline law,
\[
\frac1N\sum_{a=1}^N P_\base^\rho(E_a)=\frac1N.
\]
By Pinsker's inequality, for every \(a\),
\[
P_a^\rho(E_a)
\le
P_\base^\rho(E_a)
+
\sqrt{\frac12
\mathrm{KL}\!\left(P_\base^\rho\,\middle\|\,P_a^\rho\right)}.
\]
Averaging over \(a\) and applying Jensen's inequality gives
\[
\frac1N\sum_{a=1}^N P_a^\rho(E_a)
\le
\frac1N
+
\sqrt{
\frac{1}{2N}
\sum_{a=1}^N
\mathrm{KL}\!\left(P_\base^\rho\,\middle\|\,P_a^\rho\right)
}.
\]
Thus, if
\[
KT\left(
h\varepsilon^2+\gamma\frac{\varepsilon^2}{h}
\right)
\le
\frac18
\qquad\text{and}\qquad
N\ge4,
\]
then
\[
\frac1N\sum_{a=1}^N P_a^\rho(\phi(Z)=a)
\le
\frac14+\frac14
=
\frac12.
\]
Equivalently,
\[
\frac1N\sum_{a=1}^N P_a^\rho(\phi(Z)\ne a)
\ge
\frac12.
\]

We now connect identification to regret. For each cell \(b\), define
\[
M_b(Z)
:=
\sum_{t=1}^T \mathbf 1\{p_t\in G_b\},
\]
the number of times the learner bids in the good region of cell \(b\). Define the best possible estimator
\[
\phi_\rho(Z)\in \argmax_{b\in[N]}M_b(Z),
\]
breaking ties arbitrarily. Under instance \(a\), every bid outside \(G_a\) loses at least \(\varepsilon/2\). Hence
\[
R_T
\ge
\frac{\varepsilon}{2}
\sum_{t=1}^T \mathbf 1\{p_t\notin G_a\}
=
\frac{\varepsilon}{2}\bigl(T-M_a(Z)\bigr).
\]
If \(\phi_\rho(Z)\ne a\), then some other good region receives at least as many bids as \(G_a\). Since the good regions are disjoint,
\[
M_a(Z)\le T-M_a(Z),
\]
and therefore \(T-M_a(Z)\ge T/2\). Thus
\[
R_T
\ge
\frac{\varepsilon T}{4}
\mathbf 1\{\phi_\rho(Z)\ne a\}.
\]
Taking expectation under \(P_a^\rho\), then averaging over \(a\), gives
\[
\frac1N\sum_{a=1}^N
\mathbb E_a^\rho[R_T]
\ge
\frac{\varepsilon T}{4}
\cdot
\frac1N\sum_{a=1}^N
P_a^\rho(\phi_\rho(Z)\ne a)
\ge
\frac{\varepsilon T}{8}.
\]
Therefore there exists a planted cell \(a\) such that \(\mathbb E_a^\rho[R_T]\ge \varepsilon T/8\), and since the policy \(\rho\) was arbitrary,
\[
\inf_\rho\sup_a \mathbb E_a^\rho[R_T]
\ge
\frac{\varepsilon T}{8}.
\]

It remains to choose \(h\) and \(\varepsilon\). Define
\[
c_\varepsilon
:=
\min\left\{
\frac18,\,
\frac{9}{1536\sqrt{24}(C_b+2)},\,
\frac{1}{\sqrt{200K}}
\right\}.
\]
This constant is fixed once and for all.

First suppose \(\gamma\le T^{-2/3}\). Choose
\[
h=\frac{1}{24}T^{-1/3},
\qquad
\varepsilon=c_\varepsilon T^{-1/3}.
\]
Then \(h\le1/24\), so \(N\ge4\). Also \(\varepsilon/h=24c_\varepsilon\le 9/(64(C_b+2))\), so the CDF monotonicity condition holds. Moreover,
\[
KT\left(
h\varepsilon^2
+
\gamma\frac{\varepsilon^2}{h}
\right)
\le
Kc_\varepsilon^2\left(\frac1{24}+24\gamma T^{2/3}\right)
\le
25Kc_\varepsilon^2
\le
\frac18.
\]
Thus the reference-KL testing condition holds, and the regret lower bound is
\[
\frac{\varepsilon T}{8}
=
\frac{c_\varepsilon}{8}T^{2/3}.
\]

Now suppose \(\gamma>T^{-2/3}\). Choose
\[
h=\min\{\sqrt\gamma, |J|/4\}
=
\min\{\sqrt\gamma,1/24\},
\qquad
\varepsilon=c_\varepsilon T^{-1/2}\gamma^{-1/4}.
\]
Then \(h\le1/24\), so \(N\ge4\). If \(h=\sqrt\gamma\), then \(\varepsilon/h\le c_\varepsilon\). If \(h=1/24\), then \(\gamma\ge1/576\), and hence \(\varepsilon/h\le24\sqrt{24}\,c_\varepsilon\). In both cases, the definition of \(c_\varepsilon\) ensures \(\varepsilon/h\le9/(64(C_b+2))\), so the CDF monotonicity condition holds. Also,
\[
T\left(
h\varepsilon^2
+
\gamma\frac{\varepsilon^2}{h}
\right)
=
c_\varepsilon^2\left(
\frac{h}{\sqrt\gamma}
+
\frac{\sqrt\gamma}{h}
\right).
\]
If \(h=\sqrt\gamma\), the term in parentheses is \(2\). If \(h=1/24\), then \(h/\sqrt\gamma\le1\) and \(\sqrt\gamma/h\le24\), so the term in parentheses is at most \(25\). Hence
\[
KT\left(
h\varepsilon^2
+
\gamma\frac{\varepsilon^2}{h}
\right)
\le
25Kc_\varepsilon^2
\le
\frac18.
\]
Thus the reference-KL testing condition holds, and the regret lower bound is
\[
\frac{\varepsilon T}{8}
=
\frac{c_\varepsilon}{8}\sqrt T\,\gamma^{-1/4}.
\]
Combining the two regimes gives
\[
\inf_\rho\sup_\nu \mathbb E_{\nu,\rho} [R_T]
\ge
\frac{c_\varepsilon}{8}\min\left\{
T^{2/3},
\sqrt T\,\gamma^{-1/4}
\right\}. \qedhere
\]
\end{proof}

\end{document}